\newcommand{\E}{\mathbf{E}}
\newcommand{\argmin}{\mathop{\arg\min}}
\DeclareMathOperator{\trace}{Tr}
\newtheorem{theorem}{Theorem}
\newtheorem{lemma}{Lemma}
\newtheorem{proof}{Proof}
\newtheorem{assumption}{Assumption}
\newtheorem{remark}{Remark}
\let\@fnsymbol\@arabic
\title{Decentralised Learning with Random Features and Distributed Gradient Descent}
\author{Dominic Richards\footnote{Department of Statistics,   University of Oxford, 24-29 St Giles', Oxford, OX1 3LB}
  \qquad
  Patrick Rebeschini\footnotemark[1] 
  \qquad
  Lorenzo Rosasco\footnote{MaLGa Center, Universitá degli Studi di Genova, Genova, Italy}
  \footnote{Istituto Italiano di Tecnologia, Via Morego, 30, Genoa 16163, Italy} 
  \footnote{Massachusetts Institute of Technology, Cambridge, MA 02139, USA} \\
  \texttt{\{dominic.richards,patrick.rebeschini\}@stats.ox.ac.uk} \quad
  \texttt{lorenzo.rosasco@unige.it} 
}
\date{\today}
\begin{document}

\maketitle

\begin{abstract}
We investigate the generalisation performance of Distributed Gradient Descent with Implicit Regularisation and Random Features in the homogenous setting where a network of agents are given data sampled independently from the same unknown distribution. Along with reducing the memory footprint, Random Features are particularly convenient in this setting as they provide a common parameterisation across agents that allows to overcome previous difficulties in implementing Decentralised Kernel Regression. Under standard source and capacity assumptions, we establish high probability bounds on the predictive performance for each agent as a function of the step size, number of iterations, inverse spectral gap of the communication matrix and number of Random Features. By tuning these parameters, we obtain statistical rates that are minimax optimal with respect to the total number of samples in the network. The algorithm provides a linear improvement over single machine Gradient Descent in memory cost and, when agents hold enough data with respect to the network size and inverse spectral gap, a linear speed-up in computational runtime for any network topology. We present simulations that show how the number of Random Features, iterations and samples impact predictive performance.
\end{abstract}

\section{Introduction}

In supervised learning, an agent is given a collection of training data to fit a model that can predict the outcome of new data points. Due to the growing size of modern data sets and complexity of many machine learning models, a popular approach is to incrementally improve the model with respect to a loss function that measures the performance on the training data. The complexity and stability of the resulting model is then controlled implicitly by algorithmic parameters, such as, in the case of Gradient Descent, the step size and number of iterations. An appealing collection of models in this case are those associated to the Reproducing Kernel Hilbert Space (RKHS) for some positive definite kernel, as the resulting optimisation problem (originally over the space of functions) admits a tractable form through the Kernel Trick and Representer Theorem, see for instance \cite{scholkopf2001generalized}.

Given the growing size of data, privacy concerns as well as the  manner in which data is collected, distributed computation has become a requirement in many machine learning applications. Here training data is split across a number of agents which alternate between communicating model parameters to one another and performing computations on their local data. In centralised approaches (effective star topology), a single agent is typically responsible for collecting, processing and disseminating information to the agents. Meanwhile for many applications, including ad-hoc wireless and peer-to-peer networks, such centralised approaches are unfeasible. This motivates decentralised approaches where agents in a network only communicate locally within the network i.e. to neighbours at each iteration.

Many problems in decentralised multi-agent optimisation can be phrased as a form of consensus optimisation \cite{tsitsiklis1986distributed,tsitsiklis1984problems,johansson2007simple,nedic2009distributed,Nedic2009,johansson2009randomized,lobel2011distributed,matei2011performance,Boyd:2011:DOS:2185815.2185816,DAW12,shi2015extra,mokhtari2016dsa}. In this setting, a network of agents wish to minimise the average of functions held by individual agents, hence ``reaching consensus'' on the solution of the global problem. A standard approach is to augment the original optimisation problem to facilitate a decentralised algorithm. This typically introduces additional penalisation (or constraints) on the difference between neighbouring agents within the network, and yields a higher dimensional optimisation problem which decouples across the agents. This augmented problem can then often be solved using standard techniques whose updates can now be performed in a decentralised manner. While this approach is flexible and can be applied to many consensus optimisation problems, it often requires more complex algorithms which depend upon the tuning of additional hyper parameters, see for instance the Alternating Direction Method of Multiplers (ADMM) \cite{Boyd:2011:DOS:2185815.2185816}.

Many distributed machine learning problems, in particular those involving empirical risk minimisation, can been framed in the context of consensus optimisation. As discussed in \cite{bouboulis2017online,koppel2018decentralized}, for the case of Decentralised Kernel Regression it is not immediately clear how the objective ought to be augmented to facilitate both a decentralised algorithm and the Representer Theorem. Specifically, so the problem decouples across the network and agents have a common represention of the estimated function.  Indeed, while distributed kernel regression can be performed in the one-shot Divide and Conquer setting (Star Topology)   \cite{zhang2015divide,lin2017distributed,guo2017learning,mucke2018parallelizing,JMLR:v21:19-277} where there is a fusion center to combine the resulting estimators computed by each agent, in the decentralised setting there is no fusion center and agents must communicate for multiple rounds. A number of works have aimed to tackle this challenge \cite{forero2010consensus,mitra2014diffusion,gao2015diffusion,chouvardas2016diffusion,bouboulis2017online,koppel2018decentralized}, although these methods often include approximations whose impact on statistical performance is not clear\footnote{Additional details on some of these works have been included within Remark \ref{remark:PreviousLit} in the Appendix}. Most relevant to our work is \cite{bouboulis2017online} where Distributed Gradient Descent with Random Fourier Features is investigated in the online setting. In this case regret bounds are proven, but it is not clear how the number of Random Fourier Features or network topology impacts predictive performance in conjunction with non-parametric statistical assumptions\footnote{
We note the concurrent work \cite{xu2020coke} which also investigates Random Fourier Features for decentralised non-parametric learning. The differences from our work have been highlighted in Remark \ref{remark:COKE} in the Appendix.
}. 
For more details on the challenges of the developing a Decentralised Kernel Regression algorithm see Section \ref{sec:DecentralisedKR}.

\subsection{Contributions}
In this work we give statistical guarantees for a simple and practical Decentralised Kernel Regression algorithm. Specifically, we study the learning performance (Generalisation Error) of full-batch Distributed Gradient Descent \cite{nedic2009distributed} with implicit regularisation \cite{richards2018graph,richards2019optimal} and Random Features \cite{rahimi2008random,rudi2017generalization}. 
Random Features can be viewed as a form of non-linear sketching or shallow neural networks with random initialisations, and have be utilised to facilitate the large scale application of kernel methods by overcoming the memory bottle-neck.  In our case, they both decrease the memory cost and yield a simple Decentralised Kernel Regression algorithm. 
While previous approaches have viewed Decentralised Kernel Regression with explicit regularisation as an instance of consensus optimisation, where the speed-up in runtime depends on the network topology \cite{DAW12,scaman2017optimal}. We build upon \cite{richards2019optimal} and directly study the Generalisation Error of Distributed Gradient Descent with implicit regularisation. 
This allows linear speed-ups in runtime for \emph{any} network topology to be achieved by leveraging the statistical concentration of quantities held by agents. Specifically, our analysis demonstrates how the number of Random Features, network topology, step size and number of iterations impact Generalisation Error, and thus, can be tuned to achieve minimax optimal statistical rates with respect to all of the samples within the network \cite{caponnetto2007optimal}. When agents have sufficiently many samples with respect to the network size and topology, and the number of Random Features equal the number required by single machine Gradient Descent, a linear speed-up in runtime and linear decrease memory useage is achieved over single machine Gradient Descent. Previous guarantees given in consensus optimisation require the number of iterations to scale with the inverse spectral gap  of the network \cite{DAW12,scaman2017optimal}, and thus, a linear speed-up in runtime is limited to well connected topologies. We now provide a summary of our contributions.

\begin{itemize}
    \item \textbf{Decentralised Kernel Regression Algorithm}: By leveraging Random Features we develop a simple, practical and theoretically justified algorithm for Decentralised Kernel Regression. It achieves a linear reduction in memory cost and, given sufficiently many samples, a linear speed-up in runtime for any graph topology (Theorem \ref{thm:WorstCase}, \ref{thm:Refined}). This required extending the theory of Random Features to the decentralised setting (Section \ref{sec:ProofSketch}).
    \item \textbf{Refined Statistical Assumptions}: 
    Considering the attainable case in which the minimum error over the hypothesis class is achieved, we give guarantees that hold over a wider range of complexity and capacity assumptions. 
    This is achieved through a refined analysis of the Residual Network Error term (Section \ref{sec:ProofSketch:NetworkResidual}).
    \item \textbf{Bounds in High Probability}: All guarantees hold in high probability, where previous results \cite{richards2019optimal} for the decentralised setting only held in expectation. This is achieved through refined analysis of the Population Network Error  (Section \ref{sec:ProofSketch:NetworkPop}).
\end{itemize}

This work is structured as follows. Section \ref{sec:Setup} introduces the notation and Random Features. Section \ref{sec:MainResults} presents the main theoretical results. Section \ref{sec:ProofSketch} provides the error decomposition and a sketch proof of the refined analysis. Section \ref{sec:Experiments} presents simulation results. Section \ref{sec:Conclusion} gives the conclusion.

\section{Setup}
\label{sec:Setup}
This section introduces the setting. 
Section \ref{sec:DecentralisedKR} introduces Decentralised Kernel Regression and the challenges in developing a decentralised algorithm.
Section \ref{sec:Setup:FeatureMapsKernelMethods} introduces the link between Random Features and kernel methods. 
Section \ref{sec:Setup:DistGDRF} introduces Distributed Gradient Descent with Random Features.

\subsection{Challenges of Decentralised Kernel Regression}  
\label{sec:DecentralisedKR}
We begin with the single machine case then go on to the decentralised case.
\paragraph{Single Machine}Consider a standard supervised learning problem with squared loss. Given a probability distribution $\rho$ over $X \times \mathbb{R}$, we wish to solve
\begin{align}
\label{equ:Objective}
    \min_{f} \mathcal{E}(f), \quad 
    \mathcal{E}(f) = \int (f(x) - y)^2 d \rho(x,y),
\end{align}
given a collection of independently and identically distributed (i.i.d.) samples drawn from $\rho$, here denoted $(x_i,y_i)_{i=1}^{m} \in (X \times \mathbb{R}^{m})$. Kernel methods are non-parametric approaches defined by a kernel $k : X \times X \rightarrow \mathbb{R}$ which is symmetric and positive definite. The space of functions considered will be the Reproducing Kernel Hilbert Space associated to the kernel $k$, that is, the function space $\mathcal{H}$ defined as the completion of the linear span $\{K(x,\cdot): x \in X\}$ with respect to the inner product $\langle K(x,\cdot),K(x^{\prime},\cdot)\rangle_{\mathcal{H}} := K(x,x^{\prime})$ \cite{aronszajn1950theory}. When considering functions that minimise the empirical loss with explicit regularisation $\lambda \geq 0 $
\begin{align}
    \min_{f \in \mathcal{H}}
    \Big\{ 
    \frac{1}{m} \sum_{i=1}^{m}(f(x_i)- y_i)^2 
    + 
    \lambda \|f\|_{\mathcal{H}}^2
    \Big\}
\end{align}
we can appeal to the Representer Theorem \cite{scholkopf2001generalized}, and consider functions represented in terms of the data points, namely
$
    \widehat{f}(x) = \sum_{i=1}^{m} \alpha_i k(x_i,x)
$
where $\alpha = (\alpha_1,\dots,\alpha_m) \in \mathbb{R}^{m}$ are a collection of weights.
The weights are then often written in terms of the gram-matrix $K \in \mathbb{R}^{m \times m} $ whose $i,j$th entry is  $K_{ij} = k(x_i,x_j)$.

\paragraph{Decentralised} Consider a connected network of $n$ agents $G = (V,E)$ $|V| = n$, joined by edges $E \subseteq V \times V$, that wish to solve \eqref{equ:Objective}.  Each agent $v \in V$ has a collection of $m$ i.i.d. training points $(x_{i,v},y_{i,v})_{i=1}^{m} \in (X \times \mathbb{R})^{m}$ sampled from $\rho$. Following standard approaches in consensus optimisation we arrive at the optimisation problem 
\begin{align*}
    \min_{f_v \in \mathcal{H}, v \in V}
    \Big\{
    & \frac{1}{nm} \sum_{v \in V} \sum_{i=1}^{m} (f_v(x_{i,v}) - y_{i,v})^2 
    + \lambda \|f_v\|_{\mathcal{H} }^2
    \Big\} 
    \\
    & \quad 
    f_v = f_w \quad (v,w) \in E,
\end{align*}
where a local function for each agent $f_v$ is only evaluated at the data held by that agent $(x_{i,v},y_{i,v})_{i=1}^{m}$, and a constraint ensures agents that share an edge are equal. This constrained problem is then often solved by considering the dual problem \cite{scaman2017optimal} or introducing penalisation \cite{jakovetic2015linear}. 
In either case, the objective decouples so that given $\{f_v\}_{v \in V}$ it can be evaluated and optimised in a decentralised manner.  As discussed by \cite{bouboulis2017online,koppel2018decentralized}, it is not immediately clear whether a representation for $\{f_v\}_{v \in V}$ exists in this case that respects the gram-matrices held by each agent. Recall, in the decentralised setting, only agent $v$ can access the data $( x_{i,v} ,y_{i,v} )_{i=1}^{m}$ and the kernel evaluated at their data points $k(x_{i,v},x_{j,v})$ for $i,j =1,\dots,m$.

\subsection{Feature Maps and Kernel Methods}
\label{sec:Setup:FeatureMapsKernelMethods}
Consider functions parameterised by $\omega \in \mathbb{R}^{M}$ and written in the following form
\begin{align*}
    f(x) =  \langle \omega, \phi_{M}(x) \rangle, 
    \quad \forall x \in X,
\end{align*}
where $\phi_{M}: X \rightarrow \mathbb{R}^{M}, M \in \mathbb{N}$, denotes a family of finite dimensional feature maps that are identical and known across all of the agents. Feature maps in our case take a data point $x$ to a (often higher dimensional) space where Euclidean inner products approximate the kernel. That is, informally, 
$
    k(x,x^{\prime}) \approx \langle \phi_{M}(x),\phi_{M}(x^{\prime}) \rangle
$.
One now classical example is Random Fourier Features \cite{rahimi2008random} which approximate the Gaussian Kernel. 

\paragraph{Random Fourier Features} 
If $k(x,x^{\prime}) = G(x-x^{\prime})$, where $G(z) = e^{-\frac{1}{2\sigma^2} \|z\|^2}$, for $\sigma > 0$ then we have 
\begin{align*}
    & G(x - x^{\prime}) 
     = 
    \frac{1}{2\pi Z} \int \int_0^{2\pi}  
    \sqrt{2} \cos(\omega^{\top}x +b) \sqrt{2} \cos( \omega^{\top}x^{\prime} + b) e^{-\frac{\sigma^2}{2} \|\omega\|^2} d \omega db
\end{align*}
where $Z$ is a normalizing factor. Then, for the Gaussian kernel, $\phi_{M}(x) = M^{-1/2}(\sqrt{2} \cos(\omega_1^{\top}x + b_1),\dots,\sqrt{2} \cos(\omega_M^{\top} x + b_{M}))$, where $\omega_1,\dots,\omega_M$ and $b_1,\dots,b_M$ sampled independently from $\frac{1}{Z} e^{-\sigma^2 \|\omega\|^2/2}$ and uniformly in $[0,2\pi]$, respectively. 

More generally, this motivates the strategy in which we assume the kernel $k$ can be expressed as  
\begin{align}
\label{equ:KernelIntegral}
    k(x,x^{\prime}) = \int \psi(x,\omega) \psi(x^{\prime},\omega)d\pi(\omega), 
    \quad 
    \forall x,x^{\prime} \in X,
\end{align}
where $(\Omega,\pi)$ is a probability space and $\psi:X \times \Omega \rightarrow \mathbb{R}$ \cite{reed2012methods}. Random Features can then be seen as Monte Carlo approximations of the above integral.

\subsection{Distributed Gradient Descent and Random Features}
\label{sec:Setup:DistGDRF}
Since the functions are now linearly parameterised by $\omega \in \mathbb{R}^{M} $, agents can consider the simple primal method Distributed Gradient Descent \cite{nedic2009distributed}. Initialised at $\widehat{\omega}_{1,v} = 0;$ for $v \in V$, agents update their iterates for $t \geq 1$
\begin{align}
\label{alg:DistributedGD}
    & \widehat{\omega}_{t+1,v} = \sum_{w \in V}
    P_{vw}  \Big( \widehat{\omega}_{t,w}
    - 
    \frac{\eta}{m} \sum_{i=1}^{m} \big( \langle \widehat{\omega}_{t,w},\phi_{M}(x_{i,w}) \rangle - y_{i,w}\big) \phi_{M}(x_{i,w})
    \Big),\nonumber
\end{align}
where $P \in \mathbb{R}^{n \times n}$ is a doubly stochastic matrix supported on the network i.e. $P_{ij} \not= 0 $ only if $(i,j) \in E$, and $\eta$ is a fixed stepsize. The above iterates are a combination of two steps. Each agent performing a local Gradient Descent step with respect to their own data i.e. $\widehat{\omega}_{t,w}- \frac{\eta}{m} \sum_{i=1}^{m} \big( \langle \widehat{\omega}_{t,w},\phi_{M}(x_{i,w}) \rangle - y_{i,w}\big) \phi_{M}(x_{i,w})$ for agent $w \in V$. And a communication step where agents average with their neighbours as encoded by the summation $\sum_{w \in V}P_{vw} a_{w}$, where $a_w$ is the quantity held by agent $w \in V$. The performance of Distributed Gradient Descent naturally depends on the connectivity of the network. In our case it is encoded by the second largest eigenvalue of $P$ in absolute value, denoted $\sigma_2 \in [0,1)$. In particular, it arises through the inverse spectral gap  $1/(1-\sigma_2)$, which is known to scale with the network size for particular topologies, that is $O\big( 1/(1-\sigma_2) \big)  = O(n^{\beta})$ where $\beta =2$ for a cycle, $\beta = 1$ for a grid and $\beta  = 0$ for an expander, see for instance \cite{DAW12}.  Naturally, more ``connected'' topologies have larger spectral gaps, and thus, smaller inverses. 
\paragraph{Notation} For $a,b \in \mathbb{R}$ we denote $a \vee b$ as the maximum between $a$ and $b$  and $a \wedge b$ the minimum. We say $a \simeq b$ if there exists a constant $c$ independent of $n,m,M,(1-\sigma_2)^{-1}$ up-to logarithmic factors such that $a = c b$. Similarly we write $a \lesssim b$ if $a \leq b c$  and $a \gtrsim b$ if $a \geq c b$.

\section{Main Results}
\label{sec:MainResults}
This section presents the main results of this work. Section \ref{sec:MainResults:WorstCase} provides the results under basic assumptions. Section \ref{sec:MainResults:Refined} provides the results under more refined assumptions.

\subsection{Basic Result}
\label{sec:MainResults:WorstCase}
We begin by introducing the following assumption related to the feature map. 
\begin{assumption}
\label{ass:RandomFeatures}
Let $(\Omega,\pi)$ be a probability space and define the feature map $\psi: X \times \Omega \rightarrow \mathbb{R}$  for all $x \in X$ such that \eqref{equ:KernelIntegral} holds. Define the family of feature maps for $M > 0 $
\begin{align*}
    \phi_{M}(x) := \frac{1}{\sqrt{M}}(\psi(x,\omega_1),\dots,\psi(x,\omega_{M}))
\end{align*}
where $(\omega_j)_{j=1}^{M} \in \Omega$ are sampled independently from $\pi$. 
\end{assumption}
The above assumption states that the feature map is made of $M$ independent features $\psi(x,\omega_{i})$ for $i=1,\dots,M$.  This is satisfied for a wide range of kernels, see for instance Appendix E of \cite{rudi2017generalization}. The next assumption introduces some regularity to the feature maps. 
\begin{assumption}
\label{ass:FeatureRegularity}
The function $\psi$ is continuous and there exists $\kappa \geq 1$ such that $|\psi(x,\omega)| \leq \kappa$ for any $x \in X, \omega \in \Omega$.
\end{assumption}
This implies that the kernel considered is bounded $|k(x,x^{\prime})| \leq \kappa^2$ which is a common assumption in statistical learning theory \cite{cucker2007learning,steinwart2008support}. The following assumption is related to the optimal predictor. 
\begin{assumption}
\label{ass:RKHS}
Let $\mathcal{H}$ be the RKHS with kernel $k$. Suppose there exists $f_{\mathcal{H}} \in \mathcal{H}$ such that 
$
    \mathcal{E}(f_{\mathcal{H}}) = \inf_{f \in \mathcal{H}} \mathcal{E}(f)
$.
\end{assumption}
It states that the optimal predictor is within the interior of $\mathcal{H}$. Moving beyond this assumption requires considering the non-attainable case, see for instance \cite{dieuleveut2016nonparametric},  which is left to future work. Finally, the following assumption is on the response moments. 
\begin{assumption}
\label{ass:moment}
For any $x \in X$ 
\begin{align*}
    \int y^{2\ell} d \rho(y|x) \leq \ell ! B^{\ell} p,
    \quad 
    \forall \ell \in \mathbb{N}
\end{align*}
for constants $B \in (0,\infty)$ and $p \in (1,\infty)$, $\rho_{X}-$almost surely. 
\end{assumption}
This assumption is satisfied if the response is bounded or generated from a model with independent zero mean Gaussian noise. 

Given an estimator $\widehat{f}$, its excess risk is defined as $\mathcal{E}(\widehat{f}) - \mathcal{E}(f_{\mathcal{H}})$. Let the estimator held by agent $v \in V$ be denoted by $\widehat{f}_{t,v} = \langle \widehat{\omega}_{t,v},\phi_{M}(\cdot) \rangle$, where $\widehat{\omega}_{t,v}$ is the output of Distributed Gradient Descent \eqref{alg:DistributedGD} for agent $v$. Given this basic setup, we state the prediction bound prescribed by our theory. 
\begin{theorem}[Basic Case]
\label{thm:WorstCase}
Let $n,m,M \in \mathbb{N}_{+}$, $\delta \in (0,1)$, $t \geq 4$, $\eta \kappa^2 \leq 1$ and $\eta \simeq 1$. 
Under assumptions \ref{ass:RandomFeatures} to \ref{ass:moment}, the following holds with high probability for any $v \in V$
\begin{align*}
    \mathcal{E}(\widehat{f}_{t+1,v}) - \mathcal{E}(f_{\mathcal{H}}) 
    \lesssim \frac{1}{\sqrt{nm}}
\end{align*}
\vspace{-0.25cm}
when
\begin{align}
     m  \gtrsim 
     \frac{n^{3}}{(1-\sigma_2)^{4}} ,
     \, \text{  } \,
     M  \simeq \sqrt{nm},
     \, \text{ and }\,
     t = \sqrt{nm}.
\end{align}
\end{theorem}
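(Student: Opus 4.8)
The strategy is a bias--variance decomposition adapted to the decentralised setting, comparing each agent's iterate $\widehat{\omega}_{t+1,v}$ against an idealised single-machine reference trained on all $nm$ samples. Writing the excess risk through the covariance operator $\Sigma = \E[\phi_{M}(x)\otimes\phi_{M}(x)]$ as $\mathcal{E}(\widehat{f}_{t+1,v}) - \mathcal{E}(f_{\mathcal{H}}) = \|\Sigma^{1/2}(\widehat{\omega}_{t+1,v} - \omega_*)\|^2$ up to a random-features approximation term (where $\omega_*$ encodes $f_{\mathcal{H}}$ in the feature space), I would first split into (i) a \emph{centralised reference error}, the excess risk of gradient descent run with the averaged gradient over all $nm$ points, and (ii) a \emph{network error}, the deviation $\|\Sigma^{1/2}(\widehat{\omega}_{t+1,v} - \overline{\omega}_{t+1})\|$ of agent $v$ from the network average $\overline{\omega}_{t+1} = n^{-1}\sum_w \widehat{\omega}_{t+1,w}$.

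For the centralised reference I would invoke the implicit-regularisation theory for gradient descent, identifying the effective regularisation level $\lambda \simeq 1/(\eta t)$. In the attainable case (Assumption \ref{ass:RKHS}, corresponding to source exponent $r = 1/2$) with the trivial capacity bound $\mathcal{N}(\lambda) \lesssim \kappa^2/\lambda$, the bias scales as $\lambda$ and the sample variance as $\mathcal{N}(\lambda)/(nm)$; balancing these at $\lambda \simeq 1/\sqrt{nm}$ yields the rate $1/\sqrt{nm}$ and fixes $t \simeq 1/(\eta\lambda) \simeq \sqrt{nm}$. The random-features approximation error is controlled, following \cite{rudi2017generalization}, by taking $M \simeq 1/\lambda \simeq \sqrt{nm}$, which guarantees the feature model is rich enough that replacing $\mathcal{H}$ by the random-features space does not degrade the rate. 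Assumption \ref{ass:moment} supplies the sub-exponential moment control for the Bernstein-type concentration in this step, and Assumption \ref{ass:FeatureRegularity} the required boundedness.

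For the network error I would exploit the contraction of $P$ on the orthogonal complement of the consensus direction, governed by $\sigma_2$: the averaged iterate $\overline{\omega}_t$ obeys a clean gradient-descent recursion on the pooled data, while the per-agent deviations contract at rate $\sigma_2$ per round and are driven only by the \emph{disagreement} of local gradients across agents, a statistical fluctuation that shrinks as agents hold more data. Following the split flagged in Sections \ref{sec:ProofSketch:NetworkResidual} and \ref{sec:ProofSketch:NetworkPop}, I would separate the network error into a Population Network Error (deviations against the population operator $\Sigma$) and a Residual Network Error (the gap between empirical per-agent operators and $\Sigma$). Each is bounded by a geometric sum of the form $\frac{\eta}{1-\sigma_2}\times(\text{per-agent concentration})$, and requiring these to be dominated by $1/\sqrt{nm}$ is precisely what produces the sample threshold $m \gtrsim n^3/(1-\sigma_2)^4$.

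The main obstacle is the high-probability control of the network error, in particular the Population Network Error, since earlier decentralised results held only in expectation. This needs Hilbert-space Bernstein inequalities applied uniformly across the $t \simeq \sqrt{nm}$ iterations while tracking the interaction between the growing iteration count, the step size $\eta \simeq 1$, and the factor $(1-\sigma_2)^{-1}$, so that the accumulated network error does not overwhelm the $1/\sqrt{nm}$ target as $t$ grows. Getting the dependence on $n$ and $(1-\sigma_2)^{-1}$ sharp enough to give the stated threshold rather than a weaker one, and refining the Residual Network Error to widen the admissible range of assumptions, is where the delicate operator-concentration work is concentrated.
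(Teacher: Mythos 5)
Your overall architecture — statistical error plus network error, effective regularisation $\lambda \simeq 1/(\eta t)$, $M \simeq 1/\lambda$, bias--variance balancing at $\lambda = (nm)^{-1/2}$, and a sample-size threshold forcing the network error below $(nm)^{-1/2}$ — matches the paper's, which in fact proves Theorem~\ref{thm:WorstCase} as a one-line specialisation of Theorem~\ref{thm:Refined} with $r=1/2$, $\gamma=1$. The statistical half of your plan is sound and is exactly the \cite{carratino2018learning}-style analysis the paper imports.

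The genuine gap is in your treatment of the network error. You assert that the network average $\overline{\omega}_{t+1} = n^{-1}\sum_{w}\widehat{\omega}_{t+1,w}$ ``obeys a clean gradient-descent recursion on the pooled data.'' It does not: double stochasticity of $P$ gives
$\overline{\omega}_{t+1} = \overline{\omega}_{t} - \frac{\eta}{n}\sum_{w}\big(\widehat{C}^{(w)}_{M}\widehat{\omega}_{t,w} - \widehat{S}^{(w)\top}_{M}\widehat{y}_{w}\big)$,
and since each local gradient is evaluated at $\widehat{\omega}_{t,w}$ rather than at $\overline{\omega}_{t}$, the recursion carries an extra coupling term $\frac{\eta}{n}\sum_{w}\widehat{C}^{(w)}_{M}(\widehat{\omega}_{t,w}-\overline{\omega}_{t})$ that feeds the consensus error back into the average. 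If you take the recursion as clean, your centralised reference error is not the single-machine rate and the decomposition silently drops a contribution of the same order as the Residual Network Error you are trying to control. The paper sidesteps this by never using $\overline{\omega}_{t}$ as the reference: it compares $\widehat{\omega}_{t+1,v}$ directly with the single-machine iterate $\widehat{v}_{t+1}$, centres both around the population sequence $\widetilde{v}_{t}$, and unravels the difference as a sum over paths $w_{t:k}$ weighted by $\Delta(w_{t:k}) = P_{vw_{t:k}} - n^{-(t-k+1)}$, with products of per-agent operators $\Pi(w_{t:k})$. The coupling you drop is precisely what generates the $\Pi^{\Delta}$ terms ($\mathbf{E}_2$, $\mathbf{E}_3$, $\mathbf{E}_5$ in Appendix~\ref{App:Network:ErrorDecomp}), and handling them — via the $t^{\star}$-splitting of each path into a well-mixed prefix and a short poorly-mixed suffix, so that the operator-concentration bound of Lemma~\ref{lem:OperatorNormBound:Delta1} is only ever applied to paths of length $t^{\star}\simeq 1/(1-\sigma_2)$ — is where the threshold $m \gtrsim n^{3}/(1-\sigma_2)^{4}$ actually comes from. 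Your plan names the right two sub-terms (Population and Residual) and the right obstacle (high-probability control), but the mechanism you propose for the average would not close, and the ``geometric sum times per-agent concentration'' heuristic does not by itself yield the exponent $4$ on $(1-\sigma_2)^{-1}$ without the path-length bookkeeping.
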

Theorem \ref{thm:WorstCase} demonstrates that Distributed Gradient Descent with Random Features achieves optimal statistical rates, in the minimax sense \cite{caponnetto2007optimal,blanchard2018optimal}, with respect to all $nm$ samples when three conditions are met. The first $m  \gtrsim  n^{3}/(1-\sigma_2)^{4}$ ensures that the network errors, due to agents communicating locally on the network, are sufficiently small from the phenomena of concentration. The second $M  \simeq \sqrt{nm} $ ensures that the agents have sufficiently many Random Features to control the kernel approximation. It aligns with the number required by single machine Gradient Descent with all $nm$ samples \cite{carratino2018learning}. Finally $t = \sqrt{nm}$ is the number of iterations required to trade off the bias and variance error terms. This is the number of iterations required by single machine Gradient Descent with all $nm$ samples, and thus, due to considering a distributed algorithm, gives a linear speed-up in runtime. We now discuss the runtime and space complexity of Distributed Gradient Descent with Random Features when the covariates take values in $\mathbb{R}^{D}$ for some $D > 0$. Remark \ref{remark:communication} in Appendix \ref{sec:Remarks} shows how, with linear features, Random Features can yield communication savings when $D> M$.

\paragraph{Pre-processing + Space Complexity} After a pre-processing step which costs $O(DMm) = O(Dm^{3/2} \sqrt{n})$, Distributed Gradient Descent has each agent store a $m \times M = m \times \sqrt{nm}$ matrix. Single machine Gradient Descent performs a $O(DM nm) = O(D(nm)^{3/2})$ pre-processing step and stores a $nm \times M = nm \times \sqrt{nm}$ matrix. Distributed Gradient Descent thus gives a linear order $n$ improvement in pre-processing time and memory cost. 

\paragraph{Time Complexity} Suppose one gradient computation costs 1 unit of time and communicating with neighbours costs $\tau$. Given sufficiently many samples  $m  \gtrsim n^{3}/(1-\sigma_2)^{4}$ then \emph{Single Machine Iterations} = \emph{Distributed Iterations} and the speed-up in runtime for Distributed Gradient Descent over single machine Gradient Descent is 
\begin{align*}
     \text{Speed-up} 
    &  := 
    \frac{\text{Single Machine Runtime}}{\text{Distributed Runtime}} 
    = 
    \frac{\text{Single Machine Iteration Time }}{\text{Distributed Iteration Time }} 
    \underbrace{ 
    \frac{\text{Single Machine Iters. }}{\text{Distributed Iters. }}
    }_{ = 1} \\
    & = 
    \frac{ nm }{m + \tau + M \text{Deg}(P) } \simeq n 
\end{align*} 
where the final equality holds when the communication delay and cost of aggregating the neighbours solutions is bounded $\tau + M \text{Deg}(P) \lesssim m$. This observation demonstrates a linear speed-up in runtime can be achieved for \emph{any} network topology. This is in contrast to results in decentralised consensus optimisation where the speed-up in runtime usually depends on the network topology, with a linear improvement only occurring for well connected topologies i.e. expander and complete, see for instance \cite{DAW12,scaman2017optimal}.

\subsection{Refined Result}
\label{sec:MainResults:Refined}
Let us introduce two standard statistical assumptions related to the underlying learning problem. With the marginal distribution on covariates $\rho_{X}(x) := \int_{\mathbb{R}} \rho(x,y) dy$ and the space of square integrable functions $L^2(X,\rho_{X}) = \{ f: X \rightarrow \mathbb{R} : \|f\|_{\rho}^{2} = \int |f|^2 d \rho_{X} < \infty \}$, let $L: L^2(X,\rho_X) \rightarrow L^2(X,\rho_{X})$ be the integral operator defined for $x \in X$ as
$
    Lf(x) = \int k(x,x^{\prime}) f(x^{\prime}) d \rho_{X}(x^{\prime}), \, 
    \forall f \in L^2(X,\rho_{X}) 
$. 
The above operator is symmetric and positive definite. The assumptions are then as follows. 
\begin{assumption}
\label{ass:SourceCap}
For any $\lambda > 0$, define the effective dimension as $\mathcal{N}(\lambda) := \trace\big( \big( L + \lambda I)^{-1} L \big)$, and assume there exists $Q > 0 $ and $\gamma \in [0,1]$ such that 
$
    \mathcal{N}(\lambda) \leq Q^2 \lambda^{-\gamma}
$.\\ 
Moreover, assume there exists  $1 \geq r \geq 1/2$ and $g \in L^2(X,\rho_{X})$ such that 
$
    f_{\mathcal{H}}(x) = (L^r g)(x)
$.
\end{assumption}
The above assumptions will allow more refined bounds on the Generalisation Error to be given. The quantity $\mathcal{N}(\lambda)$ is the effective dimension of the hypothesis space, and Assumption \ref{ass:SourceCap} holds for $\gamma > 0$ when the $i$th eigenvalue of $L$ is of the order $i^{-1/\gamma}$, for instance. Meanwhile, the second condition for $1 \geq r \geq 1/2$ determines which subspace the optimal predictor is in. Here larger $r$ indicates a smaller sub-space and a stronger condition. The refined result is then as follows.
\begin{theorem}[Refined]
\label{thm:Refined}
Let $n,m,M \in \mathbb{N}_{+}$, $\delta \in (0,1)$, $t \geq 2t^{\star} \geq 4$, $\eta \kappa^2 \leq 1$ and $\eta \simeq 1$. 
Under assumptions \ref{ass:RandomFeatures} to \ref{ass:SourceCap} with $r+\gamma > 1$, the following holds with high probability for any $v \in V$
\begin{align*}
    \mathcal{E}(\widehat{\omega}_{t+1,v}) - \mathcal{E}(f_{\mathcal{H}})
    \lesssim 
    (nm)^{\frac{-2r}{2r+\gamma}}
\end{align*}
when we let $t^{\star}  \simeq  1/(1 - \sigma_2)  $ and have 
\begin{align*}
    & 
    \underbrace{ 
    m  \gtrsim 
    \Big( 
     (t^{\star})^{\frac{(1+\gamma)(2r+\gamma)}{2(r+\gamma - 1) } } n^{\frac{r+1}{r+\gamma - 1}} 
    \Big)
    \vee
    \Big( 
    (t^{\star})^{2 \vee (2r+\gamma) } n^{\frac{2r}{\gamma}} 
    \Big)
    }_{\text{Sufficiently Many Samples}}
    \\
 & \underbrace{ 
    M  \simeq (nm)^{\frac{ 1 + \gamma(2r-1)}{2r+\gamma}} 
    }_{\text{Single Machine Random Features}}
    \quad  
    \underbrace{ t = (nm)^{\frac{1}{2r+\gamma}} }_{ \text{Single Machine Iterations} } 
\end{align*}
\end{theorem}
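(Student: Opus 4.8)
The plan is to reduce the decentralised problem to a centralised single-machine random-features problem plus a network (consensus) correction, and to control each piece separately. First I would recast the excess risk in operator form: with the random-features evaluation map $S_M : \mathbb{R}^M \to L^2(X,\rho_X)$ given by $(S_M \omega)(x) = \langle \omega, \phi_M(x)\rangle$, controlling $\mathcal{E}(\widehat{f}_{t+1,v}) - \mathcal{E}(f_{\mathcal{H}})$ reduces to bounding $\|S_M \widehat{\omega}_{t+1,v} - f_{\mathcal{H}}\|_{\rho}^2$. I would then introduce a centralised reference iterate $\widetilde{\omega}_t$, namely the random-features gradient-descent sequence driven by the pooled empirical operators $\overline{C} = \frac{1}{n}\sum_{v} \widehat{C}_v$ and $\overline{h} = \frac{1}{n}\sum_v \widehat{h}_v$, where $\widehat{C}_v = \frac{1}{m}\sum_i \phi_M(x_{i,v}) \otimes \phi_M(x_{i,v})$ and $\widehat{h}_v = \frac{1}{m}\sum_i y_{i,v}\phi_M(x_{i,v})$. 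A triangle inequality in the $\|\cdot\|_\rho$ norm then splits the error into a \emph{centralised error} $\|S_M \widetilde{\omega}_t - f_{\mathcal{H}}\|_\rho$ and a \emph{network error} $\|S_M (\widehat{\omega}_{t+1,v} - \widetilde{\omega}_t)\|_\rho$.

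For the centralised error I would invoke the single-machine theory of gradient descent with random features, since $\widetilde{\omega}_t$ is exactly full-batch gradient descent on all $nm$ samples pooled through $\overline{C},\overline{h}$. Writing the implicit regularisation level as $\lambda \simeq 1/(\eta t)$, the choice $t = (nm)^{1/(2r+\gamma)}$ corresponds to $\lambda \simeq (nm)^{-1/(2r+\gamma)}$, which balances the bias $\lambda^{2r}$ against the variance $\mathcal{N}(\lambda)/(nm) \lesssim \lambda^{-\gamma}/(nm)$ and yields the claimed rate $(nm)^{-2r/(2r+\gamma)}$; the requirement $M \simeq (nm)^{(1+\gamma(2r-1))/(2r+\gamma)}$ is precisely the number of features below which the Monte-Carlo approximation of the kernel would dominate this rate, following the refined random-features analysis under the source and capacity conditions.

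The substantive work is the network error. I would stack the agents' parameters into $W_t \in \mathbb{R}^{n\times M}$ and write the recursion $W_{t+1} = P(W_t - \eta\,\mathrm{Grad}(W_t))$, where row $v$ of $\mathrm{Grad}(W_t)$ is $\widehat{C}_v \widehat{\omega}_{t,v} - \widehat{h}_v$. Subtracting the averaged dynamics, the discrepancy between $\bar{\omega}_t = \frac1n \sum_v \widehat{\omega}_{t,v}$ and $\widetilde{\omega}_t$ is driven by $\frac{1}{n}\sum_v \widehat{C}_v(\widehat{\omega}_{t,v} - \bar{\omega}_t)$, while the consensus error $\widehat{\omega}_{t,v} - \bar{\omega}_t$ contracts on the orthogonal complement of the all-ones vector at the rate $\sigma_2$; the scale $t^\star \simeq 1/(1-\sigma_2)$ is the mixing time past which this contraction has taken effect. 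To obtain the refined dependence and a high-probability statement I would split this term, as the contributions indicate, into a \emph{Population Network Error}, in which each $\widehat{C}_v$ is replaced by the population covariance $C_M$ (giving the leading behaviour that can be controlled in high probability for any graph), and a \emph{Residual Network Error} collecting the fluctuations $\widehat{C}_v - C_M$. All estimates are carried out in the $(C_M + \lambda I)^{-1/2}$-weighted norm so that the bias, variance and network terms live in the correct interpolation space; the per-agent operators are controlled by matrix-Bernstein concentration together with a union bound over the $n$ agents, and it is the demand that the Residual Network Error be lower order that produces the sample threshold $m \gtrsim ((t^\star)^{\cdots} n^{\cdots}) \vee ((t^\star)^{\cdots} n^{2r/\gamma})$.

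The main obstacle I anticipate is the network error, and specifically threading a single high-probability event through the recursively-defined consensus error so that the bound holds uniformly over all iterations up to $t = (nm)^{1/(2r+\gamma)}$, for \emph{any} topology. The delicate point is that standard consensus-optimisation bounds would force $t \gtrsim 1/(1-\sigma_2)$ to appear multiplicatively in the rate; the improvement here rests on showing that once the agents' empirical operators $\widehat{C}_v$ concentrate around the common $C_M$, the network error becomes statistically negligible rather than merely algorithmically small, so that the topology enters only through the benign lower bound on $m$ via $t^\star$. Balancing the matrix-Bernstein deviations (which scale with $\mathcal{N}(\lambda)$ and hence $\lambda^{-\gamma}$) against the iteration count and the weighting operator, while keeping the source exponent $r$ in the range $r+\gamma>1$ where the Residual term can be absorbed, is what pins down the precise exponents in the sample-size condition.
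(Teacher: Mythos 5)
Your proposal follows essentially the same route as the paper: the excess risk is split via the isometry $\|S_M\cdot\|_\rho = \|C_M^{1/2}\cdot\|$ into a statistical error for the pooled single-machine iterate (handled by the existing gradient-descent-with-random-features analysis, with $\lambda\simeq 1/(\eta t)$ balancing bias $\lambda^{2r}$ against $\mathcal{N}(\lambda)/(nm)$ and the stated $M$), plus a network error that is further decomposed into a population part (empirical covariances replaced by $C_M$, controlled by the mixing of $P$ over the scale $t^\star\simeq 1/(1-\sigma_2)$) and a residual part driven by the fluctuations $\widehat{C}^{(w)}_M-C_M$ measured in the $(C_M+\lambda I)^{-1/2}$-weighted norm with a union bound over agents — exactly the structure of the paper's terms $\mathbf{E}_1$--$\mathbf{E}_5$ and the source of the sample-size threshold under $r+\gamma>1$. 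The only cosmetic difference is that you centre on the network average $\bar\omega_t$ where the paper unravels the recursion as a sum over paths weighted by $P_{vw_{t:k}}-n^{-(t-k+1)}$, but the paper's own residual analysis passes through the same network-averaged sequence, so the arguments coincide in substance.
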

Once again, the statistical rate achieved $ (nm)^{-\frac{2r}{2r+\gamma}}$ is the minimax optimal rate with respect to all of the samples within the network \cite{caponnetto2007optimal}, and both the number of Random Features as well as the number of iterations match the number required by single machine Gradient Descent when given \textit{sufficiently many samples } $m$. When $r=1/2$ and $\gamma =1$ we recover the basic result given in Theorem \ref{thm:WorstCase}, with the bounds now adapting to complexity of the predictor as well as capacity through $r$ and $\gamma$, respectively. In the low dimensional setting when $\gamma = 0$, we note our guarantees do not offer computational speed-ups over single machine Gradient Descent. While counter-intuitive, this observation aligns with \cite{richards2019optimal}, which found the easier the problem (larger $r$, smaller $\gamma$) the more samples required to achieve a speed-up. This is due to network error concentrating at fixed rate of $1/m$ while the optimal statistical rate is $(nm)^{-\frac{2r}{2r+\gamma}}$. An open question is then how to modify the algorithm to exploit regularity and achieve a speed-up runtime, similar to how Leverage Score Sampling exploits additional regularity \cite{bach2013sharp,pmlr-v70-avron17a,rudi2018fast,li2018towards}.

To provide insight into how the conditions in Theorem \ref{thm:Refined} arise, the following theorem gives the leading order error terms which contribute to the conditions in Theorem \ref{thm:Refined}.

\begin{theorem}[Leading Order Terms]
\label{thm:LeadingOrder}
Let $n,m,M \in \mathbb{N}_{+}$, $\delta \in (0,1)$, $t \geq 2t^{\star} \geq 4$, $\eta \kappa^2 \leq 1$ and $\eta \simeq 1$. 
Under assumptions \ref{ass:RandomFeatures} to \ref{ass:SourceCap} with $r+\gamma > 1$, the following holds with high probability when 
$t^{\star}  \simeq \frac{1}{1 - \sigma_2}  $ for any $ v\in V$
\begin{align*}
     \mathcal{E}(\widehat{f}_{t+1,v}) - \mathcal{E}(f_{\mathcal{H}})
    &  \lesssim 
    \quad 
    \underbrace{ 
    \frac{ \eta ^{\gamma}}{m (1-\sigma_2)^{\gamma} }
    + 
    \frac{(\eta t)^{2} (\eta t^{\star})^{1+\gamma} }{m^2}
    }_{\text{Network Error}} 
     + \, 
    \underbrace{ 
    \Big( \frac{\eta t }{M} \! + \! 1 \Big) 
    \frac{ (\eta t)^{\gamma} }{nm}
    + 
    \frac{ 1}{M (\eta t)^{(1-\gamma)(2r-1)}}
    + 
    \big( \frac{1}{\eta t}\big)^{2r}
    }_{\text{Statistical Error}}\\
    & \quad\quad + \text{H.O.T.}
\end{align*}
where H.O.T. denotes Higher Order Terms. 
\end{theorem}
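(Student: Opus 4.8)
The plan is to reduce the excess risk to an $L^2(X,\rho_X)$ distance and then split the agent's iterate into a consensus part, a residual-network part, and a single-machine part. Since $f_{\mathcal{H}}$ minimises $\mathcal{E}$ and the loss is squared, under Assumption \ref{ass:RKHS} the excess risk equals $\|\widehat{f}_{t+1,v} - f_{\mathcal{H}}\|_{\rho}^2$ up to lower-order cross terms, so it suffices to control $\|\widehat{f}_{t+1,v} - f_{\mathcal{H}}\|_{\rho}$. I introduce the network average $\bar{\omega}_{t+1} = \frac{1}{n}\sum_{w \in V}\widehat{\omega}_{t+1,w}$ with $\bar{f}_{t+1}=\langle\bar{\omega}_{t+1},\phi_{M}(\cdot)\rangle$, and the pooled single-machine gradient descent iterate $\widehat{\omega}^{\mathrm{gd}}_{t+1}$ obtained by replacing $P$ with the full-averaging matrix $\frac{1}{n}\1\1^{\top}$ (gradient descent on the $nm$-sample objective). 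The decomposition is
\[
\widehat{f}_{t+1,v} - f_{\mathcal{H}} = \underbrace{(\widehat{f}_{t+1,v} - \bar{f}_{t+1})}_{\text{consensus}} + \underbrace{(\bar{f}_{t+1} - \widehat{f}^{\mathrm{gd}}_{t+1})}_{\text{residual network}} + \underbrace{(\widehat{f}^{\mathrm{gd}}_{t+1} - f_{\mathcal{H}})}_{\text{single machine}}.
\]

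First I would write the update in operator form. With the per-agent empirical covariance $\widehat{C}_{w} = \frac{1}{m}\sum_{i} \phi_{M}(x_{i,w})\otimes\phi_{M}(x_{i,w})$ and $\widehat{b}_{w} = \frac{1}{m}\sum_{i} y_{i,w}\phi_{M}(x_{i,w})$, the local step is $(I-\eta\widehat{C}_{w})\widehat{\omega}_{t,w}+\eta\widehat{b}_{w}$ and the communication step sums these against the doubly stochastic $P$. Averaging over agents and using $\sum_{v} P_{vw}=1$ gives
\[
\bar{\omega}_{t+1} = (I-\eta\widehat{C})\bar{\omega}_{t} + \eta\widehat{b} - \frac{\eta}{n}\sum_{w\in V}\widehat{C}_{w}(\widehat{\omega}_{t,w}-\bar{\omega}_{t}),
\]
where $\widehat{C} = \frac{1}{n}\sum_{w}\widehat{C}_{w}$ and $\widehat{b}=\frac{1}{n}\sum_{w}\widehat{b}_{w}$. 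Thus the average obeys the pooled recursion driven by a forcing term built from the consensus gaps $\widehat{\omega}_{s,w}-\bar{\omega}_{s}$, the single object coupling all three error terms.

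For the consensus term I would show the deviation-from-average vector contracts at rate $\sigma_2$: subtracting the mean annihilates the leading eigenvector of $P$, so on its orthogonal complement the communication operator has norm at most $\sigma_2$. Propagating this through the gradient dynamics and summing the geometric series (which, since $t \geq 2t^{\star}$, has reached its quasi-steady state) bounds the gaps in terms of the mixing scale $1/(1-\sigma_2)\simeq t^{\star}$ and the gradient magnitudes, controlled by Assumption \ref{ass:FeatureRegularity}. The refined step, and part of the novelty, is to measure these gaps in the $(L+\lambda)^{-1/2}$-weighted geometry with $\lambda\simeq 1/(\eta t^{\star})$ rather than the crude operator norm, so that the effective dimension enters through Assumption \ref{ass:SourceCap} as $\mathcal{N}(\lambda)\lesssim\lambda^{-\gamma}$; after concentrating the per-agent empirical quantities over the $m$ local samples (the gaps reflect $1/\sqrt{m}$ fluctuations of $\widehat{C}_{w},\widehat{b}_{w}$ about their population values) this yields the Population Network Error $(\eta t^{\star})^{\gamma}/m = \eta^{\gamma}/(m(1-\sigma_2)^{\gamma})$, now in high probability. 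The residual-network term is then handled by unrolling $e_{t+1} := \bar{\omega}_{t+1}-\widehat{\omega}^{\mathrm{gd}}_{t+1}$ against the forcing term, giving $e_{t+1} = -\frac{\eta}{n}\sum_{s\le t}(I-\eta\widehat{C})^{t-s}\sum_{w}\widehat{C}_{w}(\widehat{\omega}_{s,w}-\bar{\omega}_{s})$; the resolvent truncates the accumulated gaps at the scale $\eta t$, and combining the gap bounds over the mixing horizon $t^{\star}$ with a second application of concentration produces the Residual Network Error $(\eta t)^{2}(\eta t^{\star})^{1+\gamma}/m^{2}$.

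Finally, the single-machine term $\widehat{f}^{\mathrm{gd}}_{t+1}-f_{\mathcal{H}}$ I would treat with the now-standard bias--variance analysis of gradient descent with random features over the pooled sample, following \cite{rudi2017generalization,carratino2018learning}: a deterministic bias controlled by the source condition gives $(\eta t)^{-2r}$; a sample-variance term controlled by the effective dimension and the moment Assumption \ref{ass:moment} gives $(\eta t)^{\gamma}/(nm)$; and the random-feature approximation error, measured at the implicit regularisation scale $\lambda\simeq 1/(\eta t)$, contributes the two $1/M$ factors $(\eta t/M)(\eta t)^{\gamma}/(nm)$ and $M^{-1}(\eta t)^{-(1-\gamma)(2r-1)}$. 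Throughout, each empirical operator is compared to its population counterpart via high-probability operator concentration for self-adjoint random operators, and products of the leading terms are absorbed into H.O.T. I expect the main obstacle to be the Network Error, specifically obtaining the Population Network Error in high probability with the sharp $(1-\sigma_2)^{-\gamma}$ dependence: the consensus gaps must be controlled simultaneously in the spectral (network) geometry and in the $L^2$/effective-dimension geometry while each agent carries a distinct $\widehat{C}_{w}$, and decoupling these without losing a full factor of effective dimension, then upgrading the resulting bound from expectation to high probability, is the delicate part.
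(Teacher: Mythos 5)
Your proposal is correct in outline but takes a genuinely different route from the paper. The paper never introduces the network average: it centres both the distributed iterate $\widehat{\omega}_{t+1,v}$ and the pooled iterate $\widehat{v}_{t+1}$ at the \emph{population} sequence $\widetilde{v}_t$, unrolls both recursions into a sum over paths $w_{t:k}$ weighted by $\Delta(w_{t:k})=P_{vw_{t:k}}-n^{-(t-k+1)}$, and then splits the path operators $\Pi(w_{t:k+1})$ into $(I-\eta C_M)^{t-k}+\Pi^{\Delta}$, yielding the five terms $\mathbf{E}_1,\dots,\mathbf{E}_5$ whose high-probability bounds are read off to give the displayed leading orders ($\mathbf{E}_1$ gives the $\eta^{\gamma}/(m(1-\sigma_2)^{\gamma})$ term, $\mathbf{E}_3$ the $(\eta t)^2(\eta t^{\star})^{1+\gamma}/m^2$ term). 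Your consensus/drift/single-machine decomposition is the more standard DGD-style argument; it is not alien to the paper, since the proof of Lemma \ref{lem:OperatorNormBound:Delta2} (used for $\mathbf{E}_5$) internally decomposes against exactly your network average $\bar{\gamma}_t=\frac{1}{n}\sum_w\gamma_{t,w}$, and your residual forcing term $\frac{\eta}{n}\sum_w\widehat{C}_w(\widehat{\omega}_{s,w}-\bar{\omega}_s)=\frac{\eta}{n}\sum_w(\widehat{C}_w-\widehat{C})(\widehat{\omega}_{s,w}-\bar{\omega}_s)$ is the same product of two $1/\sqrt{m}$-concentrating factors that produces the $1/m^2$ rate there. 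What your route buys is a cleaner top-level structure with three instead of five terms; what the paper's route buys is two things you will need to import to close your argument. First, centring at the population iterate makes the driving noise $N_{k,w}$ an average of i.i.d.\ mean-zero terms, so it concentrates cleanly; in your formulation the gradient deviation contains $(\widehat{C}_w-\widehat{C})\bar{\omega}_s$ with $\bar{\omega}_s$ statistically dependent on the local data, and you must pass through the uniform operator-norm bound $\max_w\|C_{M,\lambda}^{-1/2}(C_M-\widehat{C}_M^{(w)})\|$ (the paper's $\Delta_\lambda$) to break that dependence. Second, your phrase ``summing the geometric series, which has reached its quasi-steady state'' hides the step that actually produces $(\eta t^{\star})^{\gamma}/m$ rather than $(\eta t^{\star})^{2}/m$: one cannot apply the $\sigma_2$-contraction and the operator contraction $C_M^{1/2}(I-\eta C_M)^{t-k}C_{M,\lambda}^{1/2}$ to the same summand, so the time axis must be split at $k=t-t^{\star}$, using network mixing for old noise and the effective-dimension-weighted contraction (with $\lambda\simeq 1/(\eta t^{\star})$) for the last $t^{\star}$ steps, exactly as in the paper's Lemma \ref{lem:E_1}. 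With those two devices your decomposition should reproduce the stated leading orders; the statistical term is handled identically in both approaches via \cite{carratino2018learning}.
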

Theorem \ref{thm:LeadingOrder} decomposes the Generalisation Error into two terms. The \textit{Statistical Error} matches the Generalisation Error of Gradient Descent with Random Features \cite{carratino2018learning} and consists of Sample Variance, Random Feature and Bias errors. The  \textit{Network Error} arises from tracking the difference between the Distributed Gradient Descent $\widehat{\omega}_{t+1,v}$ and single machine Gradient Descent iterates. The primary technical contribution of our work is in the analysis of this term, in particular, building on \cite{richards2019optimal} in two directions. Firstly, bounds are given in high probability instead of expectation. Secondly, we give a tighter analysis of the Residual Network Error, here denoted in the second half of the \textit{Network Error} as  $(\eta t)^{2} (\eta t^{\star})^{1+\gamma}/m^2$. Previously this term was of the order $(\eta t)^{2+\gamma}/m^2$ and gave rise to the condition of $r+\gamma/2 \geq 1$, whereas we now require $r + \gamma \geq 1$. Our analysis can ensure it is decreasing with the step size $\eta$, and thus, be controlled by taking a smaller step size. While not explored in this work, we believe our approach would be useful for analysing the Stochastic Gradient Descent variant \cite{lin2017optimal} where a smaller step size is often chosen.

\section{Error Decomposition and Proof Sketch}
\label{sec:ProofSketch}
In this section we give a more detailed error decomposition as well as a sketch of the proof. 
Section \ref{sec:ProofSketch:ErrorDecomp} gives the error decomposition into statistical and network terms. 
Section \ref{sec:ProofSketch:NetworkError} decomposes the network term into a population and a residual part. 
Section \ref{sec:ProofSketch:NetworkPop} and \ref{sec:ProofSketch:NetworkResidual} give sketch proofs for bounding the population and residual parts respectively.

\subsection{Error Decomposition}
\label{sec:ProofSketch:ErrorDecomp}
We begin by introducing the iterates produced by a single machine Gradient Descent with $nm$ samples as well as an auxiliary sequence associated to the population. Initialised at $\widehat{v}_{1} = \widetilde{v}_{1} = 0$, we  define, for $t \geq 1$
\begin{align*}
    & \widehat{v}_{t+1}  \>\!\!\! = \! 
    \widehat{v}_{t} \! - \! 
    \frac{\eta}{nm} \!
    \sum_{w \in V}
    \!
    \sum_{i=1}^{m}
    \>\!\!
    \big( \langle \widehat{v}_{t,w},
    \phi_{M}(\>\!\! x_{i,w} \>\!\!) \rangle 
    \! - \! y_{i,w}\big) 
    \phi_{M}( \>\!\! x_{i,w} \>\!\! ),
    \\
    & \widetilde{v}_{t+1} = \widetilde{v}_{t} - \eta 
    \int_{X} \big( \langle \widetilde{v}_{t},\phi_{M}(x) \rangle - y )\phi_{M}(x) d \rho(x,y).
\end{align*}
We work with functions in $L^2(X,\rho_{X})$, thus we define
$\widehat{g}_{t} = \langle \widehat{v}_{t},\phi_{M}(\cdot) \rangle$, $\widetilde{g}_{t} = \langle \widetilde{v}_{t},\phi_{M}(\cdot) \rangle$.
Since the prediction error can be written in terms of the $L^2(X,\rho_{X})$ as follows
$
    \mathcal{E}(\widehat{f}_{t,v}) - \mathcal{E}(f_{\mathcal{H}}) = \|\widehat{f}_{t,v} - f_{\mathcal{H}}\|_\rho^2
$
we have the decomposition 
$
    \widehat{f}_{t,v} - f_{\mathcal{H}} 
    = 
    \widehat{f}_{t,v} - \widehat{g}_{t} 
    + 
    \widehat{g}_{t} - f_{\mathcal{H}}
$. The term $\widehat{g}_{t} - f_{\mathcal{H}}$ that we call the \textit{Statistical Error} is studied within \cite{carratino2018learning}. The primary contribution of our work is in the analysis of $\widehat{f}_{t,v} - \widehat{g}_{t}$ which we call the \textit{Network Error}, and go on to describe in more detail next.

\subsection{Network Error}
\label{sec:ProofSketch:NetworkError}
To accurately describe the analysis for the network error we introduce some notation. Begin by defining the operator $S_{M}: \mathbb{R}^{M}\rightarrow L^2(X,\rho_{X})$ so that $(S_{M}\omega)(\cdot) = \langle \omega,\phi_{M}(\cdot)\rangle$  as well as the covariance $C_M: \mathbb{R}^{M} \rightarrow \mathbb{R}^{M}$ defined as $C_{M} = S^{\star}_{M} S_{M}$, where $S^{\star}_{M}$ is the adjoint of $S_{M}$ in $L^2(X,\rho_{X})$. Utilising an isometry property (see \eqref{equ:Isometry} in the Appendix) we have for $\omega \in \mathbb{R}^{M}$ the following $\|S_{M} \omega \|_{\rho} = \|C_{M}^{1/2} \omega\|$, that is going from a norm in $L^{2}(X,\rho_{X})$ to Euclidean norm. The empirical covariance operator of the covariates held by agent $v \in V$ is denoted  $\widehat{C}_{M}^{(v)}:\mathbb{R}^{M} \rightarrow \mathbb{R}^{M}$. For $t \geq 1$ and a path $w_{t:1} = (w_t,w_{t-1},\dots,w_{1}) \in V^{t}$ denote the collection of contractions 
\begin{align*}
    \Pi(w_{t:1}) = (I - \eta \widehat{C}_{M}^{(w_t)})(I - \eta \widehat{C}_{M}^{(w_{t-1})})\dots (I - \eta \widehat{C}_{M}^{(w_{1})})
\end{align*}
as well as the centered product $\Pi^{\Delta}(w_{t:1})= \Pi(w_{t:1}) - (I - \eta C_M)^{t}$. For $w \in V$  $k \geq 1$ let $N_{k,w} \in \mathbb{R}^{M}$ denote a collection of zero mean random variables that are independent across agents $w \in V$ but not index $k \geq 1$.

For $v,w \in V$ and $s \geq 1$ define the difference $\Delta^{s}(v,w) := P^{s}_{vw} - \frac{1}{n}$, where we apply the power then index i.e. $(P^{s})_{vw} = P^{s}_{vw}$. For $w_{t:k} \in V^{t-k}$ denote the deviation along a path $\Delta(w_{t:k}) = P_{v w_{t:k}} - \frac{1}{n^{t-k}}$ where we have written the probability for a path $P_{v w_{t:k}} = P_{vw_t} P_{w_{t} w_{t-1}}\dots P_{w_{k+1} w_{k}}$.

Following \cite{richards2019optimal}, center the distributed $\omega_{t+1,v}$ and the single machine iterates $\widehat{v}_{t+1}$ around the population iterates $\widetilde{v}_{t}$. Apply the isometry property to $\|\widehat{f}_{t,v} - \widehat{g}_{t} \|_{\rho} = \|C_{M}^{1/2}( \widehat{\omega}_{t+1,v} - \widehat{v}_{t})\|$ and following the steps in Appendix \ref{App:Network:ErrorDecomp} we arrive at 
\begin{align*}
    \|C_{M}^{1/2}(\widehat{\omega}_{t+1,v} - \widehat{v}_{t+1})\| 
     & \leq   \underbrace{ 
    \sum_{k=1}^{t} \eta 
    \sum_{w \in V}| \Delta^{t-k}(v,w)|
    \|C_{M}^{1/2}(I - \eta C_{M})^{t-k} N_{k,w}\|
    }_{\text{Population Network Error}} 
    \\
    & \quad 
    + \underbrace{ 
    \sum_{k=1}^{t} \eta 
    \big\| \sum_{w_{t:k} \in V^{t-k+1}} \Delta(w_{t:k})C_{M}^{1/2} \Pi^{\Delta}(w_{t:k+1}) N_{k,w_{k}} \big\|.
    }_{\text{Residual Network Error}}
\end{align*}
The two terms above can be associated to the two terms in the network error of Theorem \ref{thm:LeadingOrder}, with the Population Network Error decreasing as $1/m$ and the Residual Network Error as $1/m^2$. We now analyse each of these terms separately.   

\subsection{Network Error: Population}
\label{sec:ProofSketch:NetworkPop}
Our contribution for analysing the \textit{Population Network Error} is to give bounds it in high probability, where as \cite{richards2019optimal} only gave bounds in expectation. Choosing some $t \geq 2 t^{\star} \geq 2$ and splitting the series at $k = t-t^{\star}$ we are left with two terms. For $1 \leq k \leq t-t^{\star}$ we utilise that the sum over the difference $|\Delta^{s}(v,w)|$ can be written in terms of euclidean $\ell_1$ norm and this is bounded by the second largest eigenvalue of $P$ in absolute value i.e. $\sum_{w \in V}| \Delta^{t-k}(v,w)| = \|e_v^{\top}P^{t-k} - \frac{1}{n} \mathbf{1}\|_1 \leq \sqrt{n}\sigma_2^{t-k} \leq \sqrt{n} \sigma_2^{t^{\star}}$, where $e_v$ is the standard basis vector in $\mathbb{R}^{n}$ with a $1$ aligning with agent $v \in V$ and $\mathbf{1}$ is a vector of all $1$'s. Meanwhile for $t \geq k \geq t-t^{\star}$, we follow \cite{richards2019optimal} and utilise the contraction of the gradient updates i.e. $C_{M}^{1/2}(I - \eta C_{M})^{t-k}$ alongside that $N_{k,w_{k}}$ is an average of $m$ i.i.d. random variables, and thus, concentrate at $1/\sqrt{m}$ in high probability. This leads to the bound in high probability
\begin{align*}
    & \text{Population Network Error} \lesssim
    \!\!
    \underbrace{ 
    \frac{\sqrt{n} \sigma_2^{t^{\star}}t }{\sqrt{m}}
    }_{\text{Well Mixed Terms}} + 
    \underbrace{ 
    \frac{(\eta t^{\star})^{\gamma/2}}{\sqrt{m}}
    .}_{\text{Poorly Mixed Terms}} 
\end{align*}
The first term \textit{Well Mixed}, decays exponentially with the second largest eigenvalue of $P$ in absolute value, and represents the information from past iterates that has now fully propagated around the network. The term \textit{Poorly Mixed} represents error from the most recent iterates that is yet to fully propagate through the network. It grows at the rate $(t^{\star})^{\gamma/2}$ due to utilising the contractions of the gradients as well as the assumptions \ref{ass:SourceCap}.  The quantity $t^{\star}$ is now chosen to trade off these terms. Note by writing $\sigma_2^{t^{\star}} = e^{-t^{\star} \log(1/\sigma_2)}$ that, up to logarithmic factors, the first can be made small by taking $t^{\star} \gtrsim \frac{1}{1-\sigma_2} \geq \frac{1}{-\log(\sigma_2)}$.

\subsection{Network Error: Residual}
\label{sec:ProofSketch:NetworkResidual}
The primary technical contribution of our work is in the analysis of this term. The analysis builds on insights from \cite{richards2019optimal}, specifically that $\Pi^{\Delta}(w_{t:1})$ is a product of empirical operators minus the population, and thus, can be written in terms of the differences $\widehat{C}_{M}^{(w)} - C_{M}$ which concentrate at $1/\sqrt{m}$. Specifically, for $N \in \mathbb{R}^{M}$, the bound within \cite{richards2019optimal} was of the following order with high probability for any $w_{t:1} \in V^{t}$
\begin{align}
\label{equ:DeltaOperatorBound}
    \|C_{M}^{1/2} \Pi^{\Delta}(w_{t:1}) N\| \lesssim \|N\| \frac{(\eta t)^{\gamma/2}}{\sqrt{m}}.
\end{align}
The bound for Residual Network Error within \cite{richards2019optimal}  is arrived at by applying triangle inequality over the series $\sum_{w_{t:k} \in V^{t-k+1}}$, plugging in \eqref{equ:DeltaOperatorBound} for $\|C_{M}^{1/2} \Pi^{\Delta}(w_{t:k+1}) N_{k,w_{k}}\|$ alongside $\|N_{k,w_{k}}\| \lesssim 1/\sqrt{m}$ see Lemma \ref{lem:conc} in Appendix. Summing over $1 \leq k \leq t$ yields the bound of order $(\eta t)^{1+\gamma/2}/m$ in high probability. The two key insights of our analysis are as follows. Firstly, noting that the error for bounding the contraction  $\Pi^{\Delta}(w_{t:1})$ grows with the length of the path, and as such, we should aim to apply the bound \eqref{equ:DeltaOperatorBound} to short paths. Secondly, note for $N \in \mathbb{R}^{M}$ quantities of the form $\|C_{M}^{1/2}\sum_{w_{t:1} \in V^{t}} \Delta(w_{t:1}) \Pi^{\Delta}(w_{t:1}) N\|$ concentrate quickly (Lemma \ref{lem:OperatorNormBound:Delta2} in Appendix). 

To apply the insights outlined previously, we decompose the deviation $\Pi^{\Delta}(w_{t:2})$ into two terms that only replace the final $t^{\star}$ operators with the population, that is 
\begin{align*}
    \Pi^{\Delta} (  w_{t:2}  ) 
     =   
    \Pi( w_{t:t^{\star} +  2}) \Pi^{\Delta}
    ( w_{t^{\star} +  1 :1}  ) 
     + 
    \Pi^{\Delta}(w_{t:t^{\star}  +  2}) 
    ( I  - \eta C_{M}  )^{t^{\star}}.
\end{align*}
Plugging in the above then yields, for the case $k=1$, 
\begin{align*}
     \sum_{w_{t:1} \in V^{t}} \Delta(w_{t:1})C_{M}^{1/2} \Pi^{\Delta}(w_{t:2}) N_{k,w_{1}} 
    & =
    \sum_{w_{t:1} \in V^{t}} \Delta(w_{t:1})C_{M}^{1/2} \Pi(w_{t:t^{\star} + 2}) \underbrace{ \Pi^{\Delta}(w_{t^{\star} +1 :1})}_{t^{\star} \text{contraction}}   N_{k,w_{1}}\\
    & \, + 
    \sum_{w_{t:1} \in V^{t}} 
    \Delta(w_{t:1})
    \underbrace{ 
    C_{M}^{1/2} \Pi^{\Delta}(w_{t:t^{\star} +2}) 
     (I - \eta C_{M})^{t^{\star}}  N_{k,w_{1}}
     }_{\text{Independent of $w_{t^{\star} + 1:1}$ }}
\end{align*}
Note that the first term above only contains a contraction $\Pi^{\Delta}(w_{t^{\star} +1 :1})$ of length $t^{\star}$, and as such, when applying a variant of \eqref{equ:DeltaOperatorBound} will only grow at length $(\eta t^{\star})^{(1+\gamma)/2}/\sqrt{m}$. When summing over $1 \leq k \leq t$ this will result in the leading order term for the residual error of $(\eta t) (\eta t^{\star})^{(1+\gamma)/2}/m$. For the second term, note the highlighted section is independent of the final $t^{\star}$ steps of the path $w_{t:1}$, namely $w_{t^{\star} +1:1}$. Therefore we can sum the deviation $\Delta(w_{t:1})$ over path $w_{t^{\star} +1:1}$ and, if $t^{\star} \gtrsim \frac{1}{1-\sigma_2}$, replace $N_{k,w_{1}}$ by the average $\frac{1}{n} \sum_{w \in V} N_{k,w}$. This has impact of decoupling the summation over the remainder of the path $w_{t:t^{\star}}$ allowing the second insight from previously to be used. For details on this step we point the reader to Appendix Section \ref{App:Network:ErrorDecomp}.

\section{Experiments}
\label{sec:Experiments}
For our experiments we consider subsets of the SUSY data set \cite{baldi2014searching}, as well as single machine and Distributed Gradient Descent with a fixed step size $\eta = 1$. Cycle and grid network topologies are studied, with the matrix $P$ being a simple random walk. Random Fourier Features are used $\psi(x,\omega) = \cos( \xi \times w^{\top} x + q)$, with $\omega:= (w,q)$, $w$ sampled according to the normal distribution, $q$ sampled uniformly at random between $0$ and $2 \pi$, and $\xi$ is a tuning parameter associated to the bandwidth (fixed to $\xi = 10^{-1/2}$). For any given sample size, topology or network size we repeated the experiment 5 times. Test size of $10^{4}$ was used and classification error is minimum over iterations and maximum over agents i.e. $\min_{t} \max_{v \in V} \mathcal{E}_{\mathrm{Approx}}(\widehat{\omega}_{t,v})$, where $\mathcal{E}_{\mathrm{Approx}}$ is approximated test error. With the response of the data being either 1 or 0 and the predicted response $\widehat{y}$, the predicted classification is the indicator function of $\widehat{y} > 1/2$. The classification error is the proportion of mis-classified samples.

We begin by investigating the number of Random Features required with Distributed Gradient Descent to match the single machine performance. 
Looking to Figure \ref{fig:Plots1}, observe that for a grid topology, as well as small cycles $(n=9,25)$, that the classification error aligns with a single machine beyond approximately $\sqrt{nm}$ Random Features. For larger more poorly connected topologies, in particular a cycle with $n=49$ agents, we see that the error does not fully decrease down that of single machine Gradient Descent. 
\begin{figure}[!h]
    \centering
    \includegraphics[width=0.35\textwidth]{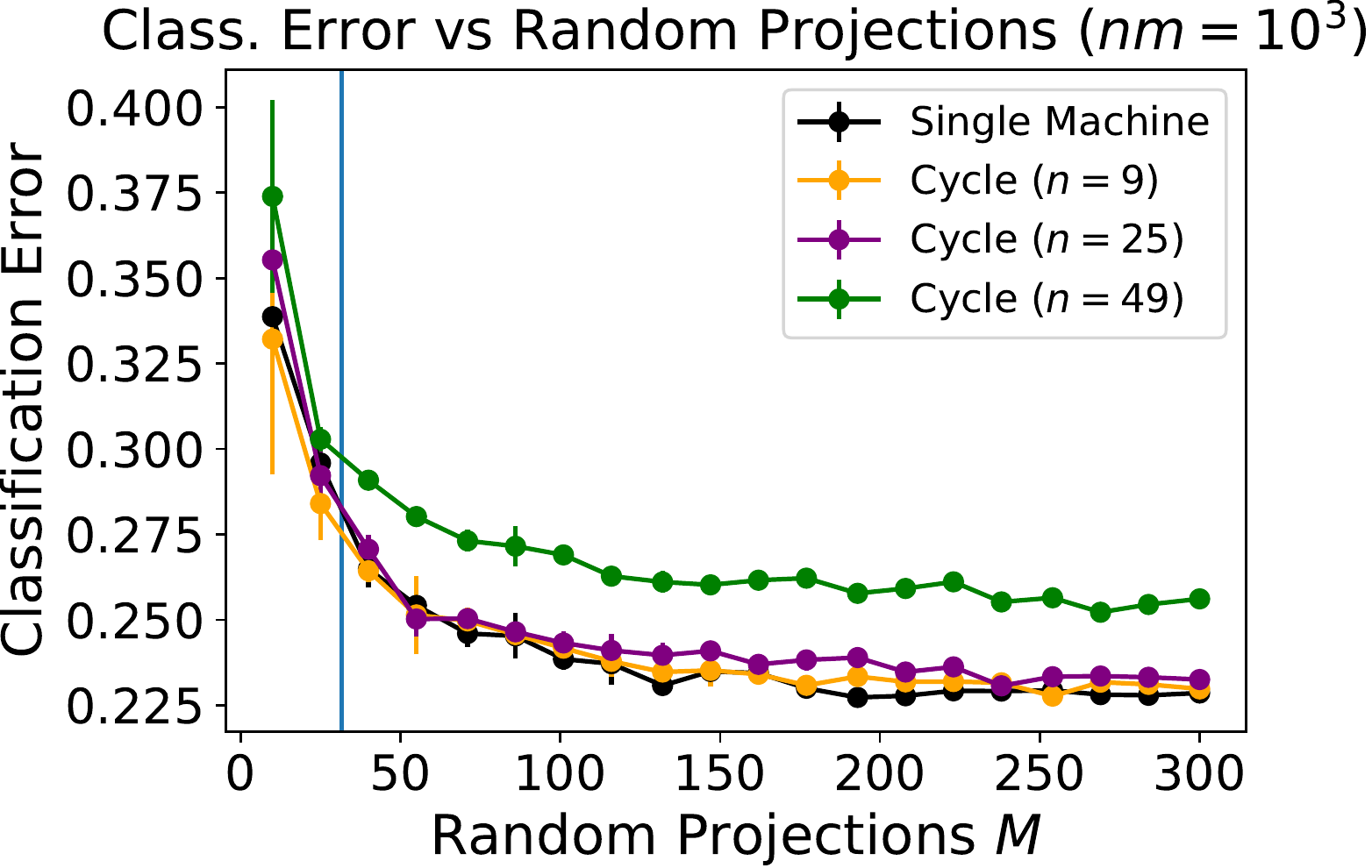}
    \includegraphics[width=0.35\textwidth]{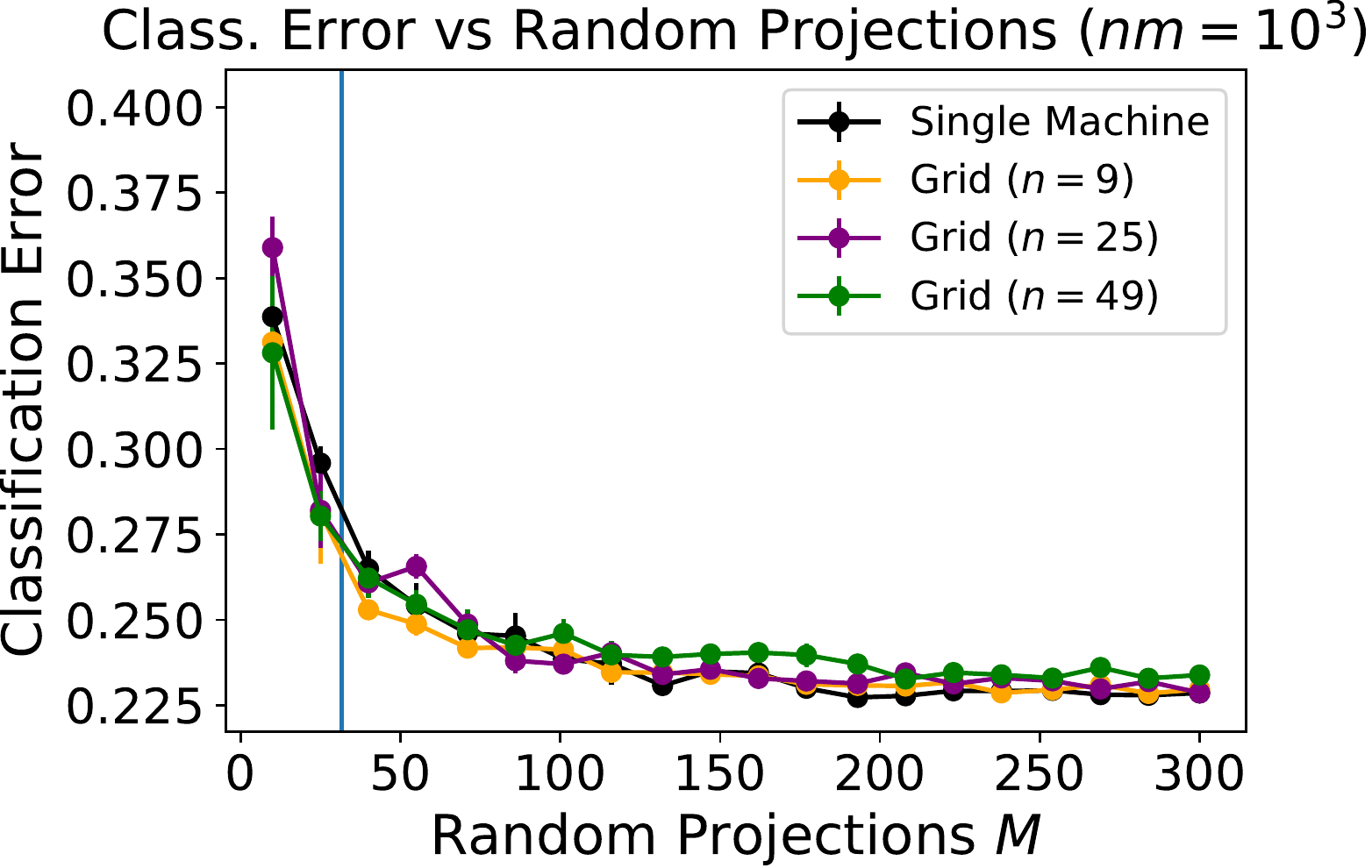}
    \caption{ Classification Error (if $y$ and $\widehat{y}$ are the true and predicted response respectively, error calculated is 0-1 loss) against number of Random Features $M$, with total sample size and maximum number of iterations $ t = nm = 10^{3}$. Vertical line in plots indicates $\sqrt{nm}$. 
    \textit{Left}: Cycle topology, \textit{Right}: Grid Topology. 
    }
    \label{fig:Plots1}
\end{figure}

Our theory predicts that the sub-optimality of more poorly connected networks decreases as the number of samples held by each agent increases. To investigate this, we repeat the above experiment for cycles and grids of sizes $n=25,49,100$ while varying the dataset size. Looking to Figure \ref{fig:Plots2}, we see that approximately $nm \approx 10^{3}$ samples are sufficient for a cycle topology of size $n=49$ to align with a single machine, meanwhile $10^{4}$ samples are required for a larger $n=100$ cycle. For a grid we see a similar phenomena, although with fewer samples required due to being better connected topology.
\begin{figure}[!h]
    \centering
    \includegraphics[width=0.35\textwidth]{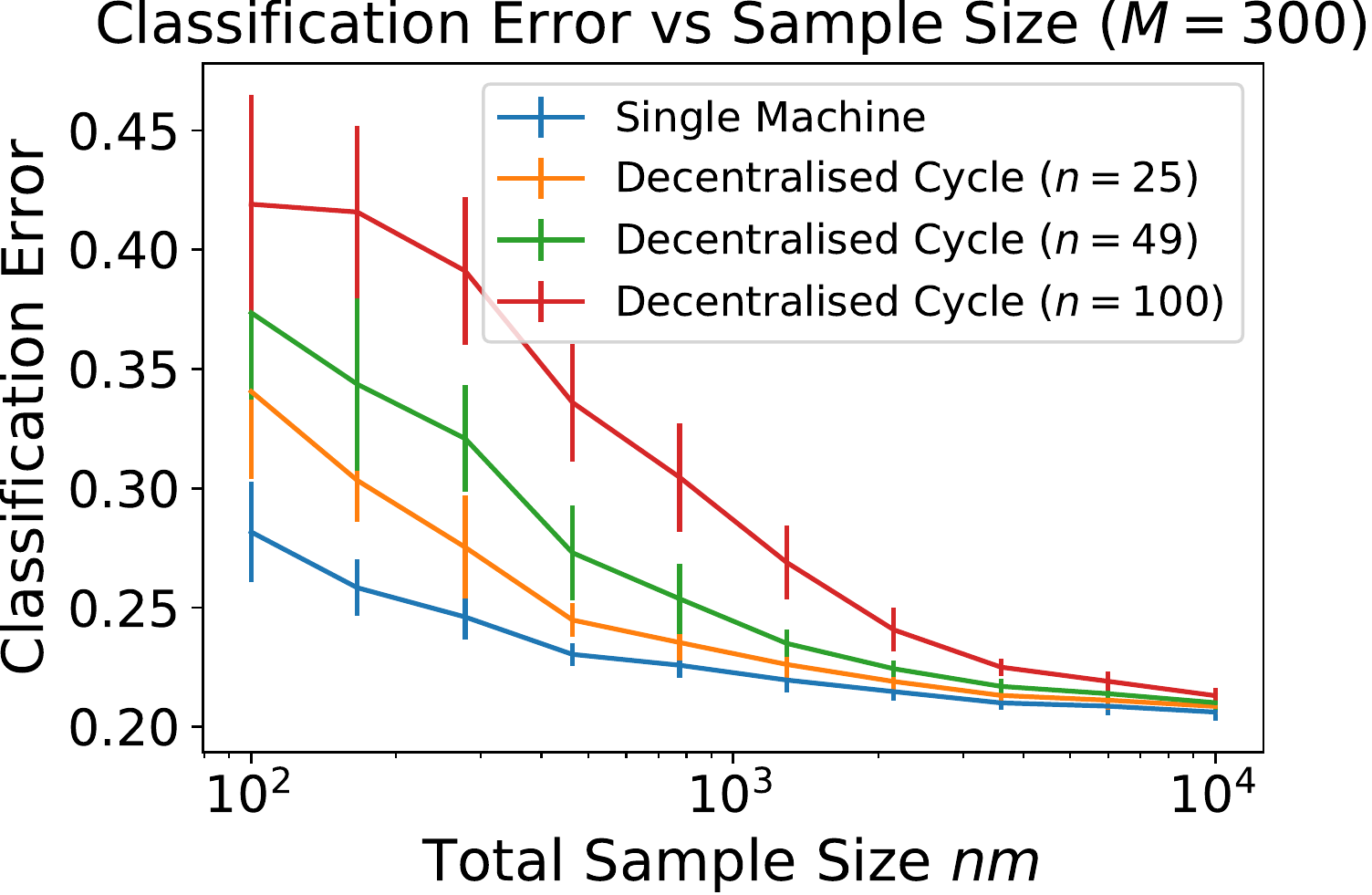}
    \includegraphics[width=0.35\textwidth]{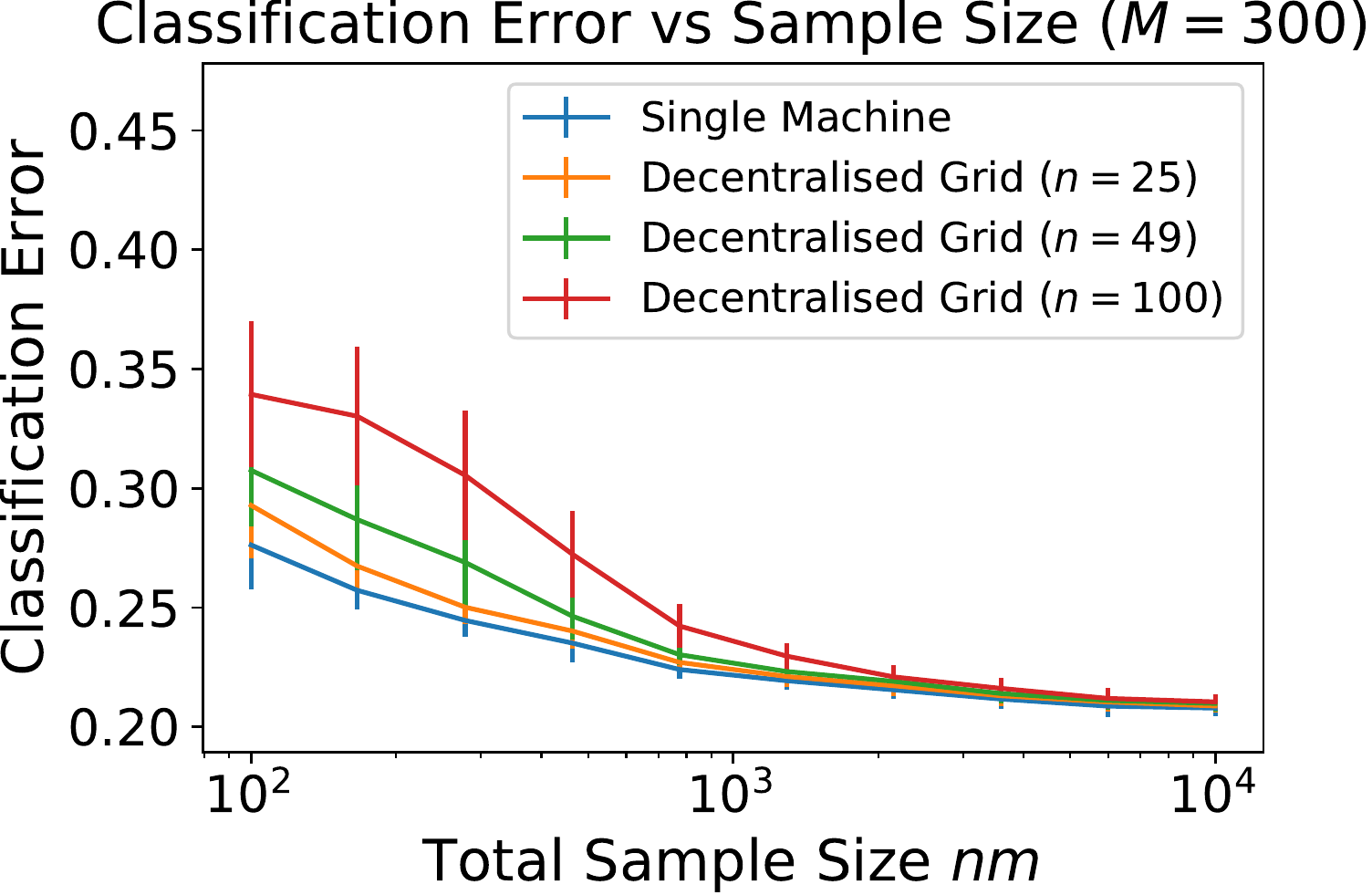}
    \caption{ Plots of Classification Error (computed as in Figure \ref{fig:Plots1}) against total number of samples $nm$, with $M=300$. Run for at most $t=10^{4}$ iterations, each point is an average of 20 sub-subsets of the SUSY, which Distributed Gradient Descent with Random Features is run on 5 times. 
    }
    \label{fig:Plots2}
\end{figure}

Our theory predicts that, given sufficiently many samples, the number of iterations for any network topology scales as those of single machine Gradient Descent. We look to Figure \ref{fig:Plots3} where the number of iterations required to achieve the minimum classification error (optimal stopping time) is plotted against the sample size. Observe that beyond approximately $10^{3}$ samples both grid and cycles of sizes  $n=25,49,100$ have iterates that scale at the same order as a single machine. Note that the number of iterations required by both topologies initially decreases with the sample size up to $10^{3}$. While not supported by our theory with constant step size, this suggests quantities held by agents become similar as agents hold more data, reducing the number of iterations to propagate information around the network. Investigation into this observation we leave to future work. 
\begin{figure}[!h]
    \centering
    \includegraphics[width=0.35\textwidth]{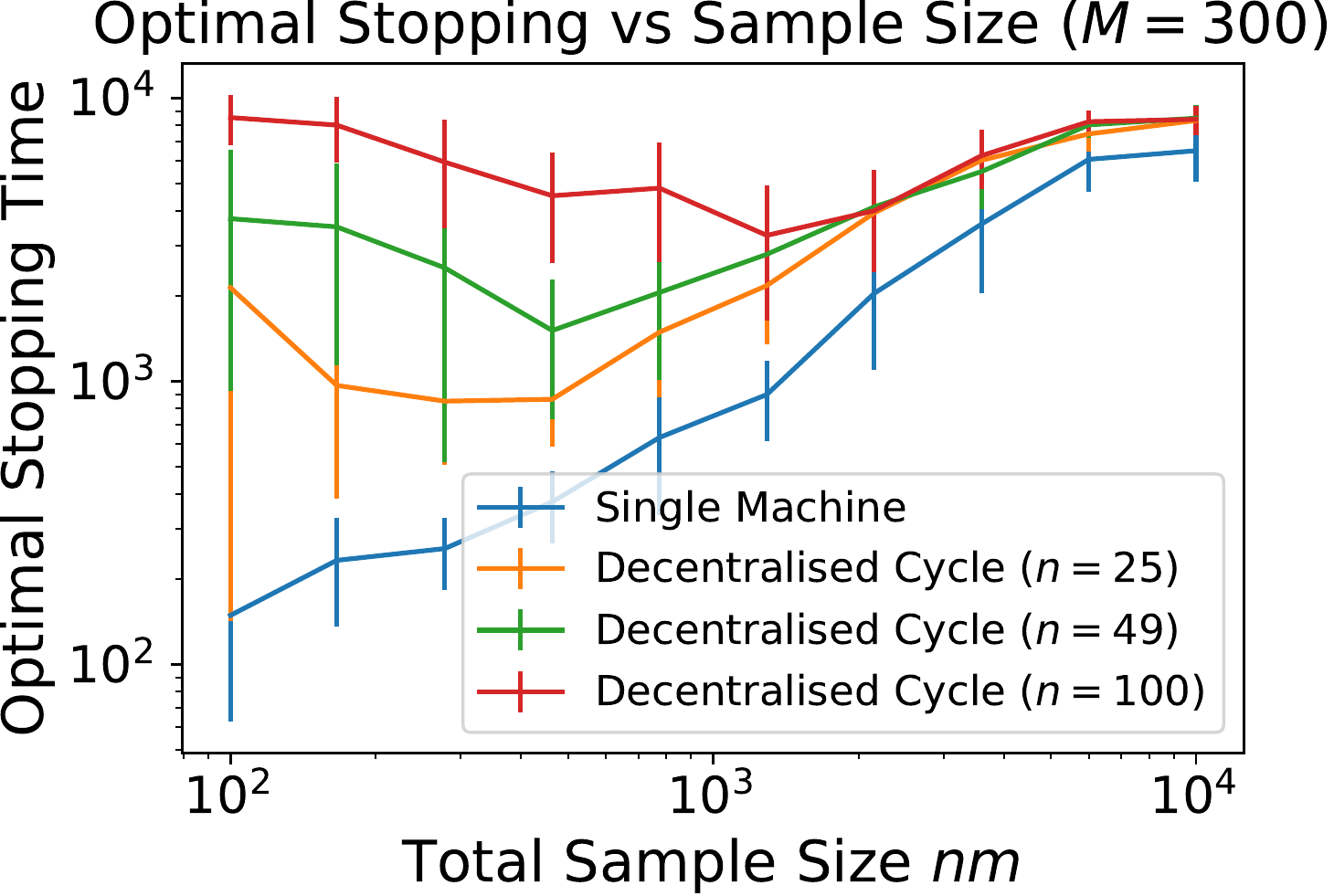}
    \includegraphics[width=0.35\textwidth]{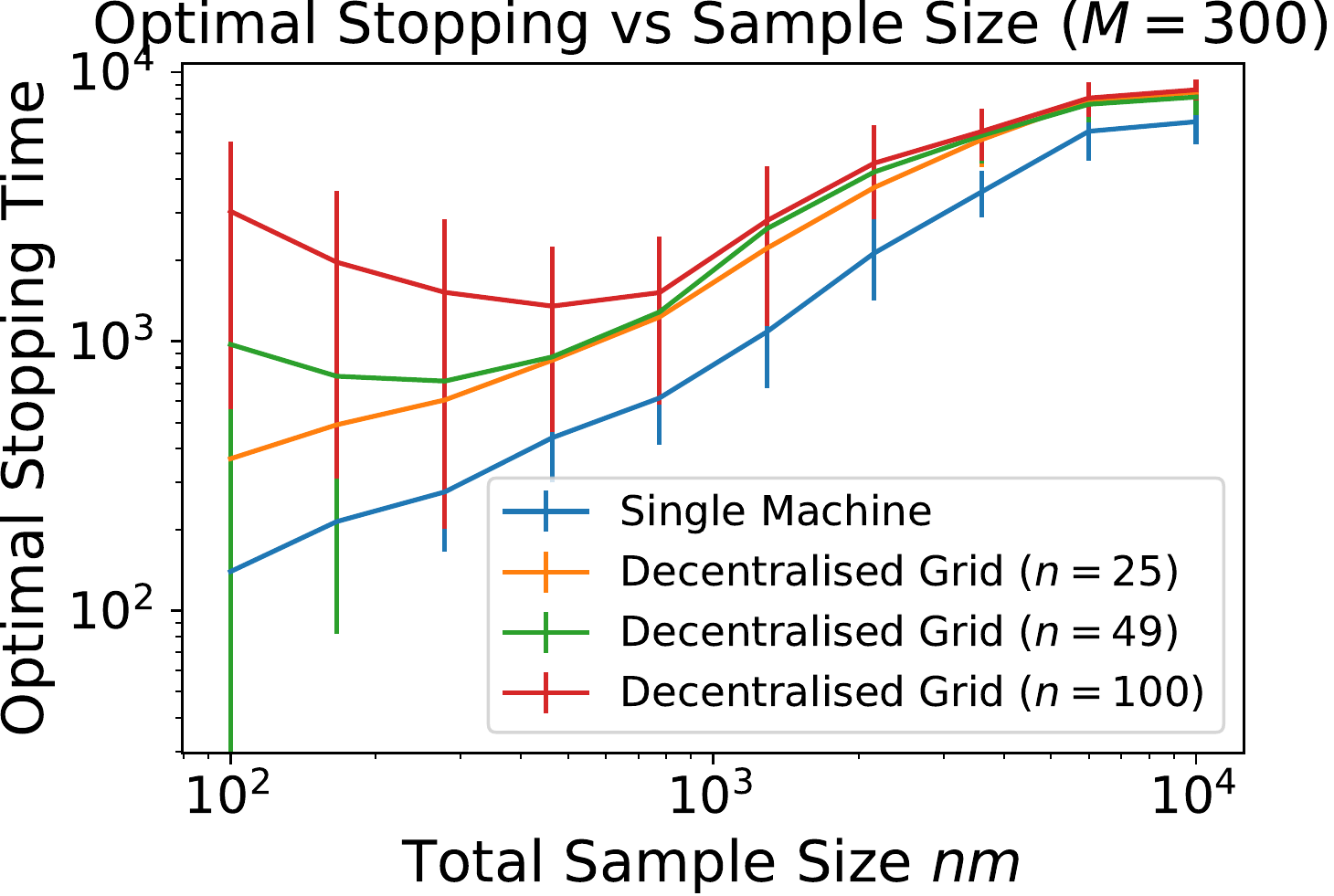}
    \caption{ Optimal Stopping Time (Number of iterations required) against sample size $nm$ ($\log-\log$ axis), with $M=300$. \textit{Left}: Cycle Topology, \textit{Right}: Grid topology. Each point is averaged over 20 sub-subsets of the SUSY. Distributed Gradient Descent with Random Features was repeated 5 times, with at most $10^{4}$ iterations.  
    }
    \label{fig:Plots3}
\end{figure}

\section{Conclusion}
\label{sec:Conclusion}
In this work we considered the performance of Distributed Gradient Descent with Random Features on the Generalisation Error, this being different from previous works which focused on training loss. Our analysis allowed us to understand the role of different parameters on the Generalisation error, and, when agents have sufficiently many samples with respect to the network size,  achieve a linear speed-up in runtime time for any network topology. 

Moving forward, it would be natural to extend our analysis to stochastic gradients \cite{lin2017optimal} or stochastic communication at each iteration \cite{Gossip}.

\section*{Acknowledgements}
D.R.~is supported by the EPSRC and MRC through the OxWaSP CDT programme
(EP/L016710/1).
Part of this work has been carried out at the Machine Learning Genoa (MaLGa) center, Università di Genova (IT).
L.R.~acknowledges the financial support of the European Research Council (grant SLING 819789), the AFOSR projects FA9550-17-1-0390  and BAA-AFRL-AFOSR-2016-0007 (European Office of Aerospace Research and Development), and the EU H2020-MSCA-RISE project NoMADS - DLV-777826.

\bibliographystyle{plain}

\bibliography{References}

\newpage 
\appendix

\section{Remarks}
\label{sec:Remarks}
In this section we give a number of remarks relating to content within the main body of the paper.

\begin{remark}[Sketching and Communication Savings]
 \label{remark:communication}
 We highlight that the Random Feature framework considered also incorporates a number of sketching techniques. For instance, when $\psi(x,\omega) = x^{\top} \omega$ where $\omega \sim \mathcal{N}(0,I)$ and the associated kernel is simply linear as \\ 
 $\E[\psi(x,\omega)\psi(x^{\prime},\omega)] \E[x^{\top} \omega \omega^{\top} x] = x^{\top} \E[ \omega \omega^{\top}] x = x^{\top} x^{\prime}$. The case $M < D$ then represents a simple setting in which communication savings can be achieved, as agents in this case would only need to communicate an $M$ dimensional vector instead of $D$.  A natural future direction would be to investigate whether there exists particular sketches/Random Features tailored to the objective of communication savings, in a similar manner to Orthogonal Random Features \cite{yu2016orthogonal}, Fast Food \cite{le2013fastfood} or Low-precision Random Features \cite{zhang2019low}.  Although, as noted in \cite{carratino2018learning},  some of these methods sample the features in a correlated manner, and thus, do not fit within the assumptions of this work. 
 \end{remark}

\begin{remark}[Previous Literature Decentralised Kernel Methods]
\label{remark:PreviousLit}
This remark highlights two previous works for Decentralised Kernel Methods. 
The work \cite{forero2010consensus} considers decentralised Support Vector Machines with potentially high-dimensional finite feature spaces that could approximate a non-linear kernel. They develop a variant of the Alternating Direction Method of Multiplers (ADMM) to target the augmented optimisation problem. In this case, the high-dimensional constraints across the agents are approximated  so the agents local estimated functions are equal on a subset of chosen points. Meanwhile \cite{koppel2018decentralized} consider online stochastic optimisation with penalisation between neighbouring agents. The penalisation introduced is an expectation with respect to a newly sampled data point and not in the norm of the Reproducing Kernel Hilbert Space. In both of these cases, the original optimisation problem is altered to facilitate a decentralised algorithm, but no guarantee is given on how these approximation impact statistical performance. 
\end{remark}
\begin{remark}[Concurrent Work]
\label{remark:COKE}
The concurrent work \cite{xu2020coke} consider the homogeneous setting where a network of agents have data from the same distribution and wish to learn a function within a RKHS that performs well on unseen data. The consensus optimisation formulation of the single machine explicitly penalised kernel learning problem is considered, and the challenges of decentralised kernel learning (as described in Section \ref{sec:DecentralisedKR} in the main body of the manuscript) are overcome by utilising Random Fourier Features. An ADMM method is developed to solve the consensus optimisation problem, and, provided hyper-parameters are tuned appropriately, optimisation guarantees are given. Due to considering the consensus optimisation formulation of a single machine penalised problem, the Generalisation Error is decoupled from the Optimisation Error. Therefore, while optimisation results for ADMM applied to consensus optimisation objectives \cite{shi2014linear} are applied, the statistical setting is not leveraged to achieve speed-ups.  It is then not clear how the network connectivity, number of samples held by agents and finer statistical assumptions (source and capacity) impacts either generalisation or optimisation performance. This is in contrast to our work, where we directly study the Generalisation Error of Distributed Gradient Descent with Implicit Regularisation, and show how the number of samples held by agents, network topology, step size and number of iterations can impact Generalisation Error. 
\end{remark}

\section{Analysis Setup}
This section provides the setup for the analysis. We adopt the notation of \cite{carratino2018learning}, which is included here for completeness. Section \ref{sec:AuxSequences} introduces additional auxiliary quantities required for the analysis. Section \ref{app:Notation} introduces notation for the operators required for the analysis. Section \ref{app:ErrorDecomp} introduces the error decomposition.

\subsection{Additional Auxiliary Sequences}
\label{sec:AuxSequences}
We begin by introducing some auxiliary sequences that will be useful in the analysis. Begin by defining $\{v_{t}\}_{t \geq 1}$ initialised at $v_{1} = 0$ and updated for $t \geq 1$
and updated 
\begin{align*}
    & v_{t+1} = v_t - 
    \eta 
    \int_{X} \big( \langle v_{t},\phi_{M}(x)\rangle - f_{\mathcal{H}}(x)\big) \phi_{M}(x) d \rho_{X}(x)
\end{align*}
Further for $\lambda > 0$ let 
\begin{align*}
    & \widetilde{u}_{\lambda} = \argmin_{u \in \mathbb{R}^{M}} \int_{X} 
    \big( \langle u , \phi_{M}(x) \rangle - f_{\mathcal{H}}(x)\big)^2 d \rho_{X}(x) + \lambda \|u\|^2,\\
    & u_{\lambda} = \argmin_{ u \in \mathcal{F} } 
    \int_{X} \big( \langle u , \phi(x) \rangle - y)^2 d \rho(x,y) + \lambda \|u\|^2, 
\end{align*}
where $(\mathcal{F},\phi)$ are feature space and feature map associated to the kernel $k$. As described previously, it will be useful to work with functions in  $L^2(X,\rho_{X})$, therefore define the functions 
\begin{align*}
    g_{t}  = \langle v_{t}, \phi_{M}(\cdot) \rangle, 
    \quad 
    \widetilde{g}_{\lambda} = \langle \widetilde{u}_{\lambda},\phi_{M}(\cdot)\rangle,
     \quad 
    g_{\lambda} = \langle u_{\lambda},\phi(\cdot)\rangle.
\end{align*}
The quantities introduced here in this section will be useful in analysing the \textit{Statistical Error} term.

\subsection{Notation}
\label{app:Notation}
Let $\mathcal{F}$ be the feature space corresponding to the kernel $k$ given by Assumption \ref{ass:FeatureRegularity}. 

Given $\phi: X \rightarrow \mathcal{F}$ (feature map), we define the operator $S: \mathcal{F} \rightarrow L^2(X,\rho_{X})$ as 
\begin{align*}
    (S \omega)(\cdot)  = \langle \omega, \phi(\cdot) \rangle_{\mathcal{F}},\quad 
    \forall \omega \in \mathcal{F}.
\end{align*}
If $S^{\star}$ is the adjoint operator of $S$, we let $C: \mathcal{F} \rightarrow \mathcal{F}$ be the linear operator $C = S^{\star}S$, which can be written as 
\begin{align*}
    C = \int_{X} \phi(x) \otimes \phi(x) d \rho_{X}(x).
\end{align*}
We also define the linear operator $L: L^2(X,\rho_{X}) \rightarrow L^2(X,\rho_{X})$ such that $L = S S^{\star}$, that can be represented as 
\begin{align*}
    (Lf)(\cdot) = \int_{X} \langle \phi(x),\phi(\cdot) \rangle_{\mathcal{F}} f(x) d \rho_{X}(x), 
    \quad 
    \forall f \in L^2(X,\rho_{X}). 
\end{align*}

We now define the analog of the previous operators where we use the feature map $\phi_{M}$ instead of $\phi$. We have $S_{M}:\mathbb{R}^{M} \rightarrow L^2(X,\rho_{X})$ defined as
\begin{align*}
    (S_{M} v)(\cdot) = \langle v,\phi_{M}(\cdot) \rangle_{\mathbb{R}^{M}}, \quad 
    \forall v \in \mathbb{R}^{M}
\end{align*}
together with $C_{M}: \mathbb{R}^{M} \rightarrow \mathbb{R}^{M}$and $L_{M}: L^2(X,\rho_{X}) \rightarrow L^2(X,\rho_{X})$ defined as $C_{M} = S^{\star}_{M} S_{M}$ and $L_{M} = S_{M} S^{\star}_{M}$ respectively. For $v \in \mathbb{R}^{M}$ note we have the equality 
\begin{align}
    \|S_M v \|_{\rho}^2 & = \int_{X} \langle v , \phi_{M}(x) \rangle^2 d \rho_{X}(x) \nonumber \\
    & = \int_{X} v^{\top} \phi_{M}(x) \otimes \phi_{M}(x) v  d \rho_{X}(x) \nonumber \\
    & = v^{\top} C_{M} v \nonumber \\
    & = \|C_{M}^{1/2} v\|^2 \label{equ:Isometry}
\end{align}
where we have denoted the standard Euclidean norm as $\|\cdot\|$.
Define the empirical counterpart of the previous operators for each agent. For each agent $v \in V$ define the operator $\widehat{S}_{M}^{(v)}: \mathbb{R}^{M} \rightarrow \mathbb{R}^{m}$ as 
\begin{align*}
    \widehat{S}^{(v) \top}_{M}  = \frac{1}{\sqrt{m}}(\phi_{M}(x_{1,v}),\dots,\phi_{M}(x_{m,v})),
\end{align*}
and with $\widehat{C}_{M}^{(v)}:\mathbb{R}^{M}\rightarrow \mathbb{R}^{M}$ and $\widehat{L}_{M}^{(v)}:\mathbb{R}^{m} \rightarrow \mathbb{R}^{m}$ are defined as $\widehat{C}_{M}^{(v)} = \widehat{S}^{(v) \top}_{M} \widehat{S}^{(v)}_{M}$ and $\widehat{L}_{M}^{(v)} = \widehat{S}^{(v)}_{M} \widehat{S}^{(v) \top}_{M}$ respectively. Moreover, define the empirical operators associated to all of the samples held by agents in the network. To do so index the agents in $V$ between $1$ and $n$, so $x_{i,j}$ is the $i$th data point held by agent $j$. Then, define the operator $\widehat{S}_{M}: \mathbb{R}^{M} \rightarrow \mathbb{R}^{nm}$ as 
\begin{align*}
    \widehat{S}^{\top}_{M} 
    & = \frac{1}{\sqrt{nm}}(\phi_M(x_{1,1}),\dots,\phi_{M}(x_{m,1}),\phi_{M}(x_{1,2}),\dots,\phi_{M}(x_{m,2}),\dots,\phi_{M}(x_{1,n}),\dots,\phi_{M}(x_{m,n}))\\
    & = 
    \frac{1}{\sqrt{n}}( \widehat{S}^{(1) \top}_{M},\dots,\widehat{S}^{(n) \top}_{M})
\end{align*}
and with $\widehat{C}_{M}:\mathbb{R}^{M}\rightarrow \mathbb{R}^{M}$ and $\widehat{L}_{M}:\mathbb{R}^{nm} \rightarrow \mathbb{R}^{nm}$ are defined as $\widehat{C}_{M} = \widehat{S}^{\top}_{M} \widehat{S}_{M}$ and $\widehat{L}_{M} = \widehat{S}_{M} \widehat{S}^{\top}_{M}$ respectively. From the above it is clear that we have $\widehat{C}_{M} = \frac{1}{n} \sum_{w \in V} \widehat{S}^{(w) \top}_{M}\widehat{S}^{(w)}_{M}= \frac{1}{n} \sum_{w \in V} C^{(w)}_{M} $. For some number $\lambda > 0 $  we let the operator plus the identity times $\lambda$ be denoted $L_{\lambda} = L + \lambda I$, and similarly for $\widehat{L}_{\lambda} $, as well as  $C_{M,\lambda} = C_{M} + \lambda I$ and $\widehat{C}_{M,\lambda}$.

\begin{remark}
Let $P: L^2(X,\rho_{X}) \rightarrow  L^2(X,\rho_{X})$ be the projection operator whose range is the closure of the range of $L$. Let $f_{\rho}:X \rightarrow \mathbb{R}$ be defined as 
\begin{align*}
    f_{\rho}(x)  = \int y d\rho(y|x).
\end{align*}
If there exists $f_{\mathcal{H}} \in \mathcal{H}$ such that 
\begin{align*}
    \inf_{f \in \mathcal{H}} \mathcal{E}(f) = \mathcal{E}(f_{\mathcal{H}})
\end{align*}
then 
\begin{align*}
    P f_{\rho} = S f_{\mathcal{H}}.
\end{align*}
or equivalently, there exists $g \in L^2(X,\rho_{X})$ such that 
\begin{align*}
    P f_{\rho} = L^{1/2} g.
\end{align*}
In particular, we have $R := \|f_{\mathcal{H}}\|_{\mathcal{H}} = \|g\|_{L^2(X,\rho_{X})}$. The above condition is commonly relaxed in approximation theory as 
\begin{align*}
    P f_{\rho} = L^r g
\end{align*}
with $1/2 \leq r \leq 1$. 
\end{remark}

With the operators introduced above and the above remark, we can rewrite the auxiliary objects respectively as 
\begin{align*}
    & \widehat{v}_{1} = 0; 
    \quad
    \widehat{v}_{t+1} = (I - \eta \widehat{C}_{M}) \widehat{v}_{t} + \eta \widehat{S}_{M}^{\top} \widehat{y} \\
    & \widetilde{v}_{1} = 0; 
    \quad 
    \widetilde{v}_{t+1} = (I - \eta C_{M})\widetilde{v}_{t} 
    + 
    \eta S^{\star}_{M} f_{\rho} \\
    & v_1 = 0 ; 
    \quad
    v_{t+1} = (I - \eta C_{M}) v_{t} + 
    \eta S^{\star}_{M} P f_{\rho} 
\end{align*}
where the vector of all $nm$ responses are $\widehat{y}^{\top} = (nm)^{-1/2}(y_{1,1},\dots,y_{1,m},y_{2,m},\dots,y_{n,m}) = (n)^{-1/2} (\widehat{y}_{1},\dots,\widehat{y}_{n})$, and each agents responses are, for $i = 1,\dots,n$, denoted $\widehat{y}_{v} = (m)^{-1/2}(y_{i,1}.\dots,y_{i,m})$. We then denote 
\begin{align*}
    & \widetilde{u}_{\lambda} = S^{\star}_{M}L^{-1}_{M,\lambda} P f_{\rho} \\
    & u_{\lambda}  = S^{\star} L_{\lambda}^{-1} P f_{\rho}. 
\end{align*}
Inductively the three sequences can be written as 
\begin{align*}
    & \widehat{v}_{t+1}  = \sum_{k=1}^{t} \eta (I - \eta \widehat{C}_{M})^{t-k} \widehat{S}_{M}^{\top} \widehat{y} \\
    & \widetilde{v}_{t+1} = \sum_{k=1}^{t} \eta (I - \eta C_{M})^{t-k} S^{\star}_{M} f_{\rho} \\
    & v_{t+1} =  \sum_{k=1}^{t} \eta (I - \eta C_{M})^{t-k} S^{\star}_{M} P f_{\rho} 
\end{align*}
\subsection{Error Decomposition}
\label{app:ErrorDecomp}
We can now write the deviation $\widehat{f}_{t+1,v} - f_{\mathcal{H}}$ using the operators 
\begin{align}
\label{equ:ErrorDecomp}
    \widehat{f}_{t+1,v} - f_{\mathcal{H}}
    & = 
    \underbrace{ S_{M} \widehat{\omega}_{t+1,v} - S_{M} \widehat{v}_{t}}_{\text{Network Error}}
    + 
    \underbrace{ S_{M} \widehat{v}_{t} - P f_{\rho} }_{\text{Statistical Error}}
\end{align}
where the first term aligns with the network error and the second with the statistical error. Each of these will be analysed in it own section. 

\section{Statistical Error}

In this section we summarise the analysis for the Statistical Error which has been conducted within \cite{carratino2018learning}. Here we provided the proof for completeness. Firstly, we further decompose the statistical error into the following terms 
\begin{align}
\label{equ:StatErrorDecomp}
    \|S_{M} \widehat{v}_{t+1} - P f_{\rho}\|_{\rho} 
    \leq
    & \underbrace{ 
    \| S_{M} \widehat{v}_{t+1} - S_{M}\widetilde{v}_{t+1} 
    +
    S_{M}\widetilde{v}_{t+1} - S_{M} v_t \|_{\rho} }_{\text{Sample Error}}  
    +
    \underbrace{ \| S_{M} v_{t+1} - L_{M} L_{M,\lambda}^{-1} P f_{\rho}\|_{\rho}  }_{\text{Gradient Descent and Ridge Regression}}\\
    & +\quad 
    \underbrace{ \| L_{M}L_{M,\lambda}^{-1} P f_{\rho} - L L^{-1}_{\lambda}P f_{\rho} \|_{\rho} }_{\text{Random Features Error}} 
    +
    \underbrace{ \| L L_{\lambda}^{-1}P f_{\rho} - P f_{\rho}\|_{\rho} }_{\text{Bias}}
    \nonumber
\end{align}
Each of the terms have been labelled to help clarity. The first term, \textit{sample error} includes the difference between the empirical iterations with sampled data $\widehat{v}_{t}$, as well as iterates under the population measure $v_t$. The second term \textit{Gradient Descent and Ridge Regression}  is the difference between the population variants of the Gradient Descent $v_t$ and ridge regression  $L_{M} L_{M,\lambda}^{-1} P f_{\rho}$ solutions. The third term \textit{Random Feature Error} accounts for the error introduced from using Random Features. Finally the \textit{Bias} term accounts for the bias introduced due to the regularisation. Each of these terms will be bounded within their own sub-section, except the \textit{Bias} term which will be bounded when bounds for all of the terms are brought together. 

The remainder of this section is then as follows. 
Section \ref{app:SampleError}, \ref{app:GradDescRR} and \ref{app:RandomFeatureError} give the analysis for the Sample Error, Gradient Descent and Ridge Regression and Random Feature Error error respectively. Section \ref{app:CombinedStatError} bounds the Bias and combines bounds for the previous terms.

\subsection{Sample Error}
\label{app:SampleError}
The bound for this term is summarised within the following Lemma which itself comes from Lemma 1 and 6 in \cite{carratino2018learning}.
\begin{lemma}[Sample Error]
\label{lem:SampleError}
Under assumptions \ref{ass:FeatureRegularity}, \ref{ass:moment} and \ref{ass:RKHS} , let $\delta \in (0,1)$, $\eta \in (0,\kappa^{-2})$. When 
\begin{align*}
    M \geq \big( 4 + 18 \eta t \big) \log \frac{12 \eta t}{\delta} 
\end{align*}
for all $t \geq 1$ with probability atleast $1-3\delta$ 
\begin{align*}
    & \| S_{M} \widehat{v}_{t} - S_{M}\widetilde{v}_{t} 
    +
    S_{M}\widetilde{v}_{t} - S_{M} v_t \|_{\rho}
    \leq 4 \Big( R \kappa^{2r} \Big( 1 + \sqrt{\frac{9}{M} \log \frac{M}{\delta}} \big( \sqrt{ \eta t} \vee 1 \big) \Big) 
    + 
    \sqrt{B} \Big)   \\
    & \quad \times\big(  12 + 4 \log (t) + \sqrt{ 2} \eta \big) 
    \Big( \frac{\sqrt{\eta t}}{nm} + \frac{\sqrt{ 2  \sqrt{p} q_0 \mathcal{N}(\frac{\kappa^2}{\eta t }) }}{\sqrt{nm}}\Big) 
    \log \frac{4}{\delta}
\end{align*}
where $q_0 = \max \big(2.55, \frac{2 \kappa^2}{\|L\|} \big)$
\end{lemma}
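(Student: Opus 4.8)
The plan is to exploit the linear recursions recorded above, under which the three sequences admit the closed forms $\widehat{v}_{t+1} = \sum_{k=1}^t \eta (I-\eta\widehat{C}_M)^{t-k}\widehat{S}_M^\top \widehat{y}$, $\widetilde{v}_{t+1} = \sum_{k=1}^t \eta(I-\eta C_M)^{t-k}S_M^\star f_\rho$, and $v_{t+1}$ the same with $Pf_\rho$ in place of $f_\rho$. By the isometry \eqref{equ:Isometry}, $\|S_M u\|_\rho = \|C_M^{1/2}u\|$, so it suffices to bound $\|C_M^{1/2}(\widehat{v}_t - \widetilde{v}_t)\|$ and $\|C_M^{1/2}(\widetilde{v}_t - v_t)\|$ separately, exactly as the statement is phrased. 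Throughout I would fix the reference regularisation level $\lambda \simeq 1/(\eta t)$, which is the effective penalty induced by running $t$ gradient steps of size $\eta$, and work relative to the regularised covariance $C_{M,\lambda} = C_M + \lambda I$, inserting factors $C_{M,\lambda}^{1/2}C_{M,\lambda}^{-1/2}$ wherever operator norms must be transferred between the empirical and population geometries.

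For the stochastic piece $C_M^{1/2}(\widehat{v}_t - \widetilde{v}_t)$ I would follow the resolvent route: subtract the two telescoping sums and separate the difference into (i) a \emph{variance} contribution driven by the centred noise $\widehat{S}_M^\top \widehat{y} - S_M^\star f_\rho$, and (ii) an \emph{operator--perturbation} contribution driven by $\widehat{C}_M - C_M$ acting through the mismatch of the contractions $(I-\eta\widehat{C}_M)^{t-k}$ and $(I-\eta C_M)^{t-k}$. For (i), conditionally on the covariates the noise is zero mean, and Assumption \ref{ass:moment} supplies the Bernstein moment control; the key point is that its variance proxy is governed by the effective dimension, so a vector Bernstein inequality yields a bound of order $\sqrt{\sqrt{p}\,\mathcal{N}(\lambda)/(nm)}$ with $\lambda \simeq \kappa^2/(\eta t)$, together with the lower-order $\sqrt{\eta t}/(nm)$ term. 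For (ii) I would avoid expanding the non-commuting product directly, instead using a Duhamel/telescoping identity together with the concentration of $\|C_{M,\lambda}^{-1/2}(\widehat{C}_M - C_M)C_{M,\lambda}^{-1/2}\|$ at rate $1/\sqrt{nm}$, and bounding the contraction factors analytically through $\sup_{\sigma\in(0,\kappa^2]}\sigma^{1/2}(1-\eta\sigma)^{t-k}\lesssim ((t-k)\eta)^{-1/2}$ before summing the resulting near-geometric series over $k$.

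The random-feature factor $1+\sqrt{(9/M)\log(M/\delta)}(\sqrt{\eta t}\vee 1)$ enters when the source term $R\kappa^{2r}$ is transported from the true covariance $C$ to its Monte-Carlo surrogate $C_M$: here I would invoke a random-feature concentration result (as in \cite{rudi2017generalization}) to control $\|C_\lambda^{1/2}C_{M,\lambda}^{-1/2}\|$, which is precisely where the hypothesis $M \geq (4+18\eta t)\log(12\eta t/\delta)$ is consumed, since with $\lambda\simeq 1/(\eta t)$ and bounded features the relevant intrinsic dimension is $\lesssim \eta t$. The remaining piece $C_M^{1/2}(\widetilde{v}_t - v_t)$ is purely deterministic, reflecting only the discrepancy $S_M^\star(f_\rho - Pf_\rho)$ between targeting $f_\rho$ and its projection, and is absorbed into the additive $\sqrt{B}$ constant. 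A union bound over the three high-probability events (noise, covariance perturbation, feature perturbation), each at level $\delta$, then yields the stated $1-3\delta$ guarantee. \textbf{Main obstacle.} I expect step (ii) to be the crux: handling the perturbation of the non-commuting contractions while retaining the sharp dependence on $\mathcal{N}(\lambda)$ and the $\sqrt{\eta t}$ factors forces a careful interplay of the Cordes inequality $\|A^s B^s\|\le\|AB\|^s$, resolvent identities, and the matched choice $\lambda\simeq 1/(\eta t)$ — the same bookkeeping carried out in Lemmas 1 and 6 of \cite{carratino2018learning}, from which this statement is assembled.
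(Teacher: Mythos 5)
Your overall route is the paper's route: the statement is assembled verbatim from Lemmas~1 and~6 of \cite{carratino2018learning} (with $\theta=0$, $T=t$), and the paper's proof is nothing more than that citation, so your reconstruction of the stochastic piece $\|C_M^{1/2}(\widehat v_t-\widetilde v_t)\|$ --- splitting into a noise term $\widehat S_M^{\top}\widehat y - S_M^{\star}f_\rho$ controlled by Bernstein via $\mathcal N(\lambda)$ with $\lambda\simeq \kappa^2/(\eta t)$, plus an operator-perturbation term handled through $C_{M,\lambda}^{-1/2}(\widehat C_M-C_M)$ and the contraction bound $\sigma^{1/2}(1-\eta\sigma)^{t-k}\lesssim((t-k)\eta)^{-1/2}$, with the condition on $M$ consumed to pass between the $C_M$ and $C$ geometries --- is faithful to how that lemma is actually proved, and the $1-3\delta$ union bound matches.

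The one step that would fail as written is your treatment of $\|C_M^{1/2}(\widetilde v_t - v_t)\|$. You propose to view it as the deterministic discrepancy $S_M^{\star}(f_\rho - Pf_\rho)$ propagated through the iteration and to ``absorb it into the additive $\sqrt{B}$ constant.'' There is no mechanism for that: $\sqrt{B}$ comes from the response-moment Assumption~\ref{ass:moment} and controls the noise term, and it gives you no handle on $S_M^{\star}(I-P)f_\rho$, which a priori could be of constant order. The correct resolution --- and the entire content of Lemma~1 of \cite{carratino2018learning}, which the paper invokes for exactly this purpose --- is that $S_M^{\star}f_\rho = S_M^{\star}Pf_\rho$, because the range of $S_M$ lies in the range of $P$ (the closure of the range of $L$), so $\widetilde v_t = v_t$ for every $t$ and the term is identically zero. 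Since the term does vanish, your sketch reaches the right bound, but the justification you give for discarding it is not one that could be carried out; you need the exact cancellation, not an absorption into constants.
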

\begin{proof}
Apply Lemma 1 in \cite{carratino2018learning} to say $\| S_{M}\widetilde{v}_{t} - S_{M} v_t \|_{\rho} = 0$, meanwhile Lemma 6 in the same work to bound $\| S_{M} \widehat{v}_{t} - S_{M}\widetilde{v}_{t} \|$ with $\theta = 0$ and $T = t$.
\end{proof}

\subsection{Gradient Descent and Ridge Regression}
\label{app:GradDescRR}
This term is controlled by Lemma 9 in \cite{carratino2018learning}.
\begin{lemma}[Gradient Descent and Ridge Regression]
\label{lem:GDRR}
Under Assumption \ref{ass:RKHS} the following holds with probability $1-\delta$ for $\lambda = \frac{1}{\eta t}$ 
for $t \geq 1$
\begin{align*}
    \| S_{M} v_{t+1} - L_{M} L_{M,\lambda}^{-1} P f_{\rho}\|_{\rho} 
    & \leq 8 R \kappa^{2r}\Big( \frac{\log \frac{2}{\delta}}{M^r} 
    + 
    \sqrt{ \frac{ \mathcal{N}\big(\frac{1}{\eta t} \big)^{2r - 1} \log \frac{2}{\delta} }{M (\eta t)^{2r-1}}} \Big) 
    \log^{1-r}\big( 11 \kappa^2 \eta t\big) + \frac{2R}{(\eta t)^{r}}
\end{align*}
when 
\begin{align*}
    M \geq ( 4 + 18 \eta t ) \log \big( \frac{8 \kappa^2 \eta t }{\delta} \big)
\end{align*}
\end{lemma}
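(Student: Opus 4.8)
The plan is to recognise both objects as spectral filters of the single operator $C_M$ acting on the common vector $S_M^\star P f_\rho$. From the inductive closed form $v_{t+1}=\sum_{k=1}^{t}\eta(I-\eta C_M)^{t-k}S_M^\star Pf_\rho=g_t(C_M)S_M^\star Pf_\rho$, where $g_t(\sigma)=\eta\sum_{j=0}^{t-1}(1-\eta\sigma)^j=\sigma^{-1}\big(1-(1-\eta\sigma)^t\big)$ is the gradient-descent filter. On the other side, the push-through identity $S_M^\star(L_M+\lambda I)^{-1}=(C_M+\lambda I)^{-1}S_M^\star=C_{M,\lambda}^{-1}S_M^\star$ gives $L_M L_{M,\lambda}^{-1}Pf_\rho=S_M C_{M,\lambda}^{-1}S_M^\star Pf_\rho=S_M\widetilde u_\lambda$. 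Hence the difference is $S_M\big(v_{t+1}-\widetilde u_\lambda\big)$, and applying the isometry \eqref{equ:Isometry} reduces the task to a Euclidean-norm bound on $\big\|C_M^{1/2}\big(g_t(C_M)-C_{M,\lambda}^{-1}\big)S_M^\star Pf_\rho\big\|$. I would fix $\lambda=1/(\eta t)$ at the outset so the two filters are matched at the effective scale $\lambda$.

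Second, I would bring in the source condition and isolate the deterministic bias. Writing $P f_\rho=L^{r}g$ with $\|g\|_\rho=R$, and using the feature-space identity $S_M^\star L_M^{r}=C_M^{r}S_M^\star$, I split $S_M^\star Pf_\rho=C_M^{r}S_M^\star g+S_M^\star\big(L^{r}-L_M^{r}\big)g$. On the first, \emph{matched}, summand the quantity to control is a spectral-filter expression of the form $C_M^{1/2}\big(g_t(C_M)-C_{M,\lambda}^{-1}\big)C_M^{r}$ applied to a vector of norm at most $\kappa R$; since both the gradient-descent and the Tikhonov filters have qualification at least $1\ge r$, the standard filter estimate yields a deterministic bias of order $\lambda^{r}=(\eta t)^{-r}$, which is exactly the $2R/(\eta t)^{r}$ term in the statement. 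The logarithmic factor $\log^{1-r}(11\kappa^2\eta t)$ is picked up when the filter is summed/integrated across the spectrum.

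Third, the real content sits in the \emph{mismatch} summand $C_M^{1/2}\big(g_t(C_M)-C_{M,\lambda}^{-1}\big)S_M^\star(L^{r}-L_M^{r})g$, which is where the random-feature approximation enters. The main obstacle is that the source condition lives on the true kernel operator $L=SS^\star$ while the iterates live on the random-feature operator $L_M=S_M S_M^\star$; bridging these \emph{while keeping the exponent $r$ sharp} forces the use of fractional-power operator perturbation bounds rather than a crude $r=\tfrac12$ argument. I would control $L^{r}-L_M^{r}$ and the relevant $\lambda$-weighted operators through an operator Bernstein inequality on $\big\|(L+\lambda I)^{-1/2}(L-L_M)(L+\lambda I)^{-1/2}\big\|$, together with the effective dimension $\mathcal N(\lambda)=\trace\big((L+\lambda I)^{-1}L\big)$. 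This produces the two high-probability random-feature terms $R\kappa^{2r}M^{-r}$ and $R\kappa^{2r}\sqrt{\mathcal N(\lambda)^{2r-1}/\big(M(\eta t)^{2r-1}\big)}$. The hypothesis $M\ge(4+18\eta t)\log(8\kappa^2\eta t/\delta)$ is precisely the feature count needed for the concentration $\big\|(L+\lambda I)^{-1/2}(L-L_M)(L+\lambda I)^{-1/2}\big\|\le\tfrac12$ to hold (using the worst-case $\mathcal N(\lambda)\le\kappa^2/\lambda=\kappa^2\eta t$) with probability at least $1-\delta$, so that fractional powers of $L_{M,\lambda}$ and $L_\lambda$ are interchangeable up to constants.

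Finally I would collect the pieces: the deterministic bias $2R/(\eta t)^{r}$ together with the two random-feature terms, each carried by the prefactor $8R\kappa^{2r}$ and the correction $\log^{1-r}(11\kappa^2\eta t)$, giving the stated inequality on the event of probability $1-\delta$. The delicate step throughout — and the one I expect to be the crux — is maintaining the exponent $r$ (rather than degrading it to $1/2$) when transferring between $L$ and $L_M$, which is what makes the bound adapt to the source condition.
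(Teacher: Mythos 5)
The paper does not actually prove this lemma: it is imported verbatim as Lemma~9 of \cite{carratino2018learning}, so your sketch is being compared against that reference rather than against anything in the present manuscript. Your overall strategy — identify $S_Mv_{t+1}$ and $L_ML_{M,\lambda}^{-1}Pf_\rho$ as two spectral filters of the same random-feature operator, transfer the source condition from $L$ to $L_M$ to extract a deterministic bias of order $\lambda^r$, and charge the mismatch $L^r-L_M^r$ to concentration measured through $\mathcal N(\lambda)$, with the condition on $M$ guaranteeing $\|L_\lambda^{-1/2}(L-L_M)L_\lambda^{-1/2}\|\le\tfrac12$ so that $\lambda$-weighted powers of $L_\lambda$ and $L_{M,\lambda}$ are interchangeable — is exactly the structure of that proof, including the role of the Cordes-type interpolation that produces the exponent $2r-1$ and the $\log^{1-r}$ correction.

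There is one quantitative slip in your matched term that, as written, would fail to give the stated $2R/(\eta t)^r$. You bound the vector $S_M^\star g$ by $\kappa R$ and then apply the filter estimate to $C_M^{1/2}\bigl(g_t(C_M)-C_{M,\lambda}^{-1}\bigr)C_M^{r}$; since $\sigma^{1/2+r}\bigl|g_t(\sigma)-(\sigma+\lambda)^{-1}\bigr|=\sigma^{r-1/2}\bigl|\tfrac{\lambda}{\sigma+\lambda}-(1-\eta\sigma)^t\bigr|\lesssim\lambda^{r-1/2}$, this route only yields $\kappa R\,\lambda^{r-1/2}$, which is strictly worse than $\lambda^r$. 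The missing half power is hiding in $S_M^\star$: writing the polar decomposition $S_M^\star=C_M^{1/2}V^\star$ with $V$ a partial isometry (so $\|V^\star g\|\le R$), the matched term becomes $\bigl\|C_M^{1+r}\bigl(g_t(C_M)-C_{M,\lambda}^{-1}\bigr)\bigr\|\,R\le 2R\lambda^r$, since $\sup_\sigma\sigma^{r}\bigl|\tfrac{\lambda}{\sigma+\lambda}-(1-\eta\sigma)^t\bigr|\le 2\lambda^r$ for $r\le1$. Equivalently — and this is how the cited proof organises it — work entirely in $L^2(X,\rho_X)$, noting $S_Mv_{t+1}=\bigl(I-(I-\eta L_M)^t\bigr)Pf_\rho$ and $L_ML_{M,\lambda}^{-1}Pf_\rho=\bigl(I-\lambda L_{M,\lambda}^{-1}\bigr)Pf_\rho$, so the difference is $\bigl(\lambda L_{M,\lambda}^{-1}-(I-\eta L_M)^t\bigr)L^rg$ and the two residual filters hit $L^rg$ directly with their full qualification. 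With that repair your argument goes through and matches the reference.
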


\subsection{Random Features Error}
\label{app:RandomFeatureError}
The following Lemma is from Lemma 8 of \cite{rudi2017generalization,carratino2018learning}.
\begin{lemma}
\label{lem:RFFError}
Under assumption \ref{ass:FeatureRegularity} and \ref{ass:RKHS} for any $\lambda > 0$, $\delta \in (0,1/2]$, when 
\begin{align*}
    M \geq \big( 4 + \frac{18 \kappa^2}{\lambda}\big) \log \frac{8 \kappa^2}{\lambda \delta}
\end{align*}
the following holds with probability at least $1-2 \delta$
\begin{align*}
    \| L_{M}L_{M,\lambda}^{-1} P f_{\rho} - L L^{-1}_{\lambda}P f_{\rho} \|_{\rho}
    \leq 
    4 R \kappa^{2r} \Bigg( \frac{\log \frac{2}{\delta}}{M^r} + \sqrt{\frac{\lambda^{2r-1} \mathcal{N}(\lambda)^{2r - 1} \log \frac{2}{\delta}}{M}} \Bigg) q^{1-r}
\end{align*}
where $q = \log \frac{11 \kappa^2}{\lambda}$
\end{lemma}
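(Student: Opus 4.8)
The plan is to reduce the claim to a single operator-concentration estimate on $L^2(X,\rho_X)$ and then transfer it through the source condition. First I would rewrite both objects as resolvents acting on $P f_\rho$. Using the identity $L(L+\lambda I)^{-1} = I - \lambda (L+\lambda I)^{-1}$ and its analogue for $L_M$, together with the resolvent identity, the difference collapses to
\begin{align*}
    L_M L_{M,\lambda}^{-1} P f_\rho - L L_\lambda^{-1} P f_\rho
    = \lambda\big( L_\lambda^{-1} - L_{M,\lambda}^{-1}\big) P f_\rho
    = \lambda\, L_{M,\lambda}^{-1}(L_M - L) L_\lambda^{-1} P f_\rho,
\end{align*}
so that the whole error is expressed through the single operator difference $L_M - L$.

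Second, I would identify the governing random quantity $\beta_\lambda := \|L_\lambda^{-1/2}(L - L_M) L_\lambda^{-1/2}\|$. The key structural observation is that, writing $\zeta_\omega := \psi(\cdot,\omega) \in L^2(X,\rho_X)$, Assumption \ref{ass:RandomFeatures} together with \eqref{equ:KernelIntegral} gives $L = \E_{\omega \sim \pi}[\zeta_\omega \otimes \zeta_\omega]$, while $L_M = \frac{1}{M}\sum_{j=1}^{M} \zeta_{\omega_j}\otimes \zeta_{\omega_j}$ is an average of $M$ \iid rank-one operators with mean $L$. Hence $L_\lambda^{-1/2}(L-L_M)L_\lambda^{-1/2}$ is a centred average of self-adjoint summands $(L_\lambda^{-1/2}\zeta_{\omega_j})\otimes(L_\lambda^{-1/2}\zeta_{\omega_j})$, each of norm at most $\|L_\lambda^{-1/2}\zeta_\omega\|^2 \le \kappa^2/\lambda$ by Assumption \ref{ass:FeatureRegularity}, and with second moment governed by $\trace(L_\lambda^{-1/2} L\, L_\lambda^{-1/2}) = \mathcal{N}(\lambda)$. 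An operator Bernstein inequality then yields, with probability at least $1-2\delta$,
\begin{align*}
    \beta_\lambda \lesssim \sqrt{\frac{\kappa^2 \mathcal{N}(\lambda)}{\lambda M}\log\frac{1}{\delta}} + \frac{\kappa^2}{\lambda M}\log\frac{1}{\delta},
\end{align*}
and the stated lower bound $M \gtrsim (\kappa^2/\lambda)\log(\kappa^2/(\lambda\delta))$ forces $\beta_\lambda \le 1/2$. On this event $\|L_\lambda^{1/2} L_{M,\lambda}^{-1} L_\lambda^{1/2}\| = \|(I - L_\lambda^{-1/2}(L-L_M)L_\lambda^{-1/2})^{-1}\| \le (1-\beta_\lambda)^{-1} \le 2$, which lets me replace $L_{M,\lambda}^{-1}$ by $L_\lambda^{-1}$ up to a constant.

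Third, I would insert regularising factors into the display above and invoke the source condition $P f_\rho = L^r g$ with $\|g\|_\rho = R$. Sandwiching $L_\lambda^{1/2}L_\lambda^{-1/2}$ around $L_{M,\lambda}^{-1}$ factorises $\|L_M L_{M,\lambda}^{-1}Pf_\rho - LL_\lambda^{-1}Pf_\rho\|_\rho$ into the uniformly bounded term $\|L_\lambda^{1/2}L_{M,\lambda}^{-1}L_\lambda^{1/2}\| \le 2$, a concentration factor built from the whitened difference $L_\lambda^{-1/2}(L-L_M)$, and a deterministic smoothing factor produced by $L^r$ with $r \ge 1/2$. The delicate point — and the main obstacle — is that the crude estimate only yields the rate $M^{-1/2}$, whereas the claimed bound carries the sharp dependence $M^{-r}$ in the bias part and $\lambda^{2r-1}\mathcal{N}(\lambda)^{2r-1}$ in the variance part. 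Extracting these requires a finer split in which the component of $L_M - L$ smoothed by $L^r$ is treated separately: one uses the Cordes--Heinz interpolation $\|A^s B^s\| \le \|A B\|^s$ for $s \in [0,1]$ to transfer the power $r$ onto the already-concentrated whitened difference, together with a companion Bernstein bound for the unsymmetrised quantities $\|L_\lambda^{-1/2}(L-L_M)\|$ and its higher powers, so that the effective dimension enters with the reweighting $\mathcal{N}(\lambda)^{2r-1}$. Collecting the deterministic $M^{-r}$ contribution, the effective-dimension-weighted stochastic contribution, the constant $4R\kappa^{2r}$ and the logarithmic factor $q^{1-r}$ with $q = \log(11\kappa^2/\lambda)$ then gives the stated bound.
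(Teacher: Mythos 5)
First, a point of reference: the paper does not actually prove this lemma; it is imported verbatim from Lemma 8 of Rudi--Rosasco (2017) (as restated in Carratino et al.), so there is no in-paper argument to match against. Judged on its own terms, the first two stages of your outline are sound and coincide with the standard route behind that cited result. The resolvent identity $L_M L_{M,\lambda}^{-1}Pf_\rho - LL_\lambda^{-1}Pf_\rho = \lambda L_{M,\lambda}^{-1}(L_M-L)L_\lambda^{-1}Pf_\rho$ is correct; the identification of $L_M=\frac1M\sum_j \zeta_{\omega_j}\otimes\zeta_{\omega_j}$ as an empirical mean of i.i.d.\ rank-one operators with $\E[\zeta_\omega\otimes\zeta_\omega]=L$ is correct; and controlling $\beta_\lambda=\|L_\lambda^{-1/2}(L-L_M)L_\lambda^{-1/2}\|$ so that $\|L_\lambda^{1/2}L_{M,\lambda}^{-1}L_\lambda^{1/2}\|\le(1-\beta_\lambda)^{-1}\le 2$ is exactly how the hypothesis on $M$ is used. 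One technical slip there: your Bernstein bound places $\mathcal{N}(\lambda)$ under the square root of the variance term, but the correct statement has variance of order $\kappa^2/(\lambda M)$ with the effective dimension entering only logarithmically; as written, your bound would require $M\gtrsim \kappa^2\mathcal{N}(\lambda)/\lambda$ to force $\beta_\lambda\le 1/2$, which the stated condition $M\ge(4+18\kappa^2/\lambda)\log(8\kappa^2/(\lambda\delta))$ does not guarantee.

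The genuine gap is in your third step, which is where all of the content of the lemma actually lives: the $M^{-r}$ bias rate and the $\lambda^{2r-1}\mathcal{N}(\lambda)^{2r-1}/M$ variance weighting. You correctly identify this as the obstacle but then only name tools rather than deploying them, and one of the named tools does not apply as stated: the Cordes--Heinz inequality $\|A^sB^s\|\le\|AB\|^s$ requires \emph{positive} operators, whereas the object you want to raise to the power $r$ involves the signed difference $L-L_M$, so "transferring the power $r$ onto the whitened difference" cannot be done this way. The actual mechanism is different: after extracting $\|L_\lambda^{1/2}L_{M,\lambda}^{-1}L_\lambda^{1/2}\|\le 2$ and writing $Pf_\rho=L^rg$, the remaining random quantity is the \emph{asymmetric} operator $L_\lambda^{-1/2}(L_M-L)L^{r-1/2}$, to which one applies a separate Bernstein inequality; the variance computation for its summands $L_\lambda^{-1/2}(\zeta_\omega\otimes\zeta_\omega-L)L^{r-1/2}$ is precisely what produces the factor $\lambda^{2r-1}\mathcal{N}(\lambda)^{2r-1}$, and the $M^{-r}$ term arises from interpolating (in $r\in[1/2,1]$) between the range term $M^{-1}$ and the variance term $M^{-1/2}$ of that inequality. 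Neither of these computations appears in your sketch, so as it stands the argument establishes at best the crude $M^{-1/2}$ rate you acknowledge, not the claimed bound.
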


\subsection{Combined Error Bound}
\label{app:CombinedStatError}
The following Lemma combines the error bounds. 
\begin{lemma}
\label{lem:StatisticalError}
Under assumption \ref{ass:RandomFeatures} to \ref{ass:moment}, let $\delta \in (0,1)$ and $\eta \in (0,\kappa^{-2})$ when 
\begin{align*}
    M \geq (4 + 18 \eta t \kappa^2 ) \log \frac{60 \kappa^2 \eta t }{\delta}
\end{align*}
the following holds with probability greater than $1-\delta$
\begin{align*}
    & \|S_{M}\widehat{v}_{t+1}  - P f_{\rho}\|_{\rho}^2 
     \leq 
    c_1^2 \Big( 1 \vee \frac{ (\eta t \vee 1 ) \log \frac{3 M}{\delta}}{M} \Big) \Big(  \frac{\eta t}{(nm)^2} \vee \frac{\mathcal{N}(\frac{1}{\eta t} )}{nm} \Big) \log^2(t) \log^2 \frac{12}{\delta}  \\
    & \quad 
     + c_2^2 
    \Big( \frac{1}{M^{2r}}  \vee \frac{ \mathcal{N}(\frac{1}{\eta t} )^{2r - 1} }{ M (\eta t )^{2r-1} } \Big)
    \log^{2(1-r)}( 11 \kappa^2 \eta t) 
    \log^2 \big( \frac{6}{ \delta}\big) 
    + \frac{c_3^2}{ (\eta t)^{2r} }
\end{align*}
where the constants 
\begin{align*}
    c_1 & = 8 \times 12 \times 15 \big( \sqrt{B} \vee (R \kappa^{2r}) \big) ( 1 \vee \sqrt{ 2 \sqrt{p} q_0}) \\
    c_2 & = 24 R \kappa^{2r}\\
    c_3 & = 3R
\end{align*}
\end{lemma}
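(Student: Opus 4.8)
The plan is to assemble the statistical error from the four-term triangle inequality already recorded in \eqref{equ:StatErrorDecomp}, substitute the three established sub-lemmas into its first three terms, bound the remaining \emph{Bias} term by hand, and then square and consolidate into the three displayed blocks. Throughout I would fix the regularisation level to $\lambda = 1/(\eta t)$, matching the choice in Lemma \ref{lem:GDRR}, so that every occurrence of $\lambda$ becomes a power of $\eta t$ and every effective dimension becomes $\mathcal{N}(1/(\eta t))$.

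First I would dispatch the \emph{Bias} term directly. Writing $L L_\lambda^{-1} - I = -\lambda L_\lambda^{-1}$ and using the source representation $Pf_\rho = L^r g$ with $\|g\|_\rho = R$ (which Assumption \ref{ass:RKHS} supplies for $r = 1/2$, and Assumption \ref{ass:SourceCap} for general $r \in [1/2,1]$) gives
\[
\|L L_\lambda^{-1} P f_\rho - P f_\rho\|_\rho
= \lambda \|L_\lambda^{-1} L^r g\|_\rho
\leq R \sup_{s \geq 0} \frac{\lambda s^r}{s + \lambda}
\leq R \lambda^r
= \frac{R}{(\eta t)^{r}},
\]
using the elementary inequality $\sup_{s \geq 0} \lambda s^r/(s+\lambda) \leq \lambda^r$ valid for $r \in [0,1]$. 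This contribution combines with the trailing $2R/(\eta t)^r$ term of Lemma \ref{lem:GDRR} to produce the single $3R/(\eta t)^r$ piece, which after squaring is exactly the $c_3^2/(\eta t)^{2r}$ block with $c_3 = 3R$.

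Next I would substitute Lemmas \ref{lem:SampleError}, \ref{lem:GDRR} and \ref{lem:RFFError}, all at $\lambda = 1/(\eta t)$, into the remaining three terms. The Sample Error bound factorises as a Random-Feature correction $(1 + \sqrt{(9/M)\log(M/\delta)}(\sqrt{\eta t}\vee 1))$ times a variance factor $\sqrt{\eta t}/(nm) + \sqrt{\mathcal{N}(1/(\eta t))/(nm)}$ (up to the $\sqrt{p}q_0$ and logarithmic factors); squaring the correction yields the prefactor $(1 \vee (\eta t \vee 1)\log(3M/\delta)/M)$ and squaring the variance factor yields $\eta t/(nm)^2 \vee \mathcal{N}(1/(\eta t))/(nm)$, together forming the $c_1^2$ block. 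The Random-Feature-dependent pieces of Lemmas \ref{lem:GDRR} and \ref{lem:RFFError}, namely $M^{-r}$ and $\sqrt{\mathcal{N}^{2r-1}/(M(\eta t)^{2r-1})}$, agree in order, and squaring gives the $1/M^{2r} \vee \mathcal{N}^{2r-1}/(M(\eta t)^{2r-1})$ block with $c_2 = 24 R\kappa^{2r}$.

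Finally I would consolidate the probabilistic and sample-size hypotheses and apply $(a+b+c+d)^2 \leq 4(a^2+b^2+c^2+d^2)$ to pass from the sum of norm bounds to the sum of squared blocks. A union bound over the three lemmas, whose failure probabilities are $3\delta$, $\delta$ and $2\delta$, followed by a constant rescaling of $\delta$ (absorbed into the $\log(12/\delta)$ and $\log(6/\delta)$ factors), restores the overall $1-\delta$ guarantee; and the three individual conditions $M \geq (4+18\eta t)\log(12\eta t/\delta)$, $M \geq (4+18\eta t)\log(8\kappa^2\eta t/\delta)$ and $M \geq (4+18\kappa^2\eta t)\log(8\kappa^2\eta t/\delta)$ are all implied by the stated $M \geq (4+18\eta t \kappa^2)\log(60\kappa^2\eta t/\delta)$ because $\kappa \geq 1$ and the constant $60$ dominates after rescaling. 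There is no deep obstacle here: the proof is pure aggregation. The hard part will be the precise bookkeeping — tracking the logarithmic powers, verifying that the three term groupings reproduce the exact $c_1,c_2,c_3$ and the log exponents $\log^2(t)$, $\log^{2(1-r)}(11\kappa^2\eta t)$ and $\log^2(12/\delta)$, and confirming that the single $M$-condition and single probability genuinely subsume all three sources.
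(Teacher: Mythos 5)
Your proposal is correct and follows essentially the same route as the paper's proof: fix $\lambda = 1/(\eta t)$, bound the \emph{Bias} term by $R\lambda^{r}$ (the paper cites Lemma 5 of \cite{rudi2017generalization} where you rederive the same bound via $LL_{\lambda}^{-1}-I=-\lambda L_{\lambda}^{-1}$ and $\sup_{s\geq 0}\lambda s^{r}/(s+\lambda)\leq \lambda^{r}$), substitute Lemmas \ref{lem:SampleError}, \ref{lem:GDRR} and \ref{lem:RFFError} into the decomposition \eqref{equ:StatErrorDecomp}, take a union bound, verify that the single condition on $M$ subsumes the three individual ones, and square and consolidate constants. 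The grouping into the $c_1$, $c_2$, $c_3$ blocks and the probability/condition bookkeeping you describe match what the paper does.
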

\begin{proof}[Lemma \ref{lem:SampleError}]
Begin fixing $\lambda = \frac{1}{\eta t}$ and bounding the bias from Lemma 5 of \cite{rudi2017generalization} as 
\begin{align*}
    \| L L_{\lambda}^{-1}P f_{\rho} - P f_{\rho}\|_{\rho}
    \leq 
    R \lambda^{r}.
\end{align*}
Now use Lemma \ref{lem:SampleError} to bound the Sample Error, Lemma \ref{lem:GDRR} for the Gradient Descent and Ridge Regression Term, and \ref{lem:RFFError} for the Random Features Error. With a union bound, note that the conditions on $M$ for each of these Lemmas is satisfied by $M \geq (4 + 18 \eta t \kappa^2 ) \log \frac{60 \kappa^2 \eta t }{\delta}$. Cleaning up constants and squaring then yields the bound. 

\end{proof}

\section{Network Error}
\label{sec:NetworkErrorBounds}
In this section we the proof of the following bound on the network error, which improves upon \cite{richards2019optimal}. This section is then structured as follows. Section \ref{App:Network:ErrorDecomp} provides the error decomposition for the Network Error. Section \ref{app:PrelimLemma} introduces a number of prelimary lemmas utilised within the analysis. Section \ref{sec:E1}, \ref{sec:E2}, \ref{sec:E3}, \ref{sec:E4} and \ref{sec:E5} then provides bounds for each of the error terms that arise within the decomposition.

\subsection{Error Decomposition}
\label{App:Network:ErrorDecomp}
Recall the vector of observations associated to agent $v \in V$ is denoted $\widehat{y}_{v} = \frac{1}{\sqrt{m}}(y_{1,v},\dots,y_{m,v})$. 
Using the previously introduced notation note that we can write the Distributed Gradient Descent iterates as for $t \geq 1$ and $v \in V$
\begin{align*}
    \widehat{\omega}_{t+1,v} =
    \sum_{w \in V}
    P_{vw}
    \Big(
    \widehat{\omega}_{t,w}
    - 
    \eta 
    \widehat{C}^{(w)}_{M}
    \widehat{\omega}_{t,w}
    + 
    \eta \widehat{S}^{(w) \top}_{M} \widehat{y}_{w}
    \Big)
\end{align*}
Centering the iterates around the population sequence $\widetilde{v}_{t}$ we have from the doubly stochastic property of $P$
\begin{align*}
    \widehat{\omega}_{t+1,v} - \widetilde{v}_{t+1} 
    & =
    \sum_{w \in V}
    P_{vw}
    \Big(
    \widehat{\omega}_{t,w} - \widetilde{v}_{t}
    + \eta 
    \big\{
    ( C_{M} \widetilde{v}_{t} - S_{M}^{\star} f_{\rho} ) 
    - 
    (
    \widehat{C}^{(w)}_{M} \widehat{\omega}_{t,w}
    + 
    \widehat{S}^{(w) \top }_{M} \widehat{y}_{w}
    )
    \big\}
    \Big) \\
    & = 
    \sum_{w \in V}
    P_{vw}
    \Big(
    (I - \widehat{C}^{(w)}_{M}) 
    (\widehat{\omega}_{t,w} - \widetilde{v}_{t})
    + \eta 
    \underbrace{ 
    \big\{
    ( C_{M} \widetilde{v}_{t} - S_{M}^{\star} f_{\rho} ) 
    - 
    (
    \widehat{C}^{(w)}_{M} \widetilde{v}_{t}
    + 
    \widehat{S}^{(w) \top }_{M} \widehat{y}_{w}
    )
    \big\}
    }_{N_{t,w}}
    \Big) \\
    & = 
    \sum_{w \in V}
    P_{vw}
    \Big(
    (I - \widehat{C}^{(w)}_{M}) 
    (\widehat{\omega}_{t,w} - \widetilde{v}_{t})
    + \eta 
    N_{t,w}
    \Big)
\end{align*}
where we have defined the error term 
\begin{align*}
    N_{t,w} := 
    ( C_{M} \widetilde{v}_{t} - S_{M}^{\star} f_{\rho} ) 
    - 
    (
    \widehat{C}^{(w)}_{M} \widetilde{v}_{t}
    + 
    \widehat{S}^{(w) \star}_{M} \widehat{y}_{w}
    )
    \quad 
    \forall s \geq 1\, w \in V.
\end{align*}
Note that a similar set of calculation can be performed for the iterates $\widehat{v}_{t}$ leading to the recursion for $v \in V$ initialised at $\widehat{v}_{1,v} = 0$ and updated for $t \geq 1$
\begin{align*}
    \widehat{v}_{t+1,v} - \widetilde{v}_{t+1} = \sum_{w \in V} \frac{1}{n}
    \Big(
    (I - \widehat{C}^{(w)}_{M}) 
    (\widehat{v}_{t,w} - \widetilde{v}_{t})
    + \eta 
    N_{t,w}
    \Big) 
\end{align*}
For a path indexed from time step $t$ to $k$ such that $1 \leq k \leq t$ as $w_{t:k} = (w_t,w_{t-1},\dots,w_{k}) \in V^{t-k+1}$, let the product of operators be denoted 
\begin{align}
    \Pi(w_{t:k}) = (I - \widehat{C}^{(w_{t})}_{M})(I - \widehat{C}^{(w_{t-1})}_{M})\dots (I - \widehat{C}^{(w_{k})}_{M})
\end{align}
Meanwhile for $k > t$ we say $\Pi(w_{t:k}) = I$.  Unravelling the sequences $\widehat{\omega}_{t+1,v} - \widetilde{v}_{t+1} $ and $\widehat{v}_{t+1} - \widetilde{v}_{t+1}$ with the above notation and taking the difference we then have 
\begin{align*}
    \widehat{\omega}_{t+1,v} - \widehat{v}_{t+1} 
    & = 
    \sum_{k=1}^{t} \eta \sum_{w_{t:k} \in V^{t-k+1}} 
    \big(  P_{v w_{t:k}} - \frac{1}{n^{t-k+1}}  \big)
    \Pi(w_{t:k+1})
    N_{k,w_{k}} \\
    & = 
    \sum_{k=1}^{t} \eta \sum_{w_{t:k} \in V^{t-k+1}} 
    \Delta(w_{t:k})\Pi(w_{t:k+1})
    N_{k,w_{k}} 
\end{align*}
where we have introduced the notation where we have denoted $\big(  P_{v w_{t:k}} - \frac{1}{n^{t-k+1}}  \big) = \Delta(w_{t:k}) \in \mathbb{R}$. Introduce notation for the difference between the product of operators indexed by the paths and the population equivalent 
\begin{align*}
    \Pi^{\Delta}(w_{t:k+1}) : = \Pi(w_{t:k+1}) - (I - \eta C_{M})^{t-k}.
\end{align*}
Fixing some $t^{\star} \in \mathbb{N}$ and supposing that $t > 2t^{\star} \geq 2$, observe that we can then write, for $k \leq t-t^{\star} -1$,
\begin{align*}
    & \Pi^{\Delta}(w_{t:k+1}) \\
    & = 
    \Pi(w_{t:k+1}) - \Pi(w_{t:k+t^{\star} +1})(I - \eta C_{M})^{t^{\star}} + \Pi(w_{t:k+t^{\star} +1})(I - \eta C_{M})^{t^{\star}} - (I - \eta C_{M})^{t-k} \\
    & =
    \Pi(w_{t:k+t^{\star} +1})\Pi^{\Delta}(w_{k+t^{\star}:k+1}) 
    + 
    \Pi^{\Delta}(w_{t:k+t^{\star} +1})(I - \eta C_{M})^{t^{\star}}
\end{align*}
where we have replaced the first $t^{\star}$ operators in $\Pi(w_{t:k})$ with the population variant $(I - \eta C_{M})$. Plugging this in then yields 
\begin{align*}
    & \widehat{\omega}_{t+1,v} - \widehat{v}_{t+1} 
     = 
     \sum_{k=1}^{t} 
     \eta 
    \sum_{w_{t:k} \in V^{t-k+1}} 
    \Delta(w_{t:k})(I - \eta C_{M})^{t-k}
    N_{k,w_{k}}  \\
    & \quad + 
    \sum_{k=t-2t^{\star} }^{t} \eta \sum_{w_{t:k} \in V^{t-k+1}} 
    \Delta(w_{t:k})\Pi^{\Delta}(w_{t:k+1})
    N_{k,w_{k}}  \\
    & \quad + 
    \sum_{k= 1 }^{t-2t^{\star} - 1} \eta \sum_{w_{t:k} \in V^{t-k+1}} 
    \Delta(w_{t:k})\Pi(w_{t:k+t^{\star} +1 }) \Pi^{\Delta}(w_{k+t^{\star}: k +1}) 
    N_{k,w_{k}}  \\ 
    & \quad + 
    \sum_{k= 1 }^{t-2t^{\star} - 1}  \eta 
    \sum_{w_{t:k} \in V^{t-k+1}} 
    \Delta(w_{t:k})\Pi^{\Delta}(w_{t:k+t^{\star} +1}) (I - \eta C_M)^{t^{\star}}
    N_{k,w_{k}} 
\end{align*}
where we split the series off for paths shorter than $2t^{\star}$. Note for the first and last term above, elements in the series can be simplified by summing over the nodes in the path. Defining for $s \geq 1$ and $v,w \in V$ the difference $\Delta^s(v,w) = P^{s}_{vw} - \frac{1}{n} $, we get for the first term when $k < t$
\begin{align*}
     \sum_{w_{t:k} \in V^{t-k+1}} 
    \Delta(w_{t:k})(I - \eta C_{M})^{t-k}
    N_{k,w_{k}} & 
    = \sum_{w_{k} \in V} 
    \Big( \sum_{w_{t:k+1} \in V^{t-k} } \Delta(w_{t:k}) \Big) 
    (I - \eta C_{M})^{t-k}
    N_{k,w_{k}}\\ 
     & = 
    \sum_{w \in V} \Delta^{t-k}(v,w)
    (I - \eta C_{M})^{t-k}
    N_{k,w} 
\end{align*}
where $\sum_{w_{t:k+1} \in V^{t-k} } \Delta(w_{t:k}) = \sum_{w_{t:k+1} \in V^{t-k} } P_{v w_{t:k}} - \sum_{w_{t:k+1} \in V^{t-k} } \frac{1}{n^{t-k+1}} = P^{t-k}_{vw} - \frac{1}{n} = \Delta^{t-k}(v,w)$. 
Meanwhile for the last term we can sum over the last $t^{\star}$ nodes in the path $w_{t:k}$, that is with
\begin{align*}
    \sum_{w_{k+t^{\star}:k+1} \in V^{t^{\star}}}
    \Delta(w_{t:k}) &  = 
    \sum_{w_{k+t^{\star}:k+1} \in V^{t^{\star}}}
    P_{v w_{t:k}} - \frac{1}{n^{t-k+1}}\\
    & = 
    P_{v w_{t:k+t^{\star} +1}}
    \sum_{w_{k+t^{\star}:k+1} \in V^{t^{\star}}}
    P_{w_{k+t^{\star} +1 : k }} 
    - 
    \sum_{w_{k+t^{\star}:k+1} \in V^{t^{\star}}}
    \frac{1}{n^{t-k+1}} \\
    & = 
    P_{vw_{t:k+t^{\star}+1}}(P^{t^{\star}})_{w_{k+t^{\star} + 1} w_{k}} 
    - 
    \frac{1}{n^{t-t^{\star}-k +1}} \\
    & = 
    P_{vw_{t:k+t^{\star}+1}} 
    ( 
    (P^{t^{\star}})_{w_{k+t^{\star} + 1} w_{k}}   - \frac{1}{n})
    + 
    \frac{1}{n}
    (P_{vw_{t:k+t^{\star}+1}}  - \frac{1}{n^{t-k-t^{\star}}})\\
    & = 
    P_{vw_{t:k+t^{\star}+1}} 
    \Delta^{t^{\star}}(w_{k+t^{\star} +1},w_{k})
    + 
    \frac{1}{n}
    \Delta(w_{t:k+t^{\star} +1})
\end{align*}
Plugging this in we get for $1 \leq k \leq t-2t^{\star} - 1$
\begin{align*}
    & \sum_{w_{t:k} \in V^{t-k+1}} 
    \Delta(w_{t:k})\Pi^{\Delta}(w_{t:k+t^{\star} +1}) (I - \eta C_M)^{t^{\star}}
    N_{k,w_{k}}  \\
    & = 
    \sum_{w_{k} \in V} 
    \sum_{w_{t:k+t^{\star} +1} \in V^{t-t^{\star} -k}}
    \Big( 
    \sum_{w_{k+t^{\star}:k+1} \in V^{t^{\star}}}
    \Delta(w_{t:k})
    \Big)
    \Pi^{\Delta}(w_{t:k+t^{\star} +1}) (I - \eta C_M)^{t^{\star}} N_{k,w_{k}} \\
    & = 
    \sum_{w_{k} \in V} 
    \sum_{w_{t:k+t^{\star} +1} \in V^{t-t^{\star} -k}}
    P_{vw_{t:k+t^{\star}+1}} 
    \Delta^{t^{\star}}(w_{k+t^{\star} +1},w_{k})   \Pi^{\Delta}(w_{t:k+t^{\star} +1}) (I - \eta C_M)^{t^{\star}} N_{k,w_{k}}\\
    &\quad + 
    \frac{1}{n}
    \sum_{w_{k} \in V} 
    \sum_{w_{t:k+t^{\star} +1} \in V^{t-t^{\star} -k}}
    \Delta(w_{t:k+t^{\star} +1})
    \Pi^{\Delta}(w_{t:k+t^{\star} +1}) (I - \eta C_M)^{t^{\star}} N_{k,w_{k}}\\
    & 
    = 
    \sum_{w_{k} \in V} 
    \sum_{w_{t:k+t^{\star} +1} \in V^{t-t^{\star} -k}}
    P_{vw_{t:k+t^{\star}+1}}  \Delta^{t^{\star}}(w_{k+t^{\star} +1},w_{k}) \Pi^{\Delta}(w_{t:k+t^{\star}+1}) (I - \eta C_M )^{t^{\star}}
    N_{k,w_{k}} \\
    & \quad + 
    \sum_{w_{t:k+t^{\star} +1} \in V^{t-t^{\star} -k}}
    \Delta(w_{t:k+t^{\star} +1})
    \Pi^{\Delta}(w_{t:k+t^{\star}+1})
    (I - \eta C_M )^{t^{\star}}
    N_{k}
\end{align*}
where at the end for the second term we have
\begin{align*}
    \frac{1}{n} \sum_{w_{k} \in v} N_{k,w_{k}} = N_{k} = 
    ( C_{M} \widetilde{v}_{t} - \mathcal{S}_{M}^{\star} f_{\rho} ) 
    - 
    (
    \widehat{C}_{M} \widetilde{v}_{t}
    + 
    \widehat{S}^{\top}_{M} \widehat{y}
    )
    \quad 
    \forall k \geq 1 .
\end{align*}
Plugging the above in, using the isometry property \eqref{equ:Isometry} and triangle inequality we get 
\begin{align}
    & \|S_{M}(\widehat{\omega}_{t+1,v} - \widehat{v}_{t+1})\|_{\rho}
    \leq 
     \sum_{k=1}^{t} \eta \sum_{w \in V} |\Delta^{t-k}(v,w)| \|C_{M}^{1/2}(I - \eta C_{M})^{t-k}N_{k,w}\| 
     \nonumber 
     \\
    & \quad + 
    \sum_{k=t-2t^{\star} }^{t} \eta \sum_{w_{t:k} \in V^{t-k+1}} 
    |\Delta(w_{t:k})|
    \| C_{M}^{1/2}\Pi^{\Delta}(w_{t:k+1})
    N_{k,w_{k}}\| 
    \nonumber 
    \\
    & \quad + 
    \sum_{k= 1 }^{t-2t^{\star} - 1} \eta \sum_{w_{t:k} \in V^{t-k+1}} 
    |\Delta(w_{t:k})| 
    \|C_{M}^{1/2} \Pi(w_{t:k+t^{\star} +1 }) \Pi^{\Delta}(w_{k+t^{\star}: k +1}) 
    N_{k,w_{k}}\|  
    \nonumber 
    \\ 
    & \quad + 
    \sum_{k=1}^{t - 2t^{\star} -1} 
    \eta 
    \sum_{w_{k} \in V} 
    \sum_{w_{t:k+t^{\star} +1} \in V^{t-t^{\star} -k}}
    | P_{vw_{t:k+t^{\star}+1}}  \Delta^{t^{\star}}(w_{k+t^{\star} +1},w_{k})| \nonumber 
    \\
    & \quad\quad\quad\quad\quad\quad\quad\quad\quad\quad\quad\quad \times 
    \| 
    C_{M}^{1/2} 
    \Pi^{\Delta}(w_{t:k+t^{\star}+1}) (I - \eta C_{M} )^{t^{\star}}
    N_{k,w_{k}}\|  
    \nonumber 
    \\
    & \quad + 
    \sum_{k=1}^{t - 2t^{\star} -1} 
    \eta
    \Big\|
    \sum_{w_{t:k+t^{\star} +1} \in V^{t-t^{\star} -k}} \Delta(w_{t:k+t^{\star} +1}) C_{M}^{1/2} 
    \Pi^{\Delta}(w_{t:k+t^{\star}+1})
    (I - \eta C_{M} )^{t^{\star}}
    N_{k}\Big\| \nonumber 
    \\
    & = 
    \bf{E}_1 + \bf{E}_2 + \bf{E}_3 + \bf{E}_4 + \bf{E}_5
    \label{equ:NetworkErrorDecomp}
\end{align}
where we have respectively labelled the error terms $\bf{E}_i$ for $i=1,\dots,5$. We will aim to construct high probability bounds for each of these error terms within the following sections. This will rely on utilising the mixing properties of $P$ to control the deviations $\Delta^{s}(v,w)$  for some $s \geq 1$ and $v,w \in V$, the contractive property of operators $C_{M}^{1/2}(I - \eta C_{M})^{k}$ for some $k \in \mathbb{N}_{+}$ as well as concentration of the error terms $N_{k,w}$ and $N_{k}$ for $k \geq 1$ and $w \in V$. These are summarised within the following section. 

\subsection{Preliminary Lemmas}
\label{app:PrelimLemma}
In this section we provide some Lemmas that will be useful for later. We begin with the following that bounds the deviation  $\Delta^{s}(v,w)$ in terms of the second largest eigenvalue in absolute value of $P$. 
\begin{lemma}[Spectral Bound]
\label{lem:SpectralBound}
Let $s \geq 1$, $v \in V$. Then the following holds 
\begin{align*}
    \sum_{w \in V} |\Delta^{s}(v,w)| 
    \leq 
    2 (\sqrt{n} \sigma_2^{s} \wedge 1)
\end{align*}
\end{lemma}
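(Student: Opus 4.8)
The plan is to recognise the sum as an $\ell_1$ distance between two probability vectors and then control it in two complementary ways, taking whichever bound is smaller. Writing $e_v$ for the standard basis vector of $\mathbb{R}^n$ aligned with $v$ and $\mathbf{1}$ for the all-ones vector, I would first observe that
\[
\sum_{w \in V}|\Delta^s(v,w)| = \sum_{w\in V}\Big| (P^s)_{vw} - \tfrac1n\Big| = \Big\| e_v^\top P^s - \tfrac1n\mathbf{1}^\top\Big\|_1,
\]
i.e.\ the $\ell_1$ deviation of the $s$-step distribution started at $v$ from the uniform distribution. The ``$\wedge 1$'' in the statement comes from the trivial bound: since $P$ is doubly stochastic so is $P^s$, hence both $e_v^\top P^s$ and $\tfrac1n\mathbf{1}^\top$ are probability vectors, and the triangle inequality gives $\|e_v^\top P^s - \tfrac1n\mathbf{1}^\top\|_1 \leq 1 + 1 = 2$.

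For the spectral bound I would pass to the $\ell_2$ norm via Cauchy--Schwarz, $\|x\|_1 \leq \sqrt n\,\|x\|_2$, and exploit the structure of the averaging projector $J := \tfrac1n\mathbf{1}\mathbf{1}^\top$. Using that $P$ is doubly stochastic one checks directly $PJ = JP = J^2 = J$, from which a one-line induction yields $(P-J)^s = P^s - J$ for all $s \geq 1$, and therefore $e_v^\top P^s - \tfrac1n\mathbf{1}^\top = e_v^\top(P-J)^s$. Diagonalising the symmetric matrix $P$ with orthonormal eigenvectors (the top eigenvalue $1$ having eigenvector $\tfrac{1}{\sqrt n}\mathbf{1}$, so $J = u_1 u_1^\top$) gives $P - J = \sum_{i \geq 2}\lambda_i\, u_i u_i^\top$; subtracting $J$ removes precisely the stationary ($\lambda = 1$) component, whence $\|P-J\|_{\mathrm{op}} = \max_{i\geq 2}|\lambda_i| = \sigma_2$ and $\|(P-J)^s\|_{\mathrm{op}} \leq \sigma_2^s$. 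Consequently $\|e_v^\top(P-J)^s\|_2 \leq \|e_v\|_2\,\sigma_2^s = \sigma_2^s$, and multiplying by $\sqrt n$ gives $\sum_{w}|\Delta^s(v,w)| \leq \sqrt n\,\sigma_2^s$.

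Combining the two estimates, $\sum_{w}|\Delta^s(v,w)| \leq \min\!\big(2,\,\sqrt n\,\sigma_2^s\big)$; splitting into the cases $\sqrt n\,\sigma_2^s \leq 1$ and $\sqrt n\,\sigma_2^s > 1$ shows this is at most $2\big(\sqrt n\,\sigma_2^s \wedge 1\big)$, which is the claim.

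The one subtle point — and the main thing to get right — is the identification $\|P-J\|_{\mathrm{op}} = \sigma_2$. This uses the spectral theorem and hence the symmetry of $P$ (a symmetric doubly stochastic gossip matrix, as in the simple random walk used in the experiments), so that the spectral radius of $P-J$ coincides with its operator norm. If $P$ were only doubly stochastic and not symmetric, the spectral radius $\sigma_2$ need not equal the operator norm of $P-J$, and one would instead have to reinterpret $\sigma_2$ as the relevant operator norm or invoke additional reversibility/normality structure; everything else in the argument is purely algebraic and goes through unchanged.
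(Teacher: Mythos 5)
Your proof follows the same route as the paper's: rewrite the sum as $\|e_v^\top P^s - \tfrac1n \mathbf{1}^\top\|_1$, bound it by $2$ via the triangle inequality on two probability vectors, bound it by $\sqrt{n}\,\sigma_2^s$ by passing to the $\ell_2$ norm, and combine the two estimates. The only difference is that you spell out the spectral step $\|e_v^\top P^s - \tfrac1n\mathbf{1}^\top\|_2 \le \sigma_2^s$ (which the paper asserts without justification) and correctly flag that it relies on $P$ being symmetric, or at least normal, so that the spectral radius of $P - J$ equals its operator norm; this is a legitimate caveat, as the paper only assumes $P$ doubly stochastic.
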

\begin{proof}[Lemma \ref{lem:SpectralBound}]
Let $e_v \in \mathbb{R}^{n}$ denoting the standard basis with a 1 in the place associated to agent $v$. Observe that we can write the deviation in terms of the $\ell_1$ norm $\sum_{w \in V} |\Delta^{s}(v,w)| = \|e_v^{\top} P^{s} - \frac{1}{n} \mathbf{1}\|_{1}$. We immediately have an upper bound from triangle inequality that $\sum_{w \in V} |\Delta^{s}(v,w)| \leq \|\|e_v^{\top} P^{s}\|_{1} + \|\frac{1}{n} \mathbf{1}\|_{1} = 2 $. Meanwhile, we can also go to the $\ell_2$ norm and bound 
\begin{align*}
    \|e_v^{\top} P^{s} - \frac{1}{n} \mathbf{1}\|_{1} 
    \leq 
    \sqrt{n} \|e_v^{\top} P^{s} - \frac{1}{n} \mathbf{1}\|_{2} 
    \leq \sqrt{n} \sigma_2^{s}.
\end{align*}
The bound is arrived at by taking the maximum between the two upper bounds. 
\end{proof}
The following Lemma bonds the norm of contractions 
\begin{lemma}[Contraction]
\label{Lem:OperatorNorm}
Let $\mathcal{L}$ be a compact, positive operator on a separable Hilbert Space $H$. Assume that $\eta \|\mathcal{L}\| \leq 1$. For $t \in \mathbb{N}$, $a > 0$ and any non-negative integer $k \leq t-1$ we have 
\begin{align*}
\| (I - \eta \mathcal{L})^{t-k} \mathcal{L}^{a} \| \leq 
\bigg(  \frac{1}{ \eta(t-k)} \bigg)^{a}.
\end{align*}
\end{lemma}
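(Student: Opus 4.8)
The plan is to reduce the operator-norm bound to a one-variable optimization via the spectral theorem. Since $\mathcal{L}$ is compact and positive (hence self-adjoint) on the separable Hilbert space $H$, it admits a spectral decomposition with spectrum $\sigma(\mathcal{L}) \subseteq [0, \|\mathcal{L}\|]$. Both $(I - \eta\mathcal{L})^{t-k}$ and $\mathcal{L}^{a}$ are obtained from $\mathcal{L}$ through the continuous functional calculus, so they commute and their product equals $g(\mathcal{L})$ for the scalar function $g(\lambda) = (1 - \eta\lambda)^{t-k}\lambda^{a}$. Because $g$ is continuous on the compact set $[0,\|\mathcal{L}\|]$ containing the spectrum, the operator norm is the supremum of $|g|$ over the spectrum, giving
\[
\| (I - \eta\mathcal{L})^{t-k}\mathcal{L}^{a} \| = \sup_{\lambda \in \sigma(\mathcal{L})} |g(\lambda)| \leq \sup_{0 \leq \lambda \leq \|\mathcal{L}\|} (1 - \eta\lambda)^{t-k}\lambda^{a}.
\]

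Next I would use the hypothesis $\eta\|\mathcal{L}\| \leq 1$. Writing $x = \eta\lambda$, every $\lambda$ in the relevant range satisfies $x \in [0,1]$, so $1 - \eta\lambda \in [0,1]$ and all factors stay nonnegative; this is precisely what makes raising to the power $t-k$ monotone and harmless. After the substitution the supremum becomes $\eta^{-a}\sup_{x \in [0,1]}(1 - x)^{t-k}x^{a}$, so the entire problem collapses to bounding the elementary function $h(x) = (1-x)^{t-k}x^{a}$ on $[0,1]$.

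Finally I would bound $h$ by the standard exponential estimate $1 - x \leq e^{-x}$, which yields $h(x) \leq e^{-(t-k)x}x^{a}$; maximizing the right-hand side over $x \geq 0$ (stationary point $x^{\star} = a/(t-k)$) gives the value $(a/e)^{a}(t-k)^{-a}$. Reinserting $\eta^{-a}$ produces $(a/e)^{a}\big(\eta(t-k)\big)^{-a}$, and since the exponents $a$ used throughout the analysis satisfy $a \leq 1$ one has $(a/e)^{a} \leq 1$, which gives exactly $\big(1/(\eta(t-k))\big)^{a}$. Alternatively, a direct calculus maximization of $h$ (stationary point $x^{\star} = a/(t-k+a)$) avoids the exponential bound and yields the slightly sharper constant $\big((t-k)/(t-k+a)\big)^{t-k}\big(a/(t-k+a)\big)^{a}$.

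The argument is essentially routine, and I do not expect a genuine obstacle: the only points requiring care are the functional-calculus justification that $\|g(\mathcal{L})\| = \sup_{\sigma(\mathcal{L})}|g|$ and the bookkeeping ensuring $1 - \eta\lambda \geq 0$ on the spectrum, which the assumption $\eta\|\mathcal{L}\| \leq 1$ secures. The mild inequality $(a/e)^{a} \leq 1$ is where the implicit restriction to the regime $a \leq 1$ used in the paper enters, and it is what removes any spurious constant from the final bound.
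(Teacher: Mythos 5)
Your proof is correct and is essentially the argument the paper implicitly relies on: the paper does not prove the lemma itself but defers to Lemma 15 of the cited reference, which is exactly this spectral-calculus reduction to maximising $(1-\eta\lambda)^{t-k}\lambda^{a}$ over the spectrum followed by the $1-x\le e^{-x}$ estimate. Your honest caveat is the right one to flag --- the clean constant $1$ requires $(a/e)^{a}\le 1$, i.e.\ $a\le e$, which covers the exponents $a\in\{1/2,1\}$ (and more generally $a\le 1$) actually used in the paper, so the argument is complete for the regime in which the lemma is invoked.
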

\begin{proof}[Lemma \ref{Lem:OperatorNorm}]
The proof in Lemma 15 of \cite{lin2017optimal} considers this result with  $a=r$. 
The proof for more general $a > 0$ follows the same steps.
\end{proof}
The following remark will summarise how the above Lemma is applied to control series of contractions. 
\begin{remark}[Lemma \ref{Lem:OperatorNorm}]
\label{remark:Contraction}
Lemma \ref{Lem:OperatorNorm} will be applied to control series of the form $\eta \sum_{k=1}^{t} \| (I - \eta \mathcal{L})^{t-k} \mathcal{L}^{a} \| $ for some $t \geq 3$, most notably with powers $a=1,1/2$. In the case $a=1$ we immediately have the bound 
\begin{align*}
    \eta \sum_{k=1}^{t} \| (I - \eta \mathcal{L})^{t-k} \mathcal{L} \|
    & = 
    \eta \sum_{k=1}^{t-1} \| (I - \eta \mathcal{L})^{t-k} \mathcal{L} \|
    + 
    \eta \|\mathcal{L}\| \\
    & \leq 
    \eta \sum_{k=1}^{t-1} \frac{1}{\eta(t-k)}
    + 
    \eta \|\mathcal{L}\| \\
    & \leq
    5 \log(t)
\end{align*}
where we have bounded the series $\sum_{k=1}^{t-1} \frac{1}{t-k} \leq 4 \log(t)$ and $\eta \|\mathcal{L}\| \leq 1$. Similarly for $a =1/2$ we have 
\begin{align*}
    \eta \sum_{k=1}^{t} \| (I - \eta \mathcal{L})^{t-k} \mathcal{L}^{1/2} \|
    & \leq 
    \eta \sum_{k=1}^{t-1} \frac{1}{\sqrt{\eta(t-k)}}
    + 
    \eta \|\mathcal{L}^{1/2}\| \\
    & \leq 
    3 \sqrt{\eta t} + \sqrt{\eta} \\
    & \leq 5 \sqrt{\eta t}
\end{align*}
where we have bounded the series $\sum_{k=1}^{t-1} \frac{1}{\sqrt{t-k)}} \leq 4 \sqrt{t}$, see for instance Lemma 23 in \cite{richards2019optimal} with $q = 0$, as well as the bound that $\sqrt{\eta} \|L^{1/2} \| \leq 1$. 
\end{remark}

Now for $\lambda > 0$ define the effective dimension associated the feature map $\phi_{M}$, that is 
\begin{align*}
    \mathcal{N}_{M}(\lambda) := \trace\big( \big(L_{M} + \lambda I)^{-1} L_{M} \big).
\end{align*}
Given this, the following Lemma summarises the concentration results used within our analysis. 
\begin{lemma}[Concentration of Error]
\label{lem:conc}
Let $\delta \in (0,1]$, $n,m,M \in \mathbb{N}_{+}$, $\lambda > 0$ and $\eta\kappa^2 \leq 1$. Under assumption \ref{ass:FeatureRegularity},\ref{ass:RKHS} and \ref{ass:moment}  we have with probability greater than $1-\delta$ for $1 \leq  k  \leq t$
\begin{align*}
    & \max_{w \in V} \|C_{M,\lambda}^{-1/2} ( C_{M} - \widehat{C}_{M}^{(w)})\| \leq   
    2 \kappa \Big( \frac{2 \kappa}{m \sqrt{\lambda}} + \sqrt{\frac{ \mathcal{N}_{M}(\lambda)}{m}}\Big) \log \frac{6 n}{\delta}
    \\
    & \max_{w \in V} \|C_{M,\lambda}^{-1/2} N_{k,w} \| \leq  
    2 \sqrt{B} \Big( \frac{ \kappa}{\sqrt{\lambda}m} + \sqrt{ \frac{2 \sqrt{p} \mathcal{N}_{M}(\lambda) }{m}}\Big)\log \frac{6 n }{\delta}\\
    & \quad +  4 \kappa \Big( \frac{2 \kappa}{m \sqrt{\lambda}} + \sqrt{\frac{ \mathcal{N}_{M}(\lambda)}{m}}\Big) 
    \Big( 1 + \sqrt{ \frac{9 }{M} \log \frac{3 M n }{\delta}}\big( \sqrt{\eta t \kappa}\vee 1 \big) \Big)\log \frac{6 n }{\delta}
\end{align*}
Meanwhile, under the same assumptions with probability greater than $1-\delta$ for $k \geq 1$
\begin{align*}
    & \|C_{M,\lambda}^{-1/2}( C_{M} - \widehat{C}_{M})\| \leq 
    2 \kappa \Big( \frac{2 \kappa}{nm \sqrt{\lambda}} + \sqrt{\frac{ \mathcal{N}_{M}(\lambda)}{nm}}\Big) \log \frac{2}{\delta}
    \\
    & \|C_{M,\lambda}^{-1/2} N_{k} \| \leq  
    2 \sqrt{B} \Big( \frac{\kappa}{\sqrt{\lambda}nm} + \sqrt{ \frac{2 \sqrt{p} \mathcal{N}_{M}(\lambda) }{nm}}\Big)\log \frac{6}{\delta}
     \\
    & \quad + 
    4 \kappa \Big( \frac{2 \kappa}{nm \sqrt{\lambda}} + \sqrt{\frac{ \mathcal{N}_{M}(\lambda)}{nm}}\Big)  
    \Big( 1 + \sqrt{ \frac{9   }{M} \log \frac{3 M}{\delta} } \big( \sqrt{\eta t \kappa}\vee 1 \big) \Big) \log \frac{6}{\delta}
\end{align*}
\end{lemma}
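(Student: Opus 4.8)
The plan is to recognise that all four displayed bounds are instances of a single concentration principle for a preconditioned empirical average of $N$ i.i.d.\ terms, applied once with $N=m$ (the per-agent quantities) and once with $N=nm$ (the full-network quantities). The four statements therefore collapse to two genuinely distinct estimates: one for the centred covariance $C_M-\widehat{C}^{(w)}_M$ and one for the noise term $N_{k,w}$. For the per-agent versions one additionally takes a union bound over the $n$ agents, which is exactly what turns every logarithm into $\log\frac{6n}{\delta}$ rather than $\log\frac{6}{\delta}$; the network-wide versions need no such union. It thus suffices to establish each estimate for one fixed average and read off both rows.

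For the covariance estimate, write $\widehat{C}^{(w)}_M=\frac1m\sum_{i=1}^m \phi_M(x_{i,w})\otimes\phi_M(x_{i,w})$, so that $C_{M,\lambda}^{-1/2}(C_M-\widehat{C}^{(w)}_M)$ is an average of $m$ i.i.d.\ operators with mean zero. Assumption~\ref{ass:FeatureRegularity} gives $\|\phi_M(x)\|\le\kappa$ and hence the deterministic control $\|C_{M,\lambda}^{-1/2}\phi_M(x)\otimes\phi_M(x)\|\le \kappa^2/\sqrt{\lambda}$, while the variance is governed by $\kappa^2\,\E\|C_{M,\lambda}^{-1/2}\phi_M(x)\|^2=\kappa^2\,\trace(C_{M,\lambda}^{-1}C_M)=\kappa^2\mathcal{N}_M(\lambda)$. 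Feeding the scale and variance into a non-commutative (operator) Bernstein inequality — in the form already used for this exact quantity in \cite{rudi2017generalization,carratino2018learning} — yields the $\frac{2\kappa}{m\sqrt\lambda}$ and $\sqrt{\mathcal{N}_M(\lambda)/m}$ terms, both carrying the $2\kappa$ prefactor and the Bernstein logarithm.

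For the noise term I would first decompose, using the (zero-mean) form $N_{k,w}=\big(\widehat{S}^{(w)\top}_M\widehat y_w-S_M^\star f_\rho\big)+\big(C_M-\widehat{C}^{(w)}_M\big)\widetilde v_k$, and treat the two pieces separately. The covariance piece is handled by the previous estimate together with a deterministic bound $\|\widetilde v_k\|\lesssim 1+\sqrt{\tfrac9M\log\tfrac{3Mn}{\delta}}\,(\sqrt{\eta t\kappa}\vee1)$ on the population iterate, via $\|C_{M,\lambda}^{-1/2}(C_M-\widehat{C}^{(w)}_M)\widetilde v_k\|\le\|C_{M,\lambda}^{-1/2}(C_M-\widehat{C}^{(w)}_M)\|\,\|\widetilde v_k\|$, reproducing exactly the second bracketed factor in the stated bound. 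The data-fit piece $\frac1m\sum_i(\phi_M(x_{i,w})y_{i,w}-S_M^\star f_\rho)$ is an average of i.i.d.\ vectors, to which I apply a vector Bernstein inequality: Assumption~\ref{ass:moment} supplies the sub-exponential moment control $\int y^{2\ell}\di\rho(y|x)\le \ell!B^\ell p$, which, combined once more with $\E\|C_{M,\lambda}^{-1/2}\phi_M(x)\|^2=\mathcal{N}_M(\lambda)$, produces the scale $\sqrt B\,\kappa/(\sqrt\lambda m)$ and the variance factor $\sqrt{2\sqrt p\,\mathcal{N}_M(\lambda)/m}$ of the first line.

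The main obstacle is the population-iterate bound $\|\widetilde v_k\|\lesssim 1+\sqrt{\tfrac9M\log\tfrac{3Mn}{\delta}}\,(\sqrt{\eta t\kappa}\vee1)$, since it is not a pure contraction estimate: one must simultaneously (i) use $\|(I-\eta C_M)^{t-k}C_M^{1/2}\|\le(\eta(t-k))^{-1/2}$ from Lemma~\ref{Lem:OperatorNorm} to extract the $\sqrt{\eta t}$ growth, and (ii) control the gap between the random-features covariance $C_M$ and the true integral operator through a separate $M$-dependent concentration, which is where the $\sqrt{\tfrac1M\log\tfrac{M}{\delta}}$ correction enters and where Assumption~\ref{ass:RandomFeatures} is genuinely used. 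The remaining work is organisational: matching the absolute constants and folding the several failure events (operator Bernstein, vector Bernstein, the random-features event, and the union over the $n$ agents) into the single probability $1-\delta$ with the stated logarithmic arguments.
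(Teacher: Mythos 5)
Your proposal matches the paper's proof in all essentials: the same decomposition $N_{k,w}=(C_M-\widehat{C}^{(w)}_M)\widetilde v_k+(\widehat{S}^{(w)\top}_M\widehat y_w-S_M^\star f_\rho)$, the same three ingredient bounds (operator Bernstein for the centred covariance, vector Bernstein under the moment assumption for the data-fit term, and the $M$-dependent bound on $\|\widetilde v_k\|$ — which the paper simply imports from Lemma 18 of Lin--Cevher, Lemma 6 of Rudi--Rosasco and Lemma 2 of Carratino et al.\ rather than re-deriving), combined by a union bound over the $n$ agents for the per-agent row and repeated verbatim with $\widehat C_M,\widehat S_M$ for the network-wide row. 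The only difference is that you sketch the Bernstein arguments the paper treats as black boxes; the route is otherwise identical.
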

The proof for this result is given in Section \ref{sec:lem:conc}. Lemma \ref{lem:conc} will be used extensively within the following analysis. To save on the burden of notation we define the following two functions for $\lambda > 0$, $K \in \mathbb{N}_{+}$ and $\delta \in (0,1]$
\begin{align*}
   g(\lambda, K) 
   & = 
   2 \kappa \Big( \frac{2 \kappa}{K \sqrt{\lambda}} + \sqrt{\frac{ \mathcal{N}_{M}(\lambda)}{K}}\Big) \\
   f(\lambda,K,\delta) & = 
   2 \sqrt{B} \Big( \frac{\kappa}{\sqrt{\lambda }K} + \sqrt{ \frac{2 \sqrt{p} \mathcal{N}_{M}(\lambda) }{K}}\Big)
     \\
    & \quad\quad + 
    4 \kappa \Big( \frac{2 \kappa}{K \sqrt{\lambda}} + \sqrt{\frac{ \mathcal{N}_{M}(\lambda)}{K}}\Big)  
    \Big( 1 + \sqrt{ \frac{9   }{M} \log \frac{3 M}{\delta} } \big( \sqrt{\eta t \kappa}\vee 1 \big) \Big).
\end{align*}
Looking to Lemma \ref{lem:conc} we note the function $g$ is associated to the high probability bound on the difference between the covariance operators, for instance  $C_{M,\lambda}^{-1/2}(C_{M} - \widehat{C}_{M})$, meanwhile $f$ is associated to the bound on the error terms, for instance $C_{M,\lambda}^{-1/2}N_{k}$.

\subsection{Bounding $\bf{E_1}$}
\label{sec:E1}
The bound for $\bf{E_1}$ is then summarised within the following Lemma. 
\begin{lemma}[Bounding $\bf{E}_1$]
\label{lem:E_1}
Let $\delta \in (0,1]$, $n,m,M \in \mathbb{N}_{+}$ and $\eta\kappa^2 \leq 1$ and $t \geq 2t^{\star} \geq 2$ and $\lambda,\lambda^{\prime} > 0$. Under assumption \ref{ass:FeatureRegularity},\ref{ass:RKHS} and \ref{ass:moment}  we have with probability greater than $1-\delta$
\begin{align*}
& \textbf{E}_{1}
 \leq \Big( 
 \|C_{M,\lambda^{\prime}}^{1/2}\| \sigma_2^{t^{\star}}t\kappa^{-1} 
 f(\lambda^{\prime},m,\delta/(2n))
+ 
20 \log(t^{\star}) (1  \vee \sqrt{\lambda \eta t^{\star}})
f(\lambda,m,\delta/(2n))
\Big)\log\frac{12n}{\delta}
\end{align*}

\end{lemma}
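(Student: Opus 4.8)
The plan is to control $\mathbf{E}_1 = \sum_{k=1}^{t}\eta\sum_{w\in V}|\Delta^{t-k}(v,w)|\,\|C_M^{1/2}(I-\eta C_M)^{t-k}N_{k,w}\|$ by splitting the outer sum at $k=t-t^{\star}$. For the ``well-mixed'' indices $1\le k\le t-t^{\star}$ the deviation $\Delta^{t-k}(v,w)$ has decayed through many rounds of mixing, so I would exploit the spectral gap; for the ``poorly-mixed'' indices $t-t^{\star}<k\le t$ the deviation admits only a crude bound, but the gradient contraction $C_M^{1/2}(I-\eta C_M)^{t-k}$ is then available to supply decay. In both regimes I would insert a regularised factor, writing $\|C_M^{1/2}(I-\eta C_M)^{t-k}N_{k,w}\|\le \|C_M^{1/2}(I-\eta C_M)^{t-k}C_{M,\lambda}^{1/2}\|\,\|C_{M,\lambda}^{-1/2}N_{k,w}\|$, and bound the last factor by the high-probability estimate $\max_{w}\|C_{M,\lambda}^{-1/2}N_{k,w}\|\lesssim f(\lambda,m,\delta/(2n))\log\frac{12n}{\delta}$ coming from Lemma~\ref{lem:conc}; the two regimes use the two free parameters $\lambda'$ and $\lambda$ respectively, which is why both appear in the statement.

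For the well-mixed part I would apply Lemma~\ref{lem:SpectralBound} in the form $\sum_{w}|\Delta^{t-k}(v,w)|\le 2\sqrt{n}\,\sigma_2^{t-k}\le 2\sqrt{n}\,\sigma_2^{t^{\star}}$ (legitimate since $t-k\ge t^{\star}$), and bound the operator crudely by $\|C_M^{1/2}(I-\eta C_M)^{t-k}C_{M,\lambda'}^{1/2}\|\le \kappa\,\|C_{M,\lambda'}^{1/2}\|$, using $\|C_M^{1/2}\|\le\kappa$ and $\|I-\eta C_M\|\le 1$. Each summand then carries a factor $\eta\kappa\le\kappa^{-1}$ (from $\eta\kappa^2\le 1$), while the remaining sum is bounded crudely by $\sum_{k=1}^{t-t^{\star}}\sigma_2^{t-k}\le t\,\sigma_2^{t^{\star}}$; together with the concentration factor this produces the first term $\|C_{M,\lambda'}^{1/2}\|\sigma_2^{t^{\star}}t\kappa^{-1}f(\lambda',m,\delta/(2n))\log\frac{12n}{\delta}$ (up to the $\sqrt n$ that the spectral estimate contributes, cf.\ the $\sqrt n\sigma_2^{t^\star}t$ scaling in Section~\ref{sec:ProofSketch:NetworkPop}).

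For the poorly-mixed part I would use the trivial bound $\sum_{w}|\Delta^{t-k}(v,w)|\le 2$ and instead extract all decay from the contraction. Splitting $C_{M,\lambda}^{1/2}$ spectrally via $\sqrt{\mu+\lambda}\le\sqrt{\mu}+\sqrt{\lambda}$ gives $\eta\sum_{k=t-t^{\star}+1}^{t}\|C_M^{1/2}(I-\eta C_M)^{t-k}C_{M,\lambda}^{1/2}\|\le \eta\sum_{j=0}^{t^{\star}-1}\|(I-\eta C_M)^{j}C_M\|+\sqrt{\lambda}\,\eta\sum_{j=0}^{t^{\star}-1}\|(I-\eta C_M)^{j}C_M^{1/2}\|$, after reindexing $j=t-k$. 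Applying Remark~\ref{remark:Contraction} with $a=1$ and $a=1/2$ (with $t$ replaced by $t^{\star}$) bounds these by $5\log(t^{\star})$ and $5\sqrt{\lambda\eta t^{\star}}$, hence by $\lesssim \log(t^{\star})(1\vee\sqrt{\lambda\eta t^{\star}})$; multiplying by the factor $2$ and by $f(\lambda,m,\delta/(2n))\log\frac{12n}{\delta}$ yields the second term.

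The two concentration invocations (one at scale $\lambda'$, one at scale $\lambda$), each at confidence $1-\delta/2$, are combined by a union bound, which is exactly what produces the argument $\delta/(2n)$ inside $f$ and the external factor $\log\frac{12n}{\delta}$ after Lemma~\ref{lem:conc} is specialised. The main obstacle is precisely this high-probability control of the noise: unlike the expectation bounds of \cite{richards2019optimal}, I must bound $\max_{w\in V}\|C_{M,\lambda}^{-1/2}N_{k,w}\|$ \emph{uniformly} over all agents and all iterates $1\le k\le t$, which is the content of Lemma~\ref{lem:conc}. The remaining delicacy is the choice of split point $t^{\star}$, which must later be taken of order $1/(1-\sigma_2)$ so that the geometrically small well-mixed contribution and the slowly growing poorly-mixed contribution can be traded off.
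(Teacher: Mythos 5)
Your proposal is correct and follows essentially the same route as the paper's proof: the same split at $k=t-t^{\star}$, Lemma~\ref{lem:SpectralBound} plus $\eta\|C_M^{1/2}\|\le\kappa^{-1}$ for the well-mixed block, the $\sqrt{\mu+\lambda}\le\sqrt{\mu}+\sqrt{\lambda}$ splitting with Remark~\ref{remark:Contraction} ($a=1$ and $a=1/2$) for the poorly-mixed block, and a union bound over the two applications of Lemma~\ref{lem:conc} at scales $\lambda'$ and $\lambda$. You even correctly note the $\sqrt{n}$ (and factor $2$) that a literal application of Lemma~\ref{lem:SpectralBound} contributes to the well-mixed term, which the paper's own statement silently absorbs and which is harmless given the eventual choice $\sigma_2^{t^{\star}}\le(tnm)^{-2}$.
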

\begin{proof}[Lemma \ref{lem:E_1}]
Splitting the series at $1 \leq k \leq t-t^{\star}$ we have the following 
\begin{align*}
    & \textbf{E}_1 \leq 
     \Big(  
    \max_{1 \leq k \leq t, w \in V} \| N_{k,w}\| \Big) 
    \underbrace{ \sum_{k=1}^{t-t^{\star}} \eta \sum_{w \in V} |\Delta^{t-k}(v,w)| \|C_{M}^{1/2}(I - \eta C_{M})^{t-k}\|}_{\textbf{E}_{11} }  \\
    & \quad + 
    \Big( \max_{1 \leq k \leq t, w \in V} \|C_{M,\lambda}^{-1/2} N_{k,w}\| \Big)  
    \underbrace{ 
    \sum_{k=t-t^{\star} + 1}^{t} \eta \sum_{w \in V} |\Delta^{t-k}(v,w)| \|C_{M}^{1/2}(I - \eta C_{M})^{t-k}C_{M,\lambda}^{1/2} \| }_{\textbf{E}_{12}}
\end{align*}
To bound $\textbf{E}_{11}$ utilise the mixing properties of the matrix P through Lemma \ref{lem:SpectralBound}. With $\eta \kappa^2 \leq 1$ ensuring that $\eta\|C_{M}^{1/2}(I - \eta C_{M})^{t-k}\| \leq \eta \|C_{M}^{1/2}\| \leq \sqrt{\eta}  \leq \kappa^{-1} $, we arrive at the bound 
\begin{align*}
    \textbf{E}_{11} \leq \kappa^{-1} \sum_{k=1}^{t-t^{\star}} \sigma_{2}^{ t-k} \leq \sigma_2^{t^{\star}}t\kappa^{-1}.
\end{align*}
Meanwhile to bound $\textbf{E}_{12}$ utilise the contraction of the gradients, that is Lemma \ref{Lem:OperatorNorm} remark with $a=1/2$ and $\mathcal{L}= C_{M}$. With $\sum_{w \in V} | \Delta^{t-k}(v,w)| \leq 2$ this allows us to say 
\begin{align*}
    \textbf{E}_{12} & \leq 2 \eta \sum_{k=t-t^{\star}+1}^{t}\| C_{M}(I - \eta C_{M})^{t-k}\|  
    + 2 \eta \sqrt{\lambda} \sum_{k=t-t^{\star}+1}^{t} \|C_{M}^{1/2}(I - \eta C_{M})^{t-k}\|  \\
    & \leq 20 \log(t^{\star}) (1  \vee \sqrt{\lambda \eta t^{\star}}).
\end{align*}
Bounding  $\max_{1 \leq k \leq t, w \in V} \| N_{k,w}\|  \leq \|C_{M,\lambda^{\prime}}^{1/2}\| \max_{1 \leq k \leq t, w \in V} \| C_{M,\lambda^{\prime}}^{-1/2} N_{k,w}\| $ and plugging in high probability bounds for both $\max_{1 \leq k \leq t, w \in V} \| C_{M,\lambda^{\prime}}^{-1/2} N_{k,w}\|$ and $\max_{1 \leq k \leq t, w \in V} \| C_{M,\lambda}^{-1/2} N_{k,w}\|$ from Lemma \ref{lem:conc} yields the result. 
\end{proof}

\subsection{Bounding $\textbf{E}_{2}$}
\label{sec:E2}
The bound for this term utilises the following Lemma to bound operator $\|C_{M}^{1/2}\Pi^{\Delta}(w_{t:k})\|$. To save on national burden, we define the following random quantity for $\lambda > 0$
\begin{align*}
    \Delta_{\lambda} := \max_{v \in V} 
    \|C_{M}^{-1/2}( C_{M} - \widehat{C}_{M}^{(v)} )\|.
\end{align*}
We begin with the following Lemma which rewrites the norm of $\Pi^{\Delta}(w_{t:1})$ for any path $w_{t:1}$ as a series of contractions. 
\begin{lemma}
\label{lem:OperatorNormBound:Delta1}
Let $N \in \mathbb{R}^{M}$ and $w_{t:1} \in V^{t}$ and $\eta \kappa^2 \leq 1$.  Then for $u \in [0,1/2]$
\begin{align*}
    \| C_{M}^{1/2-u} \Pi^{\Delta}(w_{t:1}) N \|
    \leq 
    2 \eta \Delta_{\lambda} \|N\| \sum_{\ell=1}^{t} \|C_{M}^{1/2-u} (I - \eta C_{M})^{t-\ell}  C_{M,\lambda}^{1/2} \|
\end{align*}
\end{lemma}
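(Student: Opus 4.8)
The plan is to expand the centred product $\Pi^{\Delta}(w_{t:1}) = \Pi(w_{t:1}) - (I-\eta C_M)^t$ as a telescoping sum in which each summand isolates a single discrepancy $C_M - \widehat{C}_M^{(w_\ell)}$, with the population contraction collected to its \emph{left}. Writing $A_j := I - \eta\widehat{C}_M^{(w_j)}$ and $B := I - \eta C_M$, so that $\Pi(w_{t:1}) = A_t A_{t-1}\cdots A_1$ and $(I-\eta C_M)^t = B^t$, I would peel factors off the right end. Since $A_\ell - B = \eta(C_M - \widehat{C}_M^{(w_\ell)})$, the one-step identity $\Pi^{\Delta}(w_{t:1}) = \Pi^{\Delta}(w_{t:2})A_1 + B^{t-1}(A_1 - B)$ unrolls, by induction on the path length, to
\[
    \Pi^{\Delta}(w_{t:1}) = \eta \sum_{\ell=1}^{t} (I-\eta C_M)^{t-\ell}\,\big(C_M - \widehat{C}_M^{(w_\ell)}\big)\,\Pi(w_{\ell-1:1}),
\]
with the convention $\Pi(w_{0:1}) = I$. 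This is the decomposition that places the desired power $(I-\eta C_M)^{t-\ell}$ on the correct side, so that it may be merged with $C_M^{1/2-u}$ and $C_{M,\lambda}^{1/2}$ into a single self-adjoint function of $C_M$.

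Next I would apply $C_M^{1/2-u}$, pass to norms with the triangle inequality over $\ell$, and factorise each summand submultiplicatively. For each $\ell$ I insert $C_M^{1/2}C_M^{-1/2} = I$ to split off the discrepancy,
\[
    C_M^{1/2-u}(I-\eta C_M)^{t-\ell}\big(C_M - \widehat{C}_M^{(w_\ell)}\big) = \Big[ C_M^{1/2-u}(I-\eta C_M)^{t-\ell}C_M^{1/2} \Big]\Big[ C_M^{-1/2}\big(C_M - \widehat{C}_M^{(w_\ell)}\big) \Big],
\]
so that the second bracket has norm at most $\Delta_\lambda$ by definition. The first bracket is then passed to the regularised covariance via $C_M^{1/2} = C_{M,\lambda}^{1/2}\big(C_{M,\lambda}^{-1/2}C_M^{1/2}\big)$ together with the spectral inequality $\|C_{M,\lambda}^{-1/2}C_M^{1/2}\| \leq 1$, giving $\|C_M^{1/2-u}(I-\eta C_M)^{t-\ell}C_M^{1/2}\| \leq \|C_M^{1/2-u}(I-\eta C_M)^{t-\ell}C_{M,\lambda}^{1/2}\|$. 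Finally, under $\eta\kappa^2 \leq 1$ and $\|\widehat{C}_M^{(w_j)}\| \leq \kappa^2$ (Assumption \ref{ass:FeatureRegularity}) one has $0 \preceq \eta\widehat{C}_M^{(w_j)} \preceq I$, hence $\|\Pi(w_{\ell-1:1})\| \leq 1$ and $\|\Pi(w_{\ell-1:1}) N\| \leq \|N\|$. Summing over $\ell$ yields $\eta\Delta_\lambda\|N\|\sum_{\ell=1}^{t}\|C_M^{1/2-u}(I-\eta C_M)^{t-\ell}C_{M,\lambda}^{1/2}\|$, comfortably inside the claimed factor of $2$.

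The main obstacle is entirely the non-commutative bookkeeping: because the empirical operators $\widehat{C}_M^{(w_j)}$ commute neither with one another nor with $C_M$, one must fix the telescoping direction so that the population power lands on the left of the isolated discrepancy (where it combines with $C_M^{1/2-u}$ and $C_{M,\lambda}^{1/2}$ into the quantity appearing in the statement), while the uncontrolled empirical tail $\Pi(w_{\ell-1:1})$ is relegated to the right and discarded through the crude contraction bound. The only genuinely load-bearing analytic inputs are the two spectral facts $\|C_{M,\lambda}^{-1/2}C_M^{1/2}\| \leq 1$ and $\|I-\eta\widehat{C}_M^{(w_j)}\| \leq 1$; everything else is the telescoping identity, which I would record as a short induction on the path length.
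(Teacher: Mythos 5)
Your decomposition is exactly the paper's: the identity
$\Pi^{\Delta}(w_{t:1}) = \eta\sum_{\ell=1}^{t}(I-\eta C_M)^{t-\ell}\big(C_M-\widehat{C}_M^{(w_\ell)}\big)\Pi(w_{\ell-1:1})$
is what the paper obtains by unrolling the recursion for the difference of the two auxiliary sequences $\gamma'_{s+1}=(I-\eta\widehat{C}_M^{(w_s)})\gamma'_s$ and $\gamma_{s+1}=(I-\eta C_M)\gamma_s$, and your contraction bound $\|\Pi(w_{\ell-1:1})N\|\le\|N\|$ and the passage to $C_{M,\lambda}^{1/2}$ are likewise the same ingredients. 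Structurally the two proofs coincide.

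The one step I would not let pass is the insertion of $C_M^{1/2}C_M^{-1/2}$ together with the claim that $\|C_M^{-1/2}(C_M-\widehat{C}_M^{(w_\ell)})\|\le\Delta_\lambda$ ``by definition''. The displayed definition of $\Delta_\lambda$ in the paper does read $\max_{v}\|C_M^{-1/2}(C_M-\widehat{C}_M^{(v)})\|$, but that is evidently a typo: the quantity is $\lambda$-dependent everywhere it is used, the concentration result (Lemma~\ref{lem:conc}) controls $\max_{v}\|C_{M,\lambda}^{-1/2}(C_M-\widehat{C}_M^{(v)})\|$, and that is what is substituted for $\Delta_\lambda$ in the downstream bounds on $\mathbf{E}_2$, $\mathbf{E}_3$ and $\mathbf{E}_5$. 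Under the intended definition your inequality runs the wrong way, since $\|C_M^{-1/2}x\|\ge\|C_{M,\lambda}^{-1/2}x\|$ for every $x$ (because $C_{M,\lambda}\succeq C_M$); moreover $C_M^{-1/2}$ need not even be bounded when $C_M$ is singular, so the identity $C_M^{1/2}C_M^{-1/2}=I$ is itself suspect. The repair is immediate and is what the paper does: insert $C_{M,\lambda}^{1/2}C_{M,\lambda}^{-1/2}$ (well defined for every $\lambda>0$) in a single step, which yields
$\|C_M^{1/2-u}(I-\eta C_M)^{t-\ell}C_{M,\lambda}^{1/2}\|\,\|C_{M,\lambda}^{-1/2}(C_M-\widehat{C}_M^{(w_\ell)})\|$
directly, with no detour through $C_M^{\pm 1/2}$ and no need for the auxiliary inequality $\|C_{M,\lambda}^{-1/2}C_M^{1/2}\|\le 1$. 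With that substitution the rest of your argument, including the telescoping induction and the final factor of $2$ being slack, is correct.
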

Given this Lemma we present the high probability bound for $\textbf{E}_{2}$. 
\begin{lemma}[Bounding $\textbf{E}_{2}$]
\label{lem:E2}
Let $\delta \in (0,1]$, $n,m,M \in \mathbb{N}_{+}$ and $\eta\kappa^2 \leq 1$ and $t \geq 2t^{\star} \geq 2$ and $\lambda,\lambda^{\prime} > 0$. Under assumption \ref{ass:FeatureRegularity},\ref{ass:RKHS} and \ref{ass:moment}  we have with probability greater than $1-\delta$
\begin{align*}
    & \textbf{E}_{2} \leq 
    40 \kappa \|C_{M,\lambda^{\prime}}^{1/2}\| \eta t^{\star}
    \log(t)(1 \vee \sqrt{\lambda \eta t})
     \log^2 \frac{12n}{\delta}
     g(\lambda,m)
     f(\lambda^{\prime},m,\delta/(2n))
\end{align*}
\end{lemma}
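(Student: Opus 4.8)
The plan is to reduce the operator-valued summand of $\textbf{E}_{2}$ to a scalar series of contractions via Lemma~\ref{lem:OperatorNormBound:Delta1}, and then to exploit a feature specific to this term: in contrast to $\textbf{E}_{1}$, no mixing of $P$ is used. The outer sum ranges over only the $2t^{\star}+1$ most recent indices $k\in\{t-2t^{\star},\dots,t\}$, and the path deviations $\Delta(w_{t:k})$ are handled by the trivial total-variation bound. The smallness of $\textbf{E}_{2}$ comes instead from the two concentration factors $\Delta_{\lambda}$ and $N_{k,w_{k}}$, each of order $m^{-1/2}$, together with the $O(t^{\star})$ range of the outer sum.

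For a fixed $k$ and path $w_{t:k+1}\in V^{t-k}$ I would first apply Lemma~\ref{lem:OperatorNormBound:Delta1} with $u=0$ to obtain
\[
\|C_{M}^{1/2}\Pi^{\Delta}(w_{t:k+1})N_{k,w_{k}}\|\le 2\eta\,\Delta_{\lambda}\,\|N_{k,w_{k}}\|\sum_{\ell=1}^{t-k}\|C_{M}^{1/2}(I-\eta C_{M})^{t-k-\ell}C_{M,\lambda}^{1/2}\|.
\]
Since $C_{M}$ and $C_{M,\lambda}=C_{M}+\lambda I$ are simultaneously diagonalisable, the eigenvalue inequality $(\sigma+\lambda)^{1/2}\le\sigma^{1/2}+\sqrt{\lambda}$ gives $\|C_{M}^{1/2}(I-\eta C_{M})^{j}C_{M,\lambda}^{1/2}\|\le\|C_{M}(I-\eta C_{M})^{j}\|+\sqrt{\lambda}\,\|C_{M}^{1/2}(I-\eta C_{M})^{j}\|$. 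The two estimates of Remark~\ref{remark:Contraction} (with $\mathcal{L}=C_{M}$ and powers $1$ and $1/2$), after reinstating the outer factor $2\eta$, bound the series by a quantity of order $\log(t)+\sqrt{\lambda\eta t}\lesssim\log(t)\,(1\vee\sqrt{\lambda\eta t})$; the half-power of $\eta$ carried by $\sqrt{\lambda\eta t}$ is precisely the one appearing in the statement. This yields $\|C_{M}^{1/2}\Pi^{\Delta}(w_{t:k+1})N_{k,w_{k}}\|\lesssim\Delta_{\lambda}\,\|N_{k,w_{k}}\|\,\log(t)(1\vee\sqrt{\lambda\eta t})$ uniformly over the path.

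To finish, I would bound the deviations by total variation, $\sum_{w_{t:k}\in V^{t-k+1}}|\Delta(w_{t:k})|=\sum_{w_{t:k}}|P_{vw_{t:k}}-n^{-(t-k+1)}|\le 2$, and write $\|N_{k,w_{k}}\|\le\|C_{M,\lambda'}^{1/2}\|\,\|C_{M,\lambda'}^{-1/2}N_{k,w_{k}}\|$. Invoking Lemma~\ref{lem:conc} on a union of two events (splitting $\delta$ into $\delta/2$ for the $\Delta_{\lambda}$ bound and for the $N$ bound, each over the $n$ agents) then gives $\Delta_{\lambda}\le g(\lambda,m)\log\frac{12n}{\delta}$ and $\max_{k,w}\|C_{M,\lambda'}^{-1/2}N_{k,w}\|\le f(\lambda',m,\delta/(2n))\log\frac{12n}{\delta}$, while the outer sum contributes $\sum_{k=t-2t^{\star}}^{t}\eta=\eta(2t^{\star}+1)\lesssim\eta t^{\star}$. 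Multiplying the pieces, keeping $\|C_{M,\lambda'}^{1/2}\|$ explicit, and absorbing the numerical constants into the prefactor $40\kappa$ produces the claimed bound.

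The genuine analytic content is already packaged in Lemma~\ref{lem:OperatorNormBound:Delta1}, so the main difficulty is bookkeeping rather than a new estimate. The delicate points are: (i) reducing the $C_{M,\lambda}^{1/2}$ factor in the contraction series so that the sharp factor $(1\vee\sqrt{\lambda\eta t})$ emerges instead of a cruder $\sqrt{\eta t}$; (ii) carrying the two independent regularisation levels $\lambda$ (attached to $\Delta_{\lambda}$, hence to $g$) and $\lambda'$ (attached to $N_{k,w_{k}}$, hence to $f$), which are only optimised once all five error terms are combined; and (iii) arranging the union bound so that the failure probabilities compose to $1-\delta$ and the logarithmic factors collapse to $\log^{2}\frac{12n}{\delta}$. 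Conceptually, the point worth emphasising is that $\textbf{E}_{2}$ is controlled with \emph{no} spectral-gap decay: its smallness is purely a consequence of $\Delta_{\lambda}\,\|N_{k,w}\|=O(1/m)$ and the $O(t^{\star})$ length of the outer summation over recent iterates.
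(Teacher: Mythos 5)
Your proposal is correct and follows essentially the same route as the paper's proof: apply Lemma~\ref{lem:OperatorNormBound:Delta1} with $u=0$, split $C_{M,\lambda}^{1/2}$ into $C_M^{1/2}+\sqrt{\lambda}I$ and invoke Remark~\ref{remark:Contraction} to get the $\log(t)(1\vee\sqrt{\lambda\eta t})$ factor, bound the path deviations by total variation, and conclude with Lemma~\ref{lem:conc} and a union bound over the $O(t^\star)$ outer terms. The emphasis that no spectral-gap decay is used here matches the paper's treatment exactly.
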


\begin{proof}[Lemma \ref{lem:E2}]
Using Lemma \ref{lem:OperatorNormBound:Delta1} with $u=0$ we have for any $ t \geq k \geq t-2t^{\star}$ and $w_{t:k} \in V^{t-k+1}$
\begin{align*}
    &  \| C_{M}^{1/2}\Pi^{\Delta}(w_{t:k+1})
    N_{k,w_{k}}\| 
    \leq 
    2 \eta \Delta_{\lambda} \| N_{k,w_{k}}\| \sum_{\ell=1}^{t-k} \|C_{M}^{1/2} (I - \eta C_{M})^{t - k -\ell}  C_{M,\lambda}^{1/2} \| \\
    & \leq 
    2 \eta \Delta_{\lambda}
    \| N_{k,w_{k}}\|
    \Big( \sum_{\ell=1}^{t-k}\|C_{M} (I - \eta C_{M})^{t - k -\ell} \|
    + \sqrt{\lambda}
    \sum_{\ell=1}^{t-k}\|C_{M}^{1/2} (I - \eta C_{M})^{t - k -\ell} \|
    \Big) \\
    & \leq 
    20 \eta \Delta_{\lambda}
    \| N_{k,w_{k}}\|
    \log(t)(1 \vee \sqrt{\lambda \eta t})
\end{align*}
where we applied Lemma \ref{Lem:OperatorNorm} remark \ref{remark:Contraction} to the bound the series of contractions. The case $k=t$ the above quantity is zero. With $\sum_{w_{t:k} \in V^{t-k+1}} |\Delta(w_{t:k})| \leq 2$ this leads to the error term being bounded 
\begin{align*}
    \textbf{E}_{2} 
    \leq 
    40  \Delta_{\lambda}
    \log(t)(1 \vee \sqrt{\lambda \eta t})
     \eta t^{\star}
    \Big( \max_{1 \leq k \leq t, w \in V } \|N_{k,w}\| \Big).
\end{align*}
The final bound is arrived at by bounding for $\lambda^{\prime} > 0 $ the error term in the brackets as $ \max_{1 \leq k \leq t, w \in V } \|N_{k,w}\| \leq \|C_{M,\lambda^{\prime}}^{1/2}\|  \max_{1 \leq k \leq t, w \in V } \|C_{M,\lambda^{\prime}}^{-1/2} N_{k,w}\|$, and plugging in high probability bounds for $\max_{1 \leq k \leq t, w \in V } \|C_{M,\lambda^{\prime}}^{-1/2} N_{k,w}\|$ and $\Delta_{\lambda}$ from Lemma \ref{lem:conc}, with a union bound.  
\end{proof}

\subsection{Bounding $\textbf{E}_{3}$ }
\label{sec:E3}
The bound for this error term is similar to $\textbf{E}_{2}$ and will be presented within the following Lemma. 
\begin{lemma}[Bounding $\textbf{E}_{3}$]
\label{lem:E3}
Let $\delta \in (0,1]$, $n,m,M \in \mathbb{N}_{+}$ and $\eta\kappa^2 \leq 1$ and $t \geq 2t^{\star} \geq 2$ and $\lambda,\lambda^{\prime} > 0$. Under assumption \ref{ass:FeatureRegularity},\ref{ass:RKHS} and \ref{ass:moment}  we have with probability greater than $1-\delta$
\begin{align*}
    & \textbf{E}_{3}
    \leq 
    24 \|C_{M}^{1/2}\| 
    \|C_{M,\lambda^{\prime}}^{1/2}\|
    (\eta t) 
    \sqrt{\eta t^{\star}}
    \big(1 \vee \sqrt{\lambda \eta t^{\star}} \big)
     \log^2 \frac{12n}{\delta}
     g(\lambda,m) f(\lambda^{\prime},m,\delta/(2n))
\end{align*}
\end{lemma}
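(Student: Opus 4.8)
The plan is to mirror the analysis of $\textbf{E}_2$, exploiting that in $\textbf{E}_3$ the deviation product $\Pi^{\Delta}(w_{k+t^{\star}:k+1})$ has \emph{fixed} length $t^{\star}$ and is separated from the leading $C_M^{1/2}$ by the empirical contraction product $\Pi(w_{t:k+t^{\star}+1})$. First I would peel off this leading contraction: since $\eta\kappa^2 \leq 1$, each factor $I-\eta\widehat{C}_M^{(w)}$ is positive semi-definite with norm at most $1$, so $\|\Pi(w_{t:k+t^{\star}+1})\| \leq 1$, and submultiplicativity of the operator norm gives $\|C_M^{1/2}\Pi(w_{t:k+t^{\star}+1})\Pi^{\Delta}(w_{k+t^{\star}:k+1})N_{k,w_k}\| \leq \|C_M^{1/2}\|\,\|\Pi^{\Delta}(w_{k+t^{\star}:k+1})N_{k,w_k}\|$.

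The remaining factor is then controlled by Lemma \ref{lem:OperatorNormBound:Delta1} applied with $u=1/2$ (rather than $u=0$ as in $\textbf{E}_2$, since the $C_M^{1/2}$ has already been absorbed), giving $\|\Pi^{\Delta}(w_{k+t^{\star}:k+1})N_{k,w_k}\| \leq 2\eta\Delta_\lambda\|N_{k,w_k}\|\sum_{\ell=1}^{t^{\star}}\|(I-\eta C_M)^{t^{\star}-\ell}C_{M,\lambda}^{1/2}\|$. I would split $C_{M,\lambda}^{1/2}$ into $C_M^{1/2}$ and $\sqrt{\lambda}\,I$ and invoke Remark \ref{remark:Contraction}, which bounds $\eta\sum_{\ell}\|(I-\eta C_M)^{t^{\star}-\ell}C_M^{1/2}\| \lesssim \sqrt{\eta t^{\star}}$, while the trivial estimate $\|(I-\eta C_M)^{t^{\star}-\ell}\|\leq 1$ gives $\eta\sqrt{\lambda}\sum_{\ell}\|(I-\eta C_M)^{t^{\star}-\ell}\| \lesssim \sqrt{\lambda}\,\eta t^{\star}$; combining these yields the factor $\sqrt{\eta t^{\star}}(1\vee\sqrt{\lambda\eta t^{\star}})$. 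This is exactly where the gain over $\textbf{E}_2$ appears: the deviation block has length $t^{\star}$ rather than up to $t$, so the growth is $\sqrt{\eta t^{\star}}$ instead of $\sqrt{\eta t}$.

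To finish, I would sum over the path deviation using $\sum_{w_{t:k}\in V^{t-k+1}}|\Delta(w_{t:k})| \leq 2$ and over the index $k\in\{1,\dots,t-2t^{\star}-1\}$, the latter contributing the prefactor $\eta t$, to obtain $\textbf{E}_3 \lesssim \eta t\,\|C_M^{1/2}\|\,\Delta_\lambda\,(\max_{k,w}\|N_{k,w}\|)\,\sqrt{\eta t^{\star}}(1\vee\sqrt{\lambda\eta t^{\star}})$. Then, for a second parameter $\lambda^{\prime}>0$, I would write $\max_{k,w}\|N_{k,w}\| \leq \|C_{M,\lambda^{\prime}}^{1/2}\|\max_{k,w}\|C_{M,\lambda^{\prime}}^{-1/2}N_{k,w}\|$ and substitute the high-probability bounds of Lemma \ref{lem:conc} for $\Delta_\lambda$ (producing $g(\lambda,m)$) and for $\max_{k,w}\|C_{M,\lambda^{\prime}}^{-1/2}N_{k,w}\|$ (producing $f(\lambda^{\prime},m,\delta/(2n))$), applying a union bound that splits $\delta$ into $\delta/2$ across the two events and turns the two $\log(6n/\cdot)$ factors into $\log^2(12n/\delta)$. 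Tracking the numerical constants ($2$ from the path sum times $12$ from the deviation bound) gives the stated constant $24$.

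The only genuine obstacle is the treatment of the $C_M^{1/2}$ prefactor. Because $\Pi(w_{t:k+t^{\star}+1})$ is built from the empirical operators $\widehat{C}_M^{(w)}$, which do not commute with $C_M$, one cannot keep $C_M^{1/2}$ attached to the deviation product and apply Lemma \ref{lem:OperatorNormBound:Delta1} with $u=0$ as was done for $\textbf{E}_2$. Bounding $\|C_M^{1/2}\Pi(\cdots)\|$ crudely by $\|C_M^{1/2}\|$ sacrifices some tightness, but this is harmless here precisely because separating the long contraction block from the short ($t^{\star}$-length) deviation block is what produces the favourable $\eta t^{\star}$ dependence; retaining sharpness in the prefactor would not improve the leading order of the residual network error.
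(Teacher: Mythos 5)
Your proposal is correct and follows essentially the same route as the paper's proof: peel off the leading empirical contraction via $\|C_M^{1/2}\Pi(w_{t:k+t^\star+1})\cdot\|\le\|C_M^{1/2}\|\|\cdot\|$, apply Lemma \ref{lem:OperatorNormBound:Delta1} with $u=1/2$ to the length-$t^\star$ deviation block, bound the resulting series of contractions via Remark \ref{remark:Contraction} to get $\sqrt{\eta t^\star}(1\vee\sqrt{\lambda\eta t^\star})$, and finish with the path-deviation bound, the sum over $k$, and the high-probability substitutions from Lemma \ref{lem:conc}. Your closing remark about the non-commutativity forcing the crude $\|C_M^{1/2}\|$ bound, and this being harmless because the short deviation block is what delivers the $\eta t^\star$ gain, accurately reflects the paper's reasoning.
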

\begin{proof}[Lemma \ref{lem:E3}]
For $1 \leq k \leq t-2t^{\star} -1$ and $w_{t:k} \in V^{t-k+1}$ use Lemma \ref{lem:OperatorNormBound:Delta1} with $u=1/2$ as well as $\eta \kappa^2 \leq 1$ to bound with $\lambda > 0 $
\begin{align*}
    & \|C_{M}^{1/2} \Pi(w_{t:k+t^{\star} +1 }) \Pi^{\Delta}(w_{k+t^{\star}: k +1}) 
    N_{k,w_{k}}\|\\
    & \leq 
    \|C_{M}^{1/2}\|\|\Pi^{\Delta}(w_{k+t^{\star}: k +1}) 
    N_{k,w_{k}}\| \\
    & \leq 
    2 \eta \|C_{M}^{1/2}\| 
    \Delta_{\lambda} \|N_{k,w_{k}}\| 
    \sum_{\ell=1}^{t^{\star}} \|(I - \eta C_{M})^{t^{\star} - \ell}C_{M,\lambda}^{1/2} \| \\
    & \leq 
    2  \|C_{M}^{1/2}\| 
    \Delta_{\lambda} \|N_{k,w_{k}}\| 
    \Big(  \eta \sum_{\ell=1}^{t^{\star}}\|(I - \eta C_{M})^{t^{\star} - \ell}C_{M}^{1/2} \| 
    + 
    \sqrt{\lambda} \eta t^{\star}
    \Big) \\
    & \leq 
    12  \|C_{M}^{1/2}\| 
    \Delta_{\lambda} \|N_{k,w_{k}}\| 
    \sqrt{\eta t^{\star}}
    (1 \vee \sqrt{\lambda \eta t^{\star}} )
\end{align*}
where we have bounded the series of contractions using Lemma \ref{Lem:OperatorNorm} remark \ref{remark:Contraction} once again. With $\sum_{w_{t:k} \in V^{t-k+1}} |\Delta(w_{t:k})| \leq 2$, plugging in the above yields the bound for $\textbf{E}_{3}$
\begin{align*}
    \textbf{E}_{3}
    \leq 24 \|C_{M}^{1/2}\| 
    (\eta t) 
    \sqrt{\eta t^{\star}}
    \Delta_{\lambda} 
    (1 \vee \sqrt{\lambda \eta t^{\star}} )
    \Big( 
    \max_{1 \leq k \leq t, w \in V} \|N_{k,w}\| 
    \Big).
\end{align*}
The final bound is arrived at by bounding $\Delta_{\lambda} $ and $\Big(  \max_{1 \leq k \leq t, w \in V} \|N_{k,w}\| \Big)$ in an identical manner to Lemma \ref{lem:E2} for error term $\textbf{E}_{2}$. 
\end{proof}
\subsection{Bounding $\textbf{E}_{4}$}
\label{sec:E4}
This term will be controlled through the convergence of $P^{t^{\star}}$ to the stationary distribution. It is summarised within the following Lemma. 
\begin{lemma}[Bounding $\textbf{E}_{4}$]
\label{lem:E4}
Let $\delta \in (0,1]$, $n,m,M \in \mathbb{N}_{+}$ and $\eta\kappa^2 \leq 1$ and $t \geq 2t^{\star} \geq 2$ and $\lambda  > 0$. Under assumption \ref{ass:FeatureRegularity},\ref{ass:RKHS} and \ref{ass:moment}  we have with probability greater than $1-\delta$
\begin{align*}
    & \textbf{E}_{4} 
    \leq 
    4  \|C_{M,\lambda}^{1/2}\| 
    \big( \sqrt{n} \sigma_2^{t^{\star}} \wedge 1 \big)
    (\eta t)\log \frac{6n}{\delta}
    f(\lambda,m,\delta/n)
\end{align*}

\end{lemma}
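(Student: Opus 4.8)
The plan is to exploit that, unlike $\textbf{E}_2$ and $\textbf{E}_3$, the smallness of $\textbf{E}_4$ comes \emph{entirely} from the mixing factor $\Delta^{t^\star}(w_{k+t^\star+1},w_k)$ and not from the covariance concentration $\Delta_\lambda$. Consequently the product of contractions $C_M^{1/2}\Pi^\Delta(w_{t:k+t^\star+1})(I-\eta C_M)^{t^\star}$ may be controlled by a crude contraction bound, while the combinatorial path sum decouples into a trivial marginal of $P$ and the spectral mixing estimate of Lemma \ref{lem:SpectralBound}.

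First I would bound, uniformly over the path, the operator norm appearing inside the sum. Since $\eta\kappa^2\le 1$ forces $0\preceq I-\eta\widehat{C}_M^{(w)}\preceq I$ and $0\preceq I-\eta C_M\preceq I$, both $\Pi(w_{t:k+t^\star+1})$ and $(I-\eta C_M)^{t-k-t^\star}$ are contractions, so $\|\Pi^\Delta(w_{t:k+t^\star+1})\|\le 2$ and $\|(I-\eta C_M)^{t^\star}\|\le 1$. Combining these with $\|C_M^{1/2}\|\le\|C_{M,\lambda}^{1/2}\|$ and whitening $N_{k,w_k}$ by $C_{M,\lambda}^{-1/2}$ gives
\[\|C_M^{1/2}\Pi^\Delta(w_{t:k+t^\star+1})(I-\eta C_M)^{t^\star}N_{k,w_k}\|\le 2\|C_{M,\lambda}^{1/2}\|\,\max_{w\in V}\|C_{M,\lambda}^{-1/2}N_{k,w}\|,\]
a bound uniform in the path that depends on it only through the index $w_k$, which is absorbed into the maximum over $w\in V$.

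Second I would pull this uniform bound out of the triple sum, leaving the purely combinatorial quantity $\sum_{w_k}\sum_{w_{t:k+t^\star+1}}P_{vw_{t:k+t^\star+1}}\,|\Delta^{t^\star}(w_{k+t^\star+1},w_k)|$. The mixing factor depends on the path only through its terminal node $w_{k+t^\star+1}$, so I would first marginalise the path probability over the interior nodes, using that $P$ is doubly (hence row-) stochastic, to obtain $(P^{t-k-t^\star})_{v w_{k+t^\star+1}}$; then apply Lemma \ref{lem:SpectralBound} to get $\sum_{w_k}|\Delta^{t^\star}(w_{k+t^\star+1},w_k)|\le 2(\sqrt n\,\sigma_2^{t^\star}\wedge 1)$; and finally use $\sum_{w_{k+t^\star+1}}(P^{t-k-t^\star})_{v w_{k+t^\star+1}}=1$. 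This collapses the combinatorial sum to $2(\sqrt n\,\sigma_2^{t^\star}\wedge 1)$, independently of $k$.

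Finally I would sum over $k$ (at most $t$ terms, each carrying the prefactor $\eta$, producing the factor $\eta t$) and insert the high-probability estimate $\max_{1\le k\le t,\,w\in V}\|C_{M,\lambda}^{-1/2}N_{k,w}\|\le f(\lambda,m,\delta/n)\log\frac{6n}{\delta}$ from Lemma \ref{lem:conc}, applied with a union bound over the $n$ agents to produce the argument $\delta/n$ and the factor $\log(6n/\delta)$. Collecting the constant $2$ from the operator bound with the $2$ from the spectral bound yields the claimed factor $4$. The main obstacle is the bookkeeping of the decoupling in the second step, namely correctly marginalising the path probability and isolating the single terminal node on which the mixing deviation acts; no delicate concentration argument is needed here, precisely because the factor $\Delta^{t^\star}$ already supplies the decay in $t^\star$ that makes the crude contraction bound sufficient.
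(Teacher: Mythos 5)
Your proposal is correct and follows essentially the same route as the paper: a crude uniform contraction bound $\|\Pi^{\Delta}(w_{t:k+t^{\star}+1})\|\le 2$ on the operator, decoupling of the path sum by marginalising over the interior nodes and applying Lemma \ref{lem:SpectralBound} to the $\Delta^{t^{\star}}$ factor, then summing over $k$ and invoking Lemma \ref{lem:conc} with a union bound over agents. The only (shared) imprecision is that strictly an extra factor $\|C_M^{1/2}\|$ survives the whitening step, exactly as in the paper's own argument, so nothing further is needed.
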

\begin{proof}[Lemma \ref{lem:E4}]
Begin by bounding for $t-2t^{\star}-1 \geq k \geq 1$ , $w_{k} \in V$ and $w_{t:k+t^{\star} +1} \in V^{t-t^{\star} -k}$ the following 
\begin{align*}
    \| 
    C_{M}^{1/2} 
    \Pi^{\Delta}(w_{t:k+t^{\star}+1}) (I - \eta C_{M} )^{t^{\star}}
    N_{k,w_{k}}\|  
    \leq 
    2 \|C_{M}^{1/2} \|
    \|N_{k,w_{k}}\|.
\end{align*}
Furthermore, we can bound the summation over paths by the deviation of the form $\sum_{w \in V}  |\Delta^{t^{\star}}(v,w )| $ and use Lemma \ref{lem:SpectralBound} thereafter to arrive at 
\begin{align*}
   &  \sum_{w_{k} \in V} 
    \sum_{w_{t:k+t^{\star} +1} \in V^{t-t^{\star} -k}}
    | P_{vw_{t:k+t^{\star}+1}}  \Delta^{t^{\star}}(w_{k+t^{\star} +1},w_{k})| \\
    & = \sum_{w_{t:k+t^{\star} +1} \in V^{t-t^{\star} -k}}
    | P_{vw_{t:k+t^{\star}+1}}| \Big( \sum_{w_{k} \in V}  |\Delta^{t^{\star}}(w_{k+t^{\star} +1},w_{k})| \Big) \\
    & \leq 
    \max_{u \in V} 
    \Big( \sum_{w \in V}  |\Delta^{t^{\star}}(u,w )| \Big)
    \Big( \sum_{w_{t:k+t^{\star} +1} \in V^{t-t^{\star} -k}}| P_{vw_{t:k+t^{\star}+1}}| \Big) \\
    & =
    \max_{u \in V} 
    \Big( \sum_{w \in V}  |\Delta^{t^{\star}}(u ,w )| \Big) \\
    & \leq 
    2 \big( \sqrt{n} \sigma_2^{t^{\star}} \wedge 1 \big).
\end{align*}
Bringing everything together yields the following bound for $\textbf{E}_{4}$
\begin{align}
    \textbf{E}_{4}
    \leq 
    2 \big( \sqrt{n} \sigma_2^{t^{\star}} \wedge 1 \big)
    (\eta t) 
    \Big( \max_{1 \leq k \leq t, w \in V} \| N_{k,w}\|\Big)
\end{align}
Plugging in high probability bounds for $\max_{1 \leq k \leq t, w \in V} \| N_{k,w}\|$ following Lemma \ref{lem:E2} for error term $\textbf{E}_{2}$ then yields the bound. 
\end{proof}

\subsection{Bounding $\textbf{E}_{5}$}
\label{sec:E5}
The summation over paths in this case is decoupled from the error. This allows for a more sophisticated bound to be applied, which considers the deviation of the iterates from the average. 
The following Lemma effectively bounds the norm of $\sum_{w_{t:1} \in V^{t}} \Delta(w_{t:1}) \Pi^{\Delta}(w_{t:1})$, which involves a sum over the paths $w_{t:1}$. 
\begin{lemma}
\label{lem:OperatorNormBound:Delta2}
Let $N \in \mathbb{R}^{M}$, $w_{t:1} \in V^t$ and $\lambda_i \geq 0$ for $i \in \{ 1,2,3 \}$. Then, 
\begin{align*}
    & \| \sum_{w_{t:1} \in V^{t}} \Delta(w_{t:1}) C_{M}^{1/2} \Pi^{\Delta}(w_{t:1}) N\|
    \leq 
    4 \eta \Delta_{\lambda_1}\|N\|
    \sum_{k=1}^{t} 
    \|C_{M}^{1/2} (I - \eta \widehat{C}_{M})^{t-k}
    C_{M,\lambda_1}^{1/2}\|
    ( \sigma_2^{t-k+1}\wedge 1 ) \\
    & \quad + 
    8  \eta^2 \Delta_{\lambda_{2}}\Delta_{\lambda_3} \|N\|
    \sum_{k=2}^{t} \sum_{\ell=1}^{k-1} 
    \|C_{M}^{1/2} (I - \eta \widehat{C}_{M} )^{t-k}C_{M,\lambda_2}^{1/2}\|
    \| (I - \eta \widehat{C}_{M} )^{k-1-\ell} C_{M,\lambda_{3}}^{1/2} \|
    (\sigma_2^{k - \ell} \wedge 1) 
\end{align*}
\end{lemma}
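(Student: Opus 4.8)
The plan is to first exploit that the path deviations have zero total mass. Since
\[
\sum_{w_{t:1}\in V^{t}}\Delta(w_{t:1})=\sum_{w_{t:1}}P_{vw_{t:1}}-\sum_{w_{t:1}}\tfrac{1}{n^{t}}=1-1=0,
\]
and both $(I-\eta C_{M})^{t}$ and $(I-\eta\widehat{C}_{M})^{t}$ are independent of the path, I may freely re-centre $\Pi^{\Delta}$ around the full empirical covariance $\widehat{C}_{M}=\tfrac1n\sum_{w}\widehat{C}^{(w)}_{M}$ at no cost, obtaining
\[
\sum_{w_{t:1}}\Delta(w_{t:1})C_{M}^{1/2}\Pi^{\Delta}(w_{t:1})N=\sum_{w_{t:1}}\Delta(w_{t:1})C_{M}^{1/2}\big(\Pi(w_{t:1})-(I-\eta\widehat{C}_{M})^{t}\big)N .
\]
This is why every contraction in the statement is built from $\widehat{C}_{M}$. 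I then telescope against $(I-\eta\widehat{C}_{M})^{t}$: writing $I-\eta\widehat{C}^{(w_k)}_{M}=(I-\eta\widehat{C}_{M})+\eta(\widehat{C}_{M}-\widehat{C}^{(w_k)}_{M})$ and expanding gives the single-pivot identity
\[
\Pi(w_{t:1})-(I-\eta\widehat{C}_{M})^{t}=\eta\sum_{k=1}^{t}\Pi(w_{t:k+1})(\widehat{C}_{M}-\widehat{C}^{(w_k)}_{M})(I-\eta\widehat{C}_{M})^{k-1},
\]
each summand carrying exactly one deviation factor at position $k$.

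Next I insert this into the path-sum and carry out the summations node by node. Because the $k$-th summand depends on the lower nodes $w_{k-1:1}$ only through $\Delta(w_{t:1})$, row-stochasticity of $P$ collapses $\sum_{w_{k-1:1}}\Delta(w_{t:1})=\Delta(w_{t:k})$. I then split $\Pi(w_{t:k+1})=(I-\eta\widehat{C}_{M})^{t-k}+\big(\Pi(w_{t:k+1})-(I-\eta\widehat{C}_{M})^{t-k}\big)$. For the population-above piece only $w_k$ survives, so summing the upper nodes yields $\sum_{w_{t:k+1}}\Delta(w_{t:k})=\Delta^{t-k}(v,w_k)$; combined with the factor-level identity $\sum_{w}\Delta^{s}(v,w)(\widehat{C}_{M}-\widehat{C}^{(w)}_{M})=\sum_{w}\Delta^{s}(v,w)(C_{M}-\widehat{C}^{(w)}_{M})$ (valid because $\Delta^{s}(v,\cdot)$ sums to zero, killing both $C_{M}$ and $\widehat{C}_{M}$), I bound the norm by inserting $C_{M,\lambda_1}^{1/2}C_{M,\lambda_1}^{-1/2}$, pulling out the factor $\Delta_{\lambda_1}$ and $\sum_{w}|\Delta^{t-k}(v,w)|$ controlled by Lemma \ref{lem:SpectralBound}, and using $\|(I-\eta\widehat{C}_{M})^{k-1}N\|\le\|N\|$ (since $\eta\kappa^{2}\le1$ forces $\|I-\eta\widehat{C}_{M}\|\le1$). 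This produces precisely the first, single sum of the statement.

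For the deviation-above piece the upper factor is $\Pi(w_{t:k+1})-(I-\eta\widehat{C}_{M})^{t-k}$, an empirically-centred deviation on the shorter path $w_{t:k+1}$. Rather than expand it further — which would spawn an uncontrolled nested series — I bound it wholesale with Lemma \ref{lem:OperatorNormBound:Delta1} (in its $\widehat{C}_{M}$-centred form), contributing one factor $\Delta_{\lambda_2}$ and a single sum of contractions $\|C_{M}^{1/2}(I-\eta\widehat{C}_{M})^{t-k'}C_{M,\lambda_2}^{1/2}\|$ indexed by the upper deviation position $k'$. The surviving lower deviation at $w_k$ is then resummed against $\Delta(w_{t:k})$; since the intermediate nodes between $w_k$ and the upper block are free, this produces $\Delta^{k'-k}(w_{k'},w_k)$, whose $\ell_1$-mass is $\sigma_2^{k'-k}\wedge1$ by Lemma \ref{lem:SpectralBound}, together with a second factor $\Delta_{\lambda_3}$ and the intervening contraction $\|(I-\eta\widehat{C}_{M})^{k'-1-k}C_{M,\lambda_3}^{1/2}\|$. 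Relabelling the pivots $(k',k)\mapsto(k,\ell)$ with $k>\ell$ gives exactly the double sum with mixing factor $(\sigma_2^{k-\ell}\wedge1)$; collecting both contributions and the constants closes the bound.

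The main obstacle is the bookkeeping of this two-pivot decomposition: I must verify that bundling all orders $\ge2$ into a single double sum via Lemma \ref{lem:OperatorNormBound:Delta1} is legitimate, i.e.\ that the residual after extracting the lowest pivot is genuinely of the form that lemma treats, and that resumming the free nodes between the two pivots really manufactures $\Delta^{k-\ell}$ with the correct mixing exponent rather than a coarser gap. Keeping the $\widehat{C}_{M}$- versus $C_{M}$-centring consistent throughout (so the factor-level zero-sum trick and the contraction bound $\|I-\eta\widehat{C}_{M}\|\le1$ both apply) and tracking which $C_{M,\lambda_i}^{1/2}$ insertions leave behind the advertised contraction norms is where essentially all the care lies; the spectral and concentration inputs (Lemmas \ref{lem:SpectralBound} and \ref{lem:conc}) are then off-the-shelf.
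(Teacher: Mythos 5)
Your first term goes through: telescoping $\Pi(w_{t:1})-(I-\eta \widehat{C}_M)^t$ around the full empirical covariance, collapsing the lower nodes by stochasticity, splitting off the path-independent power $(I-\eta\widehat{C}_M)^{t-k}$ above the pivot and resumming the upper nodes into $\Delta^{t-k+1}(v,w_k)$ reproduces the single sum; this is the same computation as the paper's bound on $\|C_M^{1/2}(\gamma_{t+1,v}-\overline{\gamma}_{t+1})\|$, just written without the auxiliary iterates. The gap is in the double sum, and it is precisely the obstacle you flag at the end without resolving. The remainder after the first pivot is $\sum_{k}\sum_{w_{t:k}}\Delta(w_{t:k})\,C_M^{1/2}\big[\Pi(w_{t:k+1})-(I-\eta\widehat{C}_M)^{t-k}\big](\widehat{C}_M-\widehat{C}_M^{(w_k)})(I-\eta\widehat{C}_M)^{k-1}N$, and the stated bound needs three things at once: the contraction $\|C_M^{1/2}(I-\eta\widehat{C}_M)^{t-k}C_{M,\lambda_2}^{1/2}\|$ above the upper pivot, the pure power $(I-\eta\widehat{C}_M)^{k-1-\ell}$ between the pivots, and the mixing factor $\sigma_2^{k-\ell}$ from resumming the intermediate nodes. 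Telescoping the bracket with the empirical product on top gives the between-pivot power and lets the intermediate nodes decouple into $\Delta^{k-\ell}$ (the $\tfrac{1}{n}\Delta(w_{t:k})$ remainder vanishes on averaging the lower pivot), but leaves $\Pi(w_{t:k+1})$ above the upper pivot, which can only be bounded by $\kappa\|C_{M,\lambda_2}^{1/2}\|$; the sum over the upper index then costs $\eta t$ instead of $\log t$, yielding a strictly weaker lemma that breaks the $\mathbf{E}_{52}$ rate downstream. Telescoping the other way gives the contraction on top but puts a path product between the pivots, so the intermediate nodes no longer decouple from $\Delta(w_{t:\ell})$ and no $\sigma_2^{k-\ell}$ appears. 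Your proposed escape, bounding the bracket ``wholesale'' with a $\widehat{C}_M$-centred Lemma \ref{lem:OperatorNormBound:Delta1}, takes an operator norm, after which the path structure is gone and you are forced back to $\sum_{w_{t:k}}|\Delta(w_{t:k})|\leq 2$ with no mixing factor; iterating the split instead spawns the infinite expansion you wanted to avoid, convergent only under an extra condition like $\eta t\,\Delta_\lambda<1$.

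The paper terminates the recursion at second order by a device your sketch lacks: it introduces the distributed iterates $\gamma_{s,v}$, their uniform counterpart $\gamma^{\prime}_{s}$, and the network average $\overline{\gamma}_s$, and splits the error as $(\gamma_{t+1,v}-\overline{\gamma}_{t+1})+(\overline{\gamma}_{t+1}-\gamma^{\prime}_{t+1})$. Averaging over $v$ kills the $P$-dependence, so $\overline{\gamma}_{t+1}-\gamma^{\prime}_{t+1}=\eta\sum_k\tfrac{1}{n}\sum_w(I-\eta\widehat{C}_M)^{t-k}(\widehat{C}_M-\widehat{C}_M^{(w)})(\gamma_{k,w}-\overline{\gamma}_k)$, where the insertion of $\overline{\gamma}_k$ is free because the averaged covariance deviations annihilate it. The pure power above the pivot comes for free, and $\|\gamma_{k,w}-\overline{\gamma}_k\|$ is controlled by the already-established first-order estimate, which carries both the between-pivot power and the $\sigma_2^{k-\ell}$ factor. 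Some centring at the network average of this kind is needed; as written, your argument does not deliver the stated double sum.
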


The bound for this error term is then summarised within the following Lemma. 

\begin{lemma}[Bounding $\textbf{E}_{5}$]
\label{lem:E5}
Let $\delta \in (0,1]$, $n,m,M \in \mathbb{N}_{+}$ and $\eta\kappa^2 \leq 1$ and $t \geq 2t^{\star} \geq 2$ and $\lambda^{\prime},\lambda_{i}  > 0$ for $i =1,\dots,3$. Under assumption \ref{ass:FeatureRegularity},\ref{ass:RKHS} and \ref{ass:moment} and if $\frac{9 \kappa^2}{M} \log \frac{M}{\delta} \leq \lambda_i$  for $i=1,2$ then with probability greater than $1-8 \delta$
\begin{align*}
    \textbf{E}_{5} \leq \textbf{E}_{51}  + \textbf{E}_{52} 
\end{align*}
where 
\begin{align*}
    \textbf{E}_{51} 
    & \leq 
    84 \|C_{M}^{1/2} C_{M,\lambda_1}^{1/2}\| \|C_{M,\lambda^{\prime}}^{1/2}\| 
    \eta t ( 1 \vee \sigma_2^{t^{\star}} \eta t  \vee \lambda_1 \eta t^{\star}) 
    \times 
    g(\lambda_1,m)  f(\lambda^{\prime},nm,\delta)
    \log(t)
    \log^2 \frac{6 n}{\delta} 
    \\
    \textbf{E}_{52} 
    & \leq 
    160 \|C_{M,\lambda^{\prime}}^{1/2}\|  \|C_{M,\lambda_3}^{1/2}\|
    (\eta t) 
    (1 \vee \lambda_2 \eta t) (\sigma_2^{t^{\star}} \eta t \vee \eta t^{\star}) 
    \times 
     g(\lambda_2,m) g(\lambda_3,m) f(\lambda^{\prime},nm,\delta)
    \log(t) 
    \log^3 \frac{6n}{\delta}  
\end{align*}
\end{lemma}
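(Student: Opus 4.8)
The plan is to reduce the inner norm appearing in each summand of $\textbf{E}_5$ to the operator estimate of Lemma \ref{lem:OperatorNormBound:Delta2} and then perform the outer summation over $k$. For fixed $k$ the path $w_{t:k+t^{\star}+1}$ has length $t-t^{\star}-k$, so after relabelling the indices I would apply Lemma \ref{lem:OperatorNormBound:Delta2} with effective horizon $T=t-t^{\star}-k$ and with the generic vector taken to be $N=(I-\eta C_M)^{t^{\star}}N_k$. This splits $\|\sum_{w_{t:k+t^{\star}+1}}\Delta(w_{t:k+t^{\star}+1})\,C_M^{1/2}\,\Pi^{\Delta}(w_{t:k+t^{\star}+1})\,(I-\eta C_M)^{t^{\star}}N_k\|$ into a \emph{single-sum} part carrying one factor $\Delta_{\lambda_1}$ and a \emph{double-sum} part carrying the product $\Delta_{\lambda_2}\Delta_{\lambda_3}$; after the outer $\sum_{k=1}^{t-2t^{\star}-1}\eta$ these become $\textbf{E}_{51}$ and $\textbf{E}_{52}$ respectively. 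Since $\eta\kappa^2\le 1$ gives $\|(I-\eta C_M)^{t^{\star}}\|\le 1$, I would bound $\|N\|\le\|N_k\|\le\|C_{M,\lambda'}^{1/2}\|\,\|C_{M,\lambda'}^{-1/2}N_k\|$, deferring the insertion of the concentration bound on $\|C_{M,\lambda'}^{-1/2}N_k\|$. The crucial structural point, which justifies using Lemma \ref{lem:OperatorNormBound:Delta2} rather than the cruder triangle-inequality route used for $\textbf{E}_2$ and $\textbf{E}_3$, is that here the path summation is decoupled from the error $N_k$, which is the network average $\tfrac1n\sum_w N_{k,w}$ and therefore concentrates at the faster rate $1/\sqrt{nm}$ through $f(\lambda',nm,\delta)$.

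Next I would control the contraction series produced by Lemma \ref{lem:OperatorNormBound:Delta2}. The subtlety is that these are \emph{empirical} contractions of the form $(I-\eta\widehat{C}_M)^{s}$, whereas Remark \ref{remark:Contraction} supplies operator-norm estimates for the population operator. Under the hypothesis $\tfrac{9\kappa^2}{M}\log\tfrac{M}{\delta}\le\lambda_i$, the regularised square roots $\widehat{C}_{M,\lambda_i}^{1/2}$ and $C_{M,\lambda_i}^{1/2}$ are equivalent up to absolute constants on a high-probability event (a standard consequence of Lemma \ref{lem:conc} applied to the full covariance $\widehat{C}_M$ over all $nm$ samples). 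This lets me transfer each empirical norm $\|C_M^{1/2}(I-\eta\widehat{C}_M)^{s}C_{M,\lambda_i}^{1/2}\|$ to its population counterpart and then invoke Remark \ref{remark:Contraction} with $a=1,1/2$, yielding factors of the form $\log(t)(1\vee\sqrt{\lambda_i\eta t})$ for the single sum and $\sqrt{\eta t^{\star}}\,(1\vee\sqrt{\lambda_3\eta t^{\star}})$ for the inner range of the double sum.

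The main obstacle, I expect, is combining these contraction estimates with the geometric mixing factors $(\sigma_2^{s}\wedge 1)$ that Lemma \ref{lem:OperatorNormBound:Delta2} attaches to each term, and summing over $k$ so as to land the stated maxima $(1\vee\sigma_2^{t^{\star}}\eta t\vee\lambda_1\eta t^{\star})$ for $\textbf{E}_{51}$ and $(\sigma_2^{t^{\star}}\eta t\vee\eta t^{\star})$ for $\textbf{E}_{52}$. The mechanism is a regime split: when the free index is small the mixing factor is essentially $1$ and the contraction decay must carry the sum, whereas for large index $\sigma_2^{s}$ decays geometrically and dominates. Splitting each sum at the threshold $s\simeq t^{\star}$ and bounding the two regimes separately produces exactly the product form $\sigma_2^{t^{\star}}\eta t\vee\eta t^{\star}$. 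The double sum in $\textbf{E}_{52}$ is the most delicate, since there the factor $\sigma_2^{k-\ell}$ couples the two summation indices, and one must verify that summing the inner contraction $\|(I-\eta\widehat{C}_M)^{k-1-\ell}C_{M,\lambda_3}^{1/2}\|$ against $\sigma_2^{k-\ell}$ does not reintroduce a factor growing faster than $\eta t$.

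Finally I would insert the high-probability bounds of Lemma \ref{lem:conc}: each $\Delta_{\lambda_i}$ is controlled by $g(\lambda_i,m)\log\tfrac{6n}{\delta}$ at the agentwise rate $1/\sqrt m$, while $\|C_{M,\lambda'}^{-1/2}N_k\|$ is controlled by $f(\lambda',nm,\delta)$ at the network rate $1/\sqrt{nm}$, together with the two norm-equivalence events used for $\lambda_1,\lambda_2$. A union bound over these concentration and equivalence events accounts for the $1-8\delta$ probability, and collecting the resulting constants and logarithmic factors gives the displayed bounds for $\textbf{E}_{51}$ and $\textbf{E}_{52}$.
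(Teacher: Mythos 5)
Your proposal follows essentially the same route as the paper: applying Lemma \ref{lem:OperatorNormBound:Delta2} with $N=(I-\eta C_M)^{t^{\star}}N_k$ to split off the single-sum and double-sum contributions as $\textbf{E}_{51}$ and $\textbf{E}_{52}$, exploiting the decoupled network-averaged error $N_k$ to get the $f(\lambda',nm,\delta)$ rate, invoking the regularised norm equivalence under $\tfrac{9\kappa^2}{M}\log\tfrac{M}{\delta}\le\lambda_i$ to handle the empirical contractions, splitting each sum at the threshold $t^{\star}$ to trade the mixing factor against the contraction decay, and closing with a union bound. The only (immaterial) deviation is in the inner sum of $\textbf{E}_{52}$, where the paper simply bounds each contraction by $\|C_{M,\lambda_3}^{1/2}\|$ and counts $t^{\star}$ terms rather than using the $\sqrt{\eta t^{\star}}$ contraction estimate you suggest; both give a bound of the same order.
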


\begin{proof}[Lemma \ref{lem:E5}]
Applying for $1 \leq k \leq t-2t^{\star}-1$ Lemma \ref{lem:OperatorNormBound:Delta2} with $N=(I - \eta C_{M})^{t^{\star}}N_{k} = N^{\prime}_{k}$, and $w_{t:k+t^{\star}+1} \in V^{t-t^{\star}-k}$ to elements within the series of $\textbf{E}_{5}$ we arrive at 
\begin{align*}
    \textbf{E}_{5}
    & \leq 
    4 \sum_{k=1}^{t-2t^{\star}-1}
    \eta^2  \Delta_{\lambda_1}\|N^{\prime}_{k} \|
    \sum_{\ell=1}^{t-t^{\star}-k } 
    \| C_{M}^{1/2} (I - \eta \widehat{C}_{M} )^{t-t^{\star} - k -\ell}
    C_{M,\lambda_1}^{1/2}\| \big( \sigma_2^{t-t^{\star}-k- \ell + 1} \wedge 1 \big)\\
    & \quad + 
    8  \sum_{k=1}^{t-2t^{\star}-1} 
    \eta^3 \Delta_{\lambda_{2}}\Delta_{\lambda_3} \|N^{\prime}_{k}\|
    \sum_{\ell=2}^{t-t^{\star}-k }  \sum_{j=1}^{\ell -1 } 
    \|C_{M}^{1/2} (I - \eta \widehat{C}_{M})^{t-t^{\star}-k -\ell}C_{M,\lambda_2}^{1/2}\|
     \\
    & \quad\quad\quad\quad\quad\quad\quad\quad\quad\quad\quad\quad\quad\quad\quad\quad\quad
    \times   \| (I - \eta \widehat{C}_{M} )^{\ell - j -1} C_{M,\lambda_3}^{1/2} \| (\sigma_2^{\ell-j} \wedge 1)\\
    & = \textbf{E}_{51} + \textbf{E}_{52}
\end{align*}
where we have labelled the remaining error terms $\textbf{E}_{51},\textbf{E}_{52}$. Each of these terms are now bounded. 

To bound the first term $\textbf{E}_{51}$, begin by for $1 \leq k \leq t-2t^{\star}-1$ splitting the series at $1 \leq \ell \leq t-2t^{\star} - k$ to arrive at
\begin{align*}
    & \eta \sum_{\ell=1}^{t-t^{\star}-k } 
    \| C_{M}^{1/2} (I - \eta \widehat{C}_{M})^{t-t^{\star} - k-\ell}
    C_{M,\lambda_1}^{1/2}\| \big( \sigma_2^{t-t^{\star}-k- \ell + 1} \wedge 1 \big)  \\
    & \leq 
    \|C_{M}^{1/2}C_{M,\lambda_1}^{1/2}\|   \eta 
    \sum_{\ell=1}^{t-2 t^{\star}-k }   \big( \sigma_2^{t-t^{\star}-k- \ell + 1} \wedge 1 \big)
    + 
    \eta \sum_{\ell=t-2 t^{\star}-k}^{t-t^{\star} - k }
    \| C_{M}^{1/2} (I - \eta \widehat{C}_{M})^{t-t^{\star} - k - \ell  }
    C_{M,\lambda_1}^{1/2}\| \\
    & \leq 
    \|C_{M}^{1/2}C_{M,\lambda_1}^{1/2}\|   \eta 
    \sum_{\ell=1}^{t-2 t^{\star}-k }   \big( \sigma_2^{t-t^{\star}-k- \ell + 1} \wedge 1 \big)\\
    & \quad\quad  
    + 
    \eta \|C_{M}^{1/2} \widehat{C}_{M,\lambda_1}^{-1/2} \| \|\widehat{C}_{M,\lambda_1}^{-1/2} C_{M,\lambda_1}^{1/2} \| 
    \sum_{\ell=t-2 t^{\star}-k}^{t-t^{\star} - k}
    \| \widehat{C}_{M,\lambda_1}^{1/2} (I - \eta \widehat{C}_{M})^{t-t^{\star} - k - \ell } \widehat{C}_{M,\lambda_1}^{1/2}\| \\
    & \leq 
    \|C_{M}^{1/2}C_{M,\lambda_1}^{1/2}\| \sigma_2^{t^{\star}} \eta t 
    + 10 \|C_{M}^{1/2} \widehat{C}_{M,\lambda_1}^{-1/2} \| \|\widehat{C}_{M,\lambda_1}^{-1/2} C_{M,\lambda_1}^{1/2} \| 
    \log(t)(1 \vee \lambda_1 \eta t^{\star})
\end{align*}
where for the first series used  that $\sigma_2^{t-t^{\star}-k- \ell + 1} \leq \sigma_2^{t^{\star}}$ from $\ell \leq t-2t^{\star}-k$ meanwhile for the second series 
\begin{align*}
    & \eta \sum_{\ell=t-2 t^{\star}-k}^{t-t^{\star} - k}
    \| \widehat{C}_{M,\lambda_1}^{1/2} (I - \eta \widehat{C}_{M})^{t-t^{\star} - k - \ell  } \widehat{C}_{M,\lambda_1}^{1/2}\|\\
    & \leq 
    \eta \sum_{\ell=t-2 t^{\star}-k}^{t-t^{\star} - k}\| \widehat{C}_{M} (I - \eta \widehat{C}_{M})^{t-t^{\star} - k - \ell  }\|
    + 
    \eta \lambda_1  
    \sum_{\ell=t-2 t^{\star}-k}^{t-t^{\star} - k}\|(I - \eta \widehat{C}_{M})^{t-t^{\star} - k - \ell }\| \\
    & \leq 5 \log (t) + 5 \lambda_1 \eta t^{\star} 
\end{align*}
to which we applied Lemma \ref{Lem:OperatorNorm} remark \ref{remark:Contraction} to bound the series of contractions. 
This leads to the bound for $\textbf{E}_{51}$  
\begin{align*}
    \textbf{E}_{51}
    \leq 
    4 \Delta_{\lambda_1}
    \eta t 
    \Big( \|C_{M}^{1/2}C_{M,\lambda_1}^{1/2}\| \sigma_2^{t^{\star}} \eta t 
    + 10 \|C_{M}^{1/2} \widehat{C}_{M,\lambda_1}^{-1/2} \| \|\widehat{C}_{M,\lambda_1}^{-1/2} C_{M,\lambda_1}^{1/2} \|  
    \log(t)(1 \vee \lambda_1 \eta t^{\star}) \Big) 
    \big( \max_{1 \leq k \leq t} \|N^{\prime}_{k}\| \big). 
\end{align*}
Provided $\frac{9 \kappa^2}{M} \log \frac{M}{\delta} \leq \lambda_{1}$ we have from Lemma 3 in \cite{carratino2018learning} that with probability greater than $1- \delta$ 
\begin{align*}
    \|C_{M}^{1/2}\widehat{C}_{M,\lambda_1}^{-1/2}\|
    \|\widehat{C}_{M,\lambda_1}^{-1/2}C_{M,\lambda_1}^{1/2}\|
    \leq 
    \|\widehat{C}_{M,\lambda_1}^{-1/2}C_{M,\lambda_1}^{1/2}\|^2
    \leq 2.
\end{align*}
Meanwhile for $\lambda^{\prime} > 0$, we can bound \\ 
$ \max_{1 \leq k \leq t} \|N^{\prime}_{k}\|  \leq  \|C_{M,\lambda^{\prime}}^{1/2} \| 
 \max_{1 \leq k \leq t} \|C_{M,\lambda^{\prime}}^{-1/2} N^{\prime}_{k}\| \leq \|C_{M,\lambda^{\prime}}^{1/2} \|\max_{1 \leq k \leq t} \|C_{M,\lambda^{\prime}}^{-1/2} N_{k}\|$. The bound is arrived at by also plugging in high probability bounds for $\|C_{M,\lambda^{\prime}}^{-1/2} N_{k}\|$ and $\Delta_{\lambda_1}$ from Lemma \ref{lem:conc}.
 
Finally to bound $\textbf{E}_{52}$. Begin by bounding for $1 \leq k \leq t-2t^{\star} -1 $ as well as $2 \leq \ell \leq t^{\star}$ the series as 
\begin{align*}
    \sum_{j=1}^{\ell - 1 }  \| (I - \eta \widehat{C}_{M} )^{\ell - j} C_{M,\lambda_3}^{1/2} \| (\sigma_2^{\ell-j} \wedge 1)
    \leq 
    \| C_{M,\lambda_3}^{1/2}\|t^{\star}. 
\end{align*}
Meanwhile for $t^{\star} + 1 \leq \ell \leq t-t^{\star} - k $ we can split the series as $1 \leq j \leq \ell - t^{\star}$ 
\begin{align*}
    & \sum_{j=1}^{\ell -1 }  \| (I - \eta \widehat{C}_{M} )^{\ell - j} C_{M,\lambda_3}^{1/2} \| (\sigma_2^{\ell-j} \wedge 1)\\
     & \leq 
    \| C_{M,\lambda_3}^{1/2} \| \sum_{j=1}^{\ell - t^{\star} }  
     (\sigma_2^{\ell-j} \wedge 1) 
     + 
    \sum_{j=\ell - t^{\star} + 1}^{ \ell - 1 }
    \| (I - \eta \widehat{C}_{M} )^{\ell - j} C_{M,\lambda_3}^{1/2} \| \\
    & \leq 
    \| C_{M,\lambda_3}^{1/2} \| ( \sigma_2^{t^{\star}} t  + t^{\star})
\end{align*}
where for the first series we applied $j \leq \ell - t^{\star}$ to say $\sigma_2^{\ell-j} \leq \sigma_2^{t^{\star}}$, and for the second simply summed up the $t^{\star}$ terms after bounding   $\| (I - \eta \widehat{C}_{M} )^{\ell - j} C_{M,\lambda_3}^{1/2} \| \leq  \|C_{M,\lambda_3}^{1/2} \|$. Plugging in the above bound for all $2 \leq \ell \leq t-t^{\star} - k$ we arrive at the following bound for $\textbf{E}_{52}$
\begin{align*}
    \textbf{E}_{52}
    & \leq 
     8 \Delta_{\lambda_{2}}\Delta_{\lambda_{3}} 
    \big( \max_{1 \leq k \leq t} \|N_{k}^{\prime}\| \big)
    \| C_{M,\lambda_3}^{1/2} \| ( \sigma_2^{t^{\star}} \eta t  + \eta t^{\star}) 
    \sum_{k=1}^{t-2t^{\star} - 1} 
    \eta^2
    \sum_{\ell=2}^{t-t^{\star} - k}
    \|C_{M}^{1/2} (I - \eta \widehat{C}_{M})^{t-t^{\star}-k -\ell}C_{M,\lambda_2}^{1/2}\| 
\end{align*}
For $1 \leq k \leq t-2t^{\star}-1$ the series of contractions over $\ell$ can be bounded using Lemma \ref{Lem:OperatorNorm} remark \ref{remark:Contraction} in a similar manner to previously as 
\begin{align*}
    \eta \sum_{\ell=2}^{t-t^{\star} - k}
    \|C_{M}^{1/2} (I - \eta \widehat{C}_{M})^{t-t^{\star}-k -\ell} C_{M,\lambda_2}^{1/2}\| 
    \leq 
    \| C_{M}^{1/2} \widehat{C}_{M,\lambda_2}^{-1/2} \| 
    \|\widehat{C}_{M,\lambda_2}^{-1/2}C_{M,\lambda_2}^{1/2}\|
    10 \log(t) (1 \vee \lambda_{2} \eta t). 
\end{align*}
Summing up the remaining series for over $k$, using that $ \| C_{M}^{1/2} \widehat{C}_{M,\lambda_2}^{-1/2} \| 
\|\widehat{C}_{M,\lambda_2}^{-1/2}C_{M,\lambda_2}^{1/2}\| \leq 2$ from $\frac{9 \kappa^2}{M} \log \frac{M}{\delta} \leq \lambda_2$, plugging in high probability bounds for $\max_{1 \leq k \leq t} \|N_{k}^{\prime}\|$ from the the error term $\textbf{E}_{51}$, as well as high probability bounds for $\Delta_{\lambda_2},\Delta_{\lambda_3}$ from Lemma \ref{lem:conc} yields the bound. 
\end{proof}

\section{Final bounds}
In this section we bring together the high probability bounds for the Statistical Error and Distributed Error. This section is then as follows. Section \ref{sec:proof:Refined} provides the proof for Theorem \ref{thm:WorstCase}. Section  \ref{sec:proof:WorstCast} gives the proof for Theorem \ref{thm:WorstCase}.

\subsection{Refined Bound (Theorem \ref{thm:Refined})}
\label{sec:proof:Refined}
In this section we give conditions under which we obtained a refined bound. 
\begin{proof}[Theorem \ref{thm:Refined}]
Fixing $\delta \in (0,1]$ and a constant $c_{\text{union}} > 1 $, assume that 
\begin{align*}
    \eta t & = (nm)^{\frac{1}{2r+\gamma}} \\
    M & \geq \Big( (nm)^{\frac{1 + \gamma(2r-1) }{2r+\gamma}} \Big) \vee \Big( \eta t \log \frac{60 n \kappa^2 ( \eta t \vee M) c_{\text{union}}  }{\delta} \Big)  \\
    t^{\star} & \geq 2 \frac{\log(nm t)}{1-\sigma_2} \\
    m & \geq \Big( ( 1 \vee (\eta t^{\star}))^{2r+\gamma} n^{2r/\gamma}\Big) \vee \Big( ( 1 \vee (\eta t^{\star}))^{2} n \Big) 
    \vee 
    \Big( (1 \vee \eta t^{\star})^{\frac{ (1+\gamma)(2r+\gamma)}{2(r+\gamma-1)}} n^{\frac{(r+1)}{(r+\gamma - 1)}}\Big)
\end{align*}
Now, consider the error decomposition given \eqref{equ:ErrorDecomp}, to arrive at the bound
\begin{align*}
    \mathcal{E}(f_{t+1,v}) - \mathcal{E}(f_{\mathcal{H}}) 
    \leq 
    2 \underbrace{ 
    \| S_{M} \widehat{\omega}_{t+1,v} - S_{M}\widehat{v}_{t} \|_{\rho}^2 
    }_{( \text{Network Error})^2}
    + 
    2\underbrace{ 
    \| S_{M}\widehat{v}_{t}  - P f_{\rho} \|_{\rho}^2 
    }_{ (\text{Statistical Error})^2}.
\end{align*}
Begin by bounding the statistical error by using Lemma \ref{lem:StatisticalError}. Using Assumption \ref{ass:SourceCap} to bound $\mathcal{N}(\frac{1}{\eta t}) \leq Q^2 (\eta t)^{\gamma}$, and noting that $M \geq (4 + 18 \eta t \kappa^2) \log \frac{60 \kappa^2 \eta t}{\delta}$ is satisfied, allows us to upper bound with probability greater than $1-\delta$
\begin{align*}
    & \| S_{M}\widehat{v}_{t}  - P f_{\rho} \|_{\rho}^2 
    \leq 
    (nm)^{-2r/(2r+\gamma)}
    \Big( 
    c_1^2 \Big( 1 \vee \frac{ (\eta t ) \log \frac{3 M}{\delta}}{M} \Big) (1 \vee Q^2)  
    \log^2(t) \log^2 \Big( \frac{12}{\delta}\Big)  
    + c_3^2 
    \Big) 
    \\
    & \quad 
     + c_2^2 
    \Big( \frac{1}{M^{2r}}  \vee \frac{ Q^2 }{ M (nm )^{(1-\gamma) (2r-1 )/(2r+\gamma) } } \Big)
    \log^{2(1-r)}( 11 \kappa^2 \eta t) 
    \log^2 \big( \frac{6}{ \delta}\big) 
\end{align*}
The quantity within the brackets for second term is then upper bounded $ \frac{1}{ M (nm )^{(1-\gamma) (2r-1 )/(2r+\gamma) }} \leq (nm)^{-2r/(2r+\gamma)}$ provided $M \geq (nm)^{\frac{1 + \gamma(2r-1)}{2r+\gamma}}$, which is satisfied as an assumption in the Theorem. This results in an upper bound on the statistical error that is, up to log factors, decreasing as $(nm)^{-2r/(2r+\gamma)}$ in high probability. 

We now proceed to bound the \textit{Network Error} Term. Begin by considering error decomposition given in \eqref{equ:NetworkErrorDecomp} into the terms $\textbf{E}_{1},\textbf{E}_2,\textbf{E}_3,\textbf{E}_{4},\textbf{E}_{5}$, in particular by applying the inequality $(a + b)^{2} \leq 2 a^2 + 2b^2$  multiple times we get 
\begin{align*}
    \| S_{M} \widehat{\omega}_{t+1,v} - S_{M}\widehat{v}_{t} \|_{\rho}^2 
    \leq 2 \textbf{E}_1^2 + 4 \textbf{E}_2^2 + 8 \textbf{E}_3^2 + 16 \textbf{E}_4^2 + 32 \textbf{E}_5^2,
\end{align*}
and thus it is sufficient to show each of these terms is decreasing as $(nm)^{-2r/(2r+\gamma)}$ in high probability. Before doing so we note Lemma 4  in \cite{carratino2018learning} states for any $\lambda >0$ that if
\begin{align*}
    M \geq \big( 4 + \frac{18 \kappa^2}{\lambda} \big) \log \frac{12 \kappa^2}{\lambda \delta } 
\end{align*}
then with probability greater than $1-\delta$ we have $\mathcal{N}_{M}(\lambda) \leq q \mathcal{N}(\lambda)$ where $q = \max\big( 2.55,\frac{2\kappa^2}{\|L\|} \big)$. We note this is satisfied with both $\lambda = (\eta t)^{-1},(1 \vee (\eta t^{\star}))^{-1}$ by the assumptions within the Theorem, and as such, we can interchange from $\mathcal{N}_{M}(\lambda)$ to $\mathcal{N}(\lambda)$ with at most a constant cost of $q$. 

We begin by bounding $\textbf{E}_1^2$ by considering Lemma \ref{lem:E_1} with $\lambda^{\prime} = \kappa^2$ and $\lambda = (1 \vee \eta t^{\star})^{-1}$, which leads to with probability greater than $1-\delta$
\begin{align*}
    \textbf{E}_1^2 
 \leq \Big( 
 2 \|C_{M,\lambda^{\prime}}^{1/2}\|^2 \sigma_2^{2 t^{\star}}t^2\kappa^{-2} 
 (f(\lambda^{\prime},m,\delta/(2n)))^2
+ 
40 \log^2(t^{\star}) (f(\lambda,m,\delta/(2n)))^2
\Big)\log^2\frac{12n}{\delta}
\end{align*}
Now due to $t^{\star} \geq \frac{ 2 \log(nm t) }{1-\sigma_2} \geq  \frac{ 2 \log(nm t) }{-\log(\sigma_2)} $ (the second inequality arising from $\log(x) \geq 1- x^{-1}$ for $x \geq 0$) we have $\sigma_2^{t^{\star}} \leq (t nm)^{-2}$. As such with the fact that $f(\kappa^2, m, \delta/(2n)) \lesssim  m^{-1/2}$ in high probability, the first term above is decreasing, upto logarithmic factors, as $(nm)^{-2r/(2r+\gamma)}$. Meanwhile for the second term we have that 
\begin{align*}
    f( (1 \vee \eta t^{\star})^{-1},m,\delta/2)^2 \leq 
    a_1^2 \big( 
    \frac{(1 \vee \eta t^{\star}) }{m^2 } \vee \frac{ (1 \vee \eta t^{\star})^{\gamma}}{m}
    \big)
    \big( 1 \vee \frac{3(\eta t \kappa \vee 1) }{M} \log \frac{6 M n }{\delta} \big)
\end{align*}
for the constant $a_1 = 64 \Big( \sqrt{B}(\kappa \vee \sqrt{ \sqrt{p} q}) \Big) \vee  \big( \kappa \vee \sqrt{q} \big)$. For $\textbf{E}_1^2$ to be decreasing at the rate $(nm)^{-2r/(2r+\gamma)}$, up to logarithmic factors, we then require $\frac{ (1 \vee \eta t^{\star})^{\gamma}}{m} \leq (nm)^{-2r/(2r+\gamma)}$ which is satisfied when $m \geq (1 \vee (\eta t^{\star}))^{2r+\gamma} n^{2r/\gamma}$.

Proceed to bound $\textbf{E}_2^2$ by considering Lemma \ref{lem:E2} with $\lambda = 1/(\eta t)$ and $\lambda^{\prime} = \kappa^2$ to arrive at with probability greater than $1-\delta $
\begin{align*}
    \textbf{E}_2^2 
    \leq 
    40^2 \kappa^2 \|C_{M,\lambda^{\prime}}^{1/2}\|^2 \log^2(t) 
    (\eta t^{\star})^2 (g(\lambda,m))^2 (f(\lambda^{\prime},m,\delta/(2n)))^2
    \log^4 \frac{12 n}{\delta}
\end{align*}
As discussed previously, we have with high probability that $(f(\kappa^2,m,\delta/(2n)))^{2} \lesssim 1/m$, meanwhile 
\begin{align*}
    g((\eta t)^{-1},m)^2 \leq 
    a_2^2 \big( \frac{\eta t}{m^2} \vee \frac{(\eta t)^{\gamma}}{m} \big)
\end{align*}
where $a_2 = 8 \kappa(\kappa \vee \sqrt{q}) $. As such for $\textbf{E}_2^2$ to be decreasing at the rate $(nm)^{-2r/(2r+\gamma)}$ we require $\frac{(\eta t)^{\gamma} (1 \vee \eta t^{\star})^2 }{m^2} \leq (nm)^{-2r/(2r+\gamma)}$ which, plugging in $\eta t = (nm)^{1/(2r+\gamma)}$ is satisfied when $m \geq (1 \vee \eta t^{\star})^2 n $.

Bounding $\textbf{E}_3$ using Lemma \ref{lem:E3} with $\lambda = (1 \vee (\eta t^{\star}))^{-1}$  and $\lambda^{\prime} = \kappa^2$ we have with probability greater than $1- \delta$
\begin{align*}
    \textbf{E}_3^2 
    \leq 
    24^2 \|C_{M}^{1/2}\|^2 \|C_{M,\lambda^{\prime}}^{1/2}\|^2
    (\eta t)^2 (\eta t^{\star}) (g(\lambda,m))^2 (f(\lambda^{\prime},m,\delta/(2n)))^2
    \log^4 \frac{12n}{\delta}.
\end{align*}
Following the steps for $\textbf{E}_2$, we have with high probability that $f(\kappa^2,m,\delta/(2n))^2 \lesssim 1/m$, meanwhile $g( (1 \vee (\eta t^{\star}))^{-1},m) \lesssim (1 \vee \eta t^{\star})^{\gamma}/m$. As such for $\textbf{E}_{3}^2$ to be decreasing with the rate $ (nm)^{-2r/(2r+\gamma)}$ we require $\frac{ (\eta t)^2 (1 \vee \eta t^{\star})^{1+\gamma}}{m^2} \leq (nm)^{-2r/(2r+\gamma)}$, which is satisfied when $r+\gamma > 1$ and $m \geq (1 \vee \eta t^{\star})^{\frac{ (1+\gamma)(2r+\gamma)}{2(r+\gamma-1)}} n^{\frac{(r+1)}{(r+\gamma - 1)}}$.

Now to bound $\textbf{E}_4$ we consider Lemma \ref{lem:E4} with $\lambda = \kappa^2$ to arrive at with probability greater than $1-\delta$
\begin{align*}
    \textbf{E}_4^2 
    \leq 
    16 \|C_{M,\lambda}^{1/2}\|^2 
    (n \sigma_2^{2t^{\star}} \wedge 1) (\eta t)^2 \log^2 \Big( \frac{6n}{\delta}\Big) (f(\lambda,m,\delta/n))^2.
\end{align*}
Following the previous analysis we know with high probability $(f(\lambda,m,\delta/n))^2 = \widetilde{O}(1/m)$ and that $t^{\star}$ is such that $\sigma_2^{t^{\star}} \leq (tnm)^{-2}$. Combining these two facts we have that $\textbf{E}_{4}^{2} $ is of the order $(nm)^{-2r/(2r+\gamma)}$ with high probability. 

The bound for $\textbf{E}_5^2$ is naturally split across the terms $\textbf{E}_{51},\textbf{E}_{52}$ from Lemma \ref{lem:E5}. In particular we have that 
\begin{align*}
    \textbf{E}_{5}^2 
    \leq 
    2 \textbf{E}_{51}^2 + 2\textbf{E}_{52}^2 
\end{align*}
The remainder of the proof then shows each of the terms above are decreasing at the rate $ (nm)^{-2r/(2r+\gamma)}$ in high probability by using the bounds provided within Lemma \ref{lem:E5}. We note the condition $\frac{9\kappa^2}{M} \log \frac{M 9 \kappa^2}{\delta} \leq \lambda_i$ for $i=1,2$ is satisfied for $\lambda_1 = (1 \vee (\eta t^{\star}))^{-1}$ and $\lambda_2 = (\eta t )^{-1}$ by the assumptions.

Consider the bound for $\textbf{E}_{51}$ with $\lambda_1 = (1 \vee (\eta t^{\star}))^{-1}$ and $\lambda^{\prime} = \kappa^2$, so we have with probability greater than $1-\delta$
\begin{align*}
    \textbf{E}_{51}^2 
    \leq 
    84^2 \|C_{M}^{1/2}C_{M,\lambda_1}^{1/2}\|^2 \|C_{M,\lambda^{\prime}}^{1/2}\|^2
    (\eta t)^{2} (1 \vee \sigma_2^{2t^{\star}}(\eta t )^2) 
    (g(\lambda_1,m))^{2} (f(\lambda^{\prime},nm,\delta/8 ))^2 \log^2(t) \log^4 \frac{48 n}{\delta}. 
\end{align*}
From previously we have that $t^{\star}$ so that $\sigma_2^{t^{\star}} \leq (t nm)^{-2}$ and thus $\sigma_2^{t^{\star}}\eta t \leq 1$. Meanwhile following steps from previously we have  $(g(\lambda_1,m))^2 \lesssim (1 \vee (\eta t^{\star}))^{\gamma}/m$ as well as with high probability $(f(\lambda^{\prime},nm,\delta))^{2} \lesssim (nm)^{-1}$. As such we require $ \frac{ (\eta t)^2 (1 \vee (\eta t^{\star}))^{\gamma}}{m(nm)} \leq (nm)^{-2r/(2r+\gamma)}$ which is satisfied when  $r+\gamma > 1$ and $m \geq n^{\frac{2 - \gamma}{2(r+\gamma - 1)}} ( 1 \vee (\eta t^{\star}))^{\frac{\gamma(2r+\gamma)}{2(r+\gamma - 1)}}$. This is then implied by the assumption that  $m \geq (1 \vee \eta t^{\star})^{\frac{ (1+\gamma)(2r+\gamma)}{2(r+\gamma-1)}} n^{\frac{(r+1)}{(r+\gamma - 1)}}$ and $r + \gamma \geq 1$. 

Finally to bound $\textbf{E}_{52}$ consider the bound given with $\lambda_2 = (\eta t)^{-1}$, and $\lambda_3 = \lambda^{\prime} = \kappa^2$ to arrive at with probability greater than $1-\delta$
\begin{align*}
    \textbf{E}_{52}^2 
    \leq 
    160^2 \|C_{M,\lambda^{\prime}}^{1/2}\|^2 \|C_{M,\lambda_3}^{1/2}\|^2 
    (\eta t)^2 \big( \sigma_2^{t^{\star}} \eta t \vee (\eta t^{\star} )^2 \big)  g(\lambda_2,m) g(\lambda_3,m) f(\lambda^{\prime},nm,\delta/8) \log^2(t) 
    \log^{6}\frac{48 n}{\delta}.
\end{align*}
Once again $\sigma_2^{t^{\star}} \leq (tnm)^{-2}$ ensures $\sigma_2^{t^{\star}}\eta t \leq (1 \vee \eta t^{\star})$. Meanwhile we have $(g(\lambda_2,m))^2 \lesssim (\eta t)^{\gamma} / m$, $(g(\lambda_3,m))^2 \lesssim 1/m$ and with high probability $(f(\lambda^{\prime},nm,\delta/8))^2 \lesssim 1/(nm)$. As such to ensure this term is sufficiently small we require $\frac{ (\eta t)^{2 +\gamma} ( 1 \vee \eta t^{\star})^2}{m^2 (nm)} \leq (nm)^{-2r/(2r+\gamma)}$, which satisfied if $m \geq n^{\frac{1}{2r+\gamma}} (1 \vee (\eta t^{\star}))^{\frac{2r+\gamma}{2r+   \gamma - 1}}$. This then being implied by $m \geq (1 \vee \eta t^{\star})^{\frac{ (1+\gamma)(2r+\gamma)}{2(r+\gamma-1)}} n^{\frac{(r+1)}{(r+\gamma - 1)}}$ since $\frac{r+1}{r+\gamma -1} \geq \frac{1}{2r+\gamma}$ and $\frac{ (1+\gamma) (2r+\gamma)}{2(r+\gamma -1 )} \geq \frac{2r+\gamma}{2r+\gamma - 1}$. The second inequality arising from the observation that $\frac{1}{2(r+\gamma - 1)} \geq \frac{1}{2(r+\gamma - 1) + 1 - \gamma } = \frac{1}{2r+\gamma -1} $.

Each of the bounds for $\textbf{E}_i^2$ for $i=1,\dots,5$ hold in high probability, and as such, can be combined with a union bound. This incurs at most a logarithmic factor in the bound, with the number of unions applied being upper bounded by the constant $c_{\text{union}} > 1$ chosen at the start.

\end{proof}

\subsection{Worst Case (Theorem \ref{thm:WorstCase})}
\label{sec:proof:WorstCast}
Consider the refined bound in Theorem \ref{thm:Refined} with $r=1/2$ and $\gamma = 1$. 

\subsection{Leading Order Error Terms (Theorem \ref{thm:LeadingOrder})}
\label{sec:proof:LeadingOrder}
Follow the proof of Theorem \ref{thm:Refined}, where the error is decomposed into the following terms 
\begin{align*}
    \mathcal{E}(f_{t+1,v}) - \mathcal{E}(f_{\mathcal{H}})
    \leq 
    (\text{Network Error})^2 + (\text{Statistical Error})^2.
\end{align*}
The statistical error follows \cite{carratino2018learning} and, in our work, is summarised within Lemma \ref{lem:StatisticalError} to be upto logarithmic factors in high-probability 
\begin{align*}
    (\text{Statistical Error})^2
    \lesssim 
    \underbrace{ 
    \big( 1\vee \frac{\eta t }{M} \big)
    \frac{(\eta t)^{\gamma}}{nm}}_{\text{Sample Variance}}
    + 
    \underbrace{ 
    \frac{1}{M ( \eta t)^{(1 - \gamma)(2r- 1)}}}_{\text{Random Fourier Error}}
    + 
    \underbrace{ 
    \frac{1}{(\eta t)^{2r} }}_{\text{Bias}}.
\end{align*}
Meanwhile the network error is bounded into terms 
\begin{align*}
    (\text{Network Error})^{2}
    \lesssim
    \textbf{E}_1^2
    + 
    \textbf{E}_2^2
    + 
    \textbf{E}_3^2
    + 
    \textbf{E}_4^2
    + 
    \textbf{E}_5^2
\end{align*}
where high-probability bounds from Section \ref{sec:NetworkErrorBounds} are used. In particular, the bounds each term are, up to logarithmic factors, in high probability
\begin{align*}
    \textbf{E}_1^2 
    & \lesssim 
    \frac{ (\eta t^{\star})^{\gamma}}{m} \\
    \textbf{E}_2^2  
    & \lesssim
    \frac{ (\eta t^{\star})^2 (\eta t)^{\gamma} }{m^2} \\
    \textbf{E}_3^2 
    & 
    \lesssim 
    \frac{ (\eta t)^{2} (\eta t^{\star})^{1 + \gamma}}{m^2} \\
    \textbf{E}_{4}^2 
    & \lesssim
    \frac{ n \sigma_2^{2t^{\star}} (\eta t)^{2} }{m} \\
    \textbf{E}_{5}^2
    & \lesssim 
    \frac{ (\eta t)^{2} (1 \vee (\eta t^{\star}))^{\gamma}}{m(nm)}
    + 
    \frac{(\eta t)^{2+\gamma}(1 \vee \eta t^{\star})^2}{m^2 (nm)}
\end{align*}
The leading order terms are then defined as $\textbf{E}_{1}^{2}$ and $\textbf{E}_{3}^{2}$.

\section{Proofs of Auxiliary Lemmas}
In this section we provide the proofs of the auxiliary lemmas. This section is then as follows. Section \ref{sec:lem:conc} provides the proof for Lemma \ref{lem:conc}. Section \ref{sec:proof:lem:OperatorNormBound:Delta1} provides the proof of Lemma \ref{lem:OperatorNormBound:Delta1}. Section \ref{sec:proof:lem:OperatorNormBound:Delta2} provides the proof of Lemma \ref{lem:OperatorNormBound:Delta2}. 
\subsection{Concentration of Error terms (Lemma \ref{lem:conc})}
\label{sec:lem:conc}

\begin{proof}[Lemma \ref{lem:conc}]
Fix $w \in V$. We begin by collecting the necessary concentration results. Following Lemma 18 in \cite{lin2018optimal} with $\mathcal{T}_{\rho}, \mathcal{T}_{\mathbf{x}}$  swapped for $C_{M},\widehat{C}_{M}^{(w)}$ respectively (or Proposition 5 in \cite{rudi2017generalization})  we have with probability greater than $1-\delta$ 
\begin{align*}
    \|C_{M,\lambda}^{-1/2}( C_{M} - \widehat{C}_{M}^{(w)})\| 
    \leq 
    2 \kappa \Big( \frac{2 \kappa}{m \sqrt{\lambda}} + \sqrt{\frac{ \mathcal{N}_{M}(\lambda)}{m}}\Big) \log \frac{2}{\delta}
\end{align*}
From Lemma 2 in \cite{carratino2018learning} under assumptions \ref{ass:FeatureRegularity} and \ref{ass:RKHS} we have with probability greater than $1-\delta$ for all $t \geq 1$
\begin{align*}
    \|\widetilde{v}_{t+1}\| \leq 2 R \kappa^{2r -1}
    \Big( 1 + \sqrt{ \frac{9 \kappa^2 }{M} \log \frac{M}{\delta}}\max \Big( \sqrt{\eta t},\kappa^{-1}\Big) \Big).
\end{align*}
Meanwhile from Lemma 6 in \cite{rudi2017generalization} under assumption \ref{ass:FeatureRegularity} and \ref{ass:moment} we have with probability greater than $1-\delta$
\begin{align*}
    \|C_{M,\lambda}^{-1/2}(\widehat{S}_{M}^{(w) \top}\widehat{y} - S^{\star}_{M}f_{\rho}) \| 
    \leq 
    2\sqrt{ B} \Big( \frac{  \kappa}{\sqrt{\lambda}m} + \sqrt{ \frac{ 2 \sqrt{p} \mathcal{N}_{M}(\lambda) }{m}}\Big)\log \frac{2}{\delta}
\end{align*}
Considering $\|C_{M,\lambda}^{-1/2}N_{k,w}\|$, using triangle inequality and plugging the above bounds with a union bound, we have with probability greater than $1-\delta$
\begin{align*}
    & \|C_{M,\lambda}^{-1/2}N_{k,w}\|
     \leq 
    \|C_{M,\lambda}^{-1/2}( C_{M} - \widehat{C}_{M}^{(w)})\| \|\widetilde{v}_{t+1}\| 
    + 
    \|C_{M,\lambda}^{-1/2}(\widehat{S}_{M}^{(w) \star}\widehat{y} - S^{\star}_{M}f_{\rho})\| \\
    & \leq 
    2 \kappa \Big( \frac{2 \kappa}{m \sqrt{\lambda}} + \sqrt{\frac{ \mathcal{N}_{M}(\lambda)}{m}}\Big) \log \frac{6}{\delta}
    \Big( 1 + \sqrt{ \frac{9 \kappa^2 }{M} \log \frac{3 M}{\delta}}\max \Big( \sqrt{\eta t},\kappa^{-1}\Big) \Big) \\
    & \quad + 
    2 \sqrt{B} \Big( \frac{\kappa}{\sqrt{\lambda}m} + \sqrt{ \frac{2 \sqrt{p} \mathcal{N}_{M}(\lambda) }{m}}\Big)\log \frac{6}{\delta}.
\end{align*}
Now a bound over the maximum $\max_{w \in V} \|C_{M,\lambda}^{-1/2}N_{k,w}\|$ is obtained by taking a union bound over $w \in V$. Meanwhile, an identical set of steps with $\widehat{C}_{M}^{(w)}, \widehat{S}^{(w), \top}_{M}$ swapped for $\widehat{C}_{M}, \widehat{S}_{M}$ yields the bound for $\|C_{M,\lambda}^{-1/2}N_{k}\|$ and $\|C_{M,\lambda}^{-1/2}( C_{M} - \widehat{C}_{M})\|  $. 
\end{proof}

\subsection{Difference between Product of Empirical and Population Operators (Lemma \ref{lem:OperatorNormBound:Delta1})}
\label{sec:proof:lem:OperatorNormBound:Delta1}
In this section we provide the proof for Lemma \ref{lem:OperatorNormBound:Delta1}.
\begin{proof}[Lemma \ref{lem:OperatorNormBound:Delta1}]
Begin by writing the quantity $\Pi^{\Delta}(w_{t:1})N$ using two auxiliary sequences. Initialized at $\gamma_1 = \gamma^{\prime}_1 = N$ and updated for $t \geq s \geq 1$ we have
\begin{align*}
    \gamma^{\prime}_{s+1} & = (I - \eta \widehat{C}_{M}^{(w_{s})}) \gamma^{\prime}_s = \Pi(w_{s:1}) N \\
    \gamma_{s+1} & = (I - \eta C_{M} ) \gamma_s  =  (I - \eta C_{M})^{s} N
\end{align*}
We can then write the difference as between these two sequences  as the recursion
\begin{align*}
    & \gamma^{\prime}_{s+1} - \gamma_{s+1}
     = 
    (I - \eta C_{M}) (\gamma^{\prime}_{s} - \gamma_{s})
    + 
    \eta \big\{ C_{M} - \widehat{C}_{M}^{(w_{s})} \big\}
    \gamma^{\prime}_{s}\\
    & = 
    (I - \eta C_{M} )^{s}(\gamma^{\prime}_{1} - \gamma_{1}) 
    + 
    \sum_{\ell=1}^{s} \eta (I - \eta C_{M})^{s-\ell}  
    \big\{ C_{M} - \widehat{C}_{M}^{(w_{\ell})} \big\} 
    \gamma^{\prime}_{\ell}\\
    & = 
    \sum_{\ell=1}^{s} \eta (I - \eta C_{M})^{s-\ell} 
    \big\{ C_{M} - \widehat{C}_{M}^{(w_{\ell})} \big\} \gamma^{\prime}_{\ell}.
\end{align*}
We then have
\begin{align*}
    \| C_{M}^{1/2-u} \Pi^{\Delta}(w_{t:1}) N \|
    & = 
    \|C_{M}^{1/2-u}(\gamma_{t+1}^{\prime} - \gamma_{t+1})\| \\
    &  = 
    \| \sum_{\ell=1}^{t} \eta 
    C_{M}^{1/2-u} 
    (I - \eta C_{M})^{t-\ell} 
    \big\{ C_{M} - \widehat{C}_{M}^{(w_{\ell})} \big\} 
    \gamma^{\prime}_{\ell}\| \\
    & \leq 
    \sum_{\ell=1}^{t} \eta \| C_{M}^{1/2-u} 
    (I - \eta C_{M})^{t-\ell}  
    C_{M,\lambda}^{1/2} \| 
    \| C_{M,\lambda}^{-1/2}( C_{M} - \widehat{C}_{M}^{(w_{\ell})} )\| \|\gamma_{\ell}^{\prime}\| \\
    & \leq 
    \Delta_{\lambda} \|N\| \sum_{\ell=1}^{t} \eta 
    \|C_{M}^{1/2-u}  (I - \eta C_{M})^{t-\ell}  
    C_{M,\lambda}^{1/2} \|
\end{align*}
where we have taken out the maximum over the $w_{\ell} \in V$ for
$\| C_{M,\lambda}^{-1/2}( C_{M,\lambda} - \widehat{C}_{M}^{(w_{\ell})} )\|$ and simply bounded $\|\gamma_{\ell}^{\prime}\| = \|(I - \eta \widehat{C}_{M}^{(w_{\ell-1})} ) \gamma^{\prime}_{\ell-1}\| \leq \|\gamma^{\prime}_{\ell-1}\|\leq \|N\|  $ from $\eta \kappa^2 \leq 1$. 
\end{proof}

\subsection{Convolution of Difference between Product of Empirical and Population Operators (Lemma \ref{lem:OperatorNormBound:Delta2})}
\label{sec:proof:lem:OperatorNormBound:Delta2}
This section provides the proof of Lemma \ref{lem:OperatorNormBound:Delta2}. 

\begin{proof}[Lemma \ref{lem:OperatorNormBound:Delta2}]
Begin by observing that this quantity can be written as 
\begin{align*}
    \sum_{w_{t:1} \in V^{t}} \Delta(w_{t:1}) \Pi^{\Delta}(w_{t:1}) N
    & = \sum_{w_{t:1} \in V^{t}} \Delta(w_{t:1}) \Pi(w_{t:1})  N
    - 
    \sum_{w_{t:1} \in V^{t}} \Delta(w_{t:1}) (I - \eta C_{M})^{t} N\\
    & = \sum_{w_{t:1} \in V^{t}} \Delta(w_{t:1})
    \Pi(w_{t:1}) N
\end{align*}
since $\sum_{w_{t:1} \in V^{t}} \Delta(w_{t:1}) = 0$.  Now introduce the following auxiliary variables. Initialized as $\gamma_{1,w} = \gamma_{1,w}^{\prime} = N$ for all $w \in V$ we update the sequences for $t \geq s \geq 1$
\begin{align}
\label{equ:AuxiliarySequ}
    \gamma_{s+1,v} & = \sum_{w \in V} P_{vw} 
    (I - \eta \widehat{C}_{M}^{(w)}) \gamma_{s,w} = 
    \sum_{w_{s:1} \in V^{s}} P_{v w_{s:1}} \Pi(w_{s:1}) N \\
    \gamma_{s+1,v}^{\prime} & = 
    \sum_{w \in V} \frac{1}{n} (I - \eta \widehat{C}_{M}^{(w)} ) \gamma^{\prime}_{s,w} =  
    \sum_{w_{s:1} \in V^{s}}  \frac{1}{n^{s}} \Pi(w_{s:1})  N
    \nonumber . 
\end{align}
The quantity bounded within Lemma \ref{lem:OperatorNormBound:Delta2} can then be seen as the difference
\begin{align*}
    \| C_{M}^{1/2}(\gamma_{t+1,v}-\gamma_{t+1,v}^{\prime})\| 
    = 
    \Big\| 
    \sum_{w_{t:1} \in V^{t}} 
    \Delta(w_{t:1})
    C_{M}^{1/2}  \Pi(w_{t:1})N 
    \Big\|.
\end{align*}
Introducing the auxiliary sequence $\{\gamma^{\prime}_{s}\}_{s \geq 1}$ independent of the agents. Also initialised $\gamma_{1,w}^{\prime} = N =: \gamma^{\prime}_{1}$ for all $w \in V$ we have due to averaging over all of the agents uniformly $\gamma_{2,w}^{\prime} = \gamma^{\prime}_{2} = (I - \eta \widehat{C}_{M})N$ for all $w \in V$. Applying this recursively we have for $s \geq 1$ and $v \in V$
\begin{align*}
    \gamma_{s+1,v}^{\prime}  = \gamma^{\prime}_{s+1} =  (I - \eta \widehat{C}_{M})^{s}N. 
\end{align*}
Combined with the fact that the iterates $\{\gamma_{s,v}\}_{s \in [t],v\in V}$ can be written and unravelled
\begin{align*}
    & \gamma_{t+1,v} 
     = 
    \sum_{w \in V} P_{vw} \big( (I - \eta \widehat{C}_{M}) \gamma_{t,w} 
    + \eta \big\{ \widehat{C}_{M}  - \widehat{C}_{M}^{(w)} \big\} \gamma_{t,w}
    \big) \\
    & = 
    (I - \eta \widehat{C}_{M})^{t}N 
    + 
    \eta \sum_{k=1}^t  \sum_{w \in V} (P^{t-k+1})_{vw} (I - \eta \widehat{C}_{M})^{t-k}
    \big\{ \widehat{C}_{M} - \widehat{C}_{M}^{(w)} \big\} \gamma_{k,w},
\end{align*}
means the difference is written as 
\begin{align*}
    \gamma_{t+1,v} - \gamma_{t+1,v}^{\prime} 
    = 
    \eta \sum_{k=1}^t  \sum_{w \in V} (P^{t-k+1})_{vw} (I - \eta \widehat{C}_{M})^{t-k}
    \big\{ \widehat{C}_{M} - \widehat{C}_{M}^{(w)} \big\} \gamma_{k,w}.
\end{align*}
To analyse the difference $\gamma_{t+1,v} - \gamma_{t+1,v}^{\prime}$ we then consider the following decomposition where we denote the network averaged iterates $\overline{\gamma}_{t} = \frac{1}{n} \sum_{w \in V} \gamma_{t,w}$ 
\begin{align}
\label{equ:AuxiliarySequDecomp}
    \|C_{M}^{1/2}(\gamma_{t+1,v} - \gamma_{t+1,v}^{\prime})\|
    \leq 
    \underbrace{ 
    \|C_{M}^{1/2}(\gamma_{t+1,v} - \overline{\gamma}_{t+1})\|
    }_{\textbf{Term 1}}
    + \underbrace{ 
    \|C_{M}^{1/2} (\overline{\gamma}_{t+1} - \gamma^{\prime}_{t+1}) \|}_{\textbf{Term 2}}
\end{align}
It is clear the network average can be written using the fact that the communication matrix $P$ is doubly stochastic i.e. $\sum_{v \in V} P^{t-k+1}_{vw} = 1$ as follows 
\begin{align*}
    \overline{\gamma}_{t+1} - \gamma^{\prime}_{t+1}
     = \frac{1}{n} \sum_{v \in V} \gamma_{t+1,v} - \gamma^{\prime}_{t+1} 
     = 
    \eta \sum_{k=1}^{t} \frac{1}{n} \sum_{w \in V}  (I - \eta \widehat{C}_{M} )^{t-k} 
    \big\{ \widehat{C}_{M} - \widehat{C}_{M}^{(w)}  \big\} \gamma_{k,w}.
\end{align*}
When taking the difference we then arrive at 
\begin{align*}
    \gamma_{t+1,v} - \gamma^{\prime}_{t+1} - ( \overline{\gamma}_{t+1} - \gamma^{\prime}_{t+1})
    = 
    \eta \sum_{k=1}^{t}  \sum_{w \in V} ( (P^{t-k+1})_{vw} - \frac{1}{n}) 
    (I - \eta \widehat{C}_{M} )^{t-k} 
    \big\{ \widehat{C}_{M} - \widehat{C}_{M}^{(w)} \big\}
    \gamma_{k,w}
\end{align*}
We can then bound \textbf{Term 1} with $\lambda_1 > 0 $
\begin{align*}
    & \|C_{M}^{1/2}(\gamma_{t+1,v} - \overline{\gamma}_{t+1} )\|\\
    & \leq 
    \eta \sum_{k=1}^{t} \sum_{w \in V} 
    |(P^{t-k+1})_{vw} - \frac{1}{n}|
    \| C_{M}^{1/2} (I - \eta \widehat{C}_{M})^{t-k} C_{M,\lambda_1}^{1/2}
    \|
    \|C_{M,\lambda_1}^{-1/2} 
    \big\{ \widehat{C}_{M}  - \widehat{C}_{M}^{(w)} \big\}\|
    \| \gamma_{k,w}\| \\
    & \leq 
    2 \eta \Delta_{\lambda_1} \|N\|
    \sum_{k=1}^{t} 
    \| C_{M}^{1/2} (I - \eta \widehat{C}_{M}^{1/2} )^{t-k}
    C_{M,\lambda_1}^{1/2} \|
    \big( 
    \sum_{w \in V} 
    |(P^{t-k+1})_{vw} - \frac{1}{n}|
    \big)\\
    & \leq 
    4 \eta \Delta_{\lambda_1}\|N\|
    \sum_{k=1}^{t} 
    \|C_{M}^{1/2} (I - \eta \widehat{C}_{M})^{t-k}
    C_{M,\lambda_1}^{1/2}\|
    ( \sigma_2^{t-k+1}\wedge 1 )
\end{align*}
where we have used that $\|\gamma_{s+1,v}\| \leq \sum_{w \in V} P_{vw} \|(I - \eta \widehat{C}_{M}^{(w)}) \gamma_{s,w}\| \leq  \sum_{w \in V} P_{vw} \|\gamma_{s,w}\| \leq \|N\|$ as well as 
\begin{align*}
    \|C_{M,\lambda_1}^{-1/2}( \widehat{C}_{M}   - \widehat{C}_{M}^{(w)}) \|
    & \leq 
    \|C_{M,\lambda_1}^{-1/2}(\widehat{C}_{M} - C_{M})\|
    + 
    \|C_{M,\lambda_1}^{-1/2}( C_{M}  - \widehat{C}_{M}^{(w)} )\|\\
    & \leq 
    \frac{1}{n}\sum_{v \in V}
    \|C_{M,\lambda_1}^{-1/2}(C_{M} - \widehat{C}_{M}^{(v)} )\|
    + 
    \|C_{M,\lambda_1}^{-1/2}( C_{M}  - \widehat{C}_{M}^{(w)} )\|\\
    & \leq 
    2 \Delta_{\lambda_{1}}
\end{align*}
in addition to Lemma \ref{lem:SpectralBound} to bound $\sum_{w \in V}  |(P^{t-k+1})_{vw} - \frac{1}{n}| = \sum_{w \in V} | \Delta^{t-k+1}(v,w)| $.

To bound \textbf{Term 2} we note that we can rewrite 
\begin{align*}
    \overline{\gamma}_{t+1}  - \gamma_{t+1}^{\prime}
    & = 
    \eta 
    \sum_{k=2}^{t} \frac{1}{n} \sum_{w \in V}  (I - \eta \widehat{C}_{M} )^{t-k} 
    \big\{ \widehat{C}_{M}  - \widehat{C}_{M}^{(w)} \big\} (\gamma_{k,w}-\overline{\gamma}_{k}).
\end{align*}
where $\frac{1}{n} \sum_{w \in V}(I - \eta \widehat{C}_{M} )^{t-k} \big\{ \widehat{C}_{M}   - \widehat{C}_{M}^{(w)}  \big\} \overline{\gamma}_{k}  = 0$ for $k \geq 1$. Applying triangle inequality as well as similar step to previously, we get with $ \lambda_2,\lambda_3 \geq 0$ 
\begin{align*}
    & \|C_{M}^{1/2}( \overline{\gamma}_{t+1} - \gamma_{t+1}^{\prime}) \|\\
    & \leq
    \eta \sum_{k=2}^{t} \|C_{M}^{1/2}  (I - \eta \widehat{C}_{M})^{t-k}
    C_{M,\lambda_2}^{1/2}\|
    \frac{1}{n} \sum_{w \in V}
    \|C_{M,\lambda_2}^{-1/2}( \widehat{C}_{M}  - \widehat{C}_{M}^{(w)} )\|
    \|\gamma_{k,w}-\overline{\gamma}_{k}\| \\
    & \leq 
    8  \eta^2 \Delta_{\lambda_{2}}\Delta_{\lambda_3} \|N\|
    \sum_{k=2}^{t} \sum_{\ell=1}^{k-1} 
    \|C_{M}^{1/2} (I - \eta \widehat{C}_{M} )^{t-k}C_{M,\lambda_2}^{1/2}\|
    \| (I - \eta \widehat{C}_{M} )^{k-1-\ell} C_{M,\lambda_{3}}^{1/2} \|
    (\sigma_2^{k - \ell} \wedge 1) 
\end{align*}
where we plugged in the bound from \textbf{Term 1} for the deviation $\|\gamma_{k,w}-\overline{\gamma}_{k}\|$ for $k \geq 2$.
\end{proof}

\end{document}